\PassOptionsToPackage{table,xcdraw,dvipsnames}{xcolor}
 
\documentclass{article} %
 
\usepackage{iclr2027_conference,times}

\usepackage[utf8]{inputenc}
\usepackage[T1]{fontenc}
\usepackage[USenglish]{babel}

\usepackage{microtype}
\usepackage{xcolor}
\usepackage{graphicx}
\usepackage{wrapfig}
\usepackage{booktabs}
\usepackage{tabularx}
\usepackage{multirow}
\usepackage{caption}
\usepackage{subcaption}
\usepackage{enumitem}
\usepackage{csquotes}

\usepackage{amsmath,amsfonts,amssymb,amsthm}
\usepackage{mathtools}
\usepackage{bm}
\usepackage{bbm}
\usepackage{dsfont}
\usepackage[mathscr]{eucal}
\usepackage{nicefrac}
\usepackage{cancel}
\usepackage{centernot}

\usepackage{algorithmic}
\usepackage[linesnumbered,ruled]{algorithm2e}

\usepackage[toc,page]{appendix}
\usepackage{tcolorbox}
\usepackage{mdframed}
\usepackage{tikz}
\usepackage{circledsteps}

\usepackage{natbib}

\usepackage{todonotes}   %
\usepackage{hyperref}
\usepackage{url}
 
\hypersetup{
    colorlinks=true,
    linkcolor=blue,
    citecolor=green!80!black,
    filecolor=magenta,
    urlcolor=cyan
}
 
\usepackage[capitalize,noabbrev]{cleveref}

\usepackage{thmtools}

\RenewCommandCopy{\theHproposition}{\theproposition}
\RenewCommandCopy{\theHhyp}{\thehyp}
\RenewCommandCopy{\theHlem}{\thelem}
\RenewCommandCopy{\theHprop}{\theprop}
\RenewCommandCopy{\theHremark}{\theremark}
 
\crefname{proposition}{Proposition}{Propositions}
\Crefname{proposition}{Proposition}{Propositions}
 
\crefname{hyp}{Hypothesis}{Hypotheses}
\Crefname{hyp}{Hypothesis}{Hypotheses}
 
\crefname{lem}{Lemma}{Lemmas}
\Crefname{lem}{Lemma}{Lemmas}
 
\crefname{prop}{Proposition}{Propositions}
\Crefname{prop}{Proposition}{Propositions}
 
\crefname{remark}{Remark}{Remarks}
\Crefname{remark}{Remark}{Remarks}

\makeatletter
\def\@captype{table}
\makeatother

\newcommand{\bmf}[1]{\bm{\mathsf{#1}}}

\newcommand{\ve}{\bmf{e}}

\newcommand{\vh}{\bmf{h}}

\newcommand{\vq}{\bmf{q}}
\newcommand{\vw}{\bmf{w}}

\newcommand{\vz}{\bmf{z}}

\definecolor{cgpt}{HTML}{00E079}
\definecolor{cclaude}{HTML}{9B4400}

\newcommand{\dfv}{\texttt{DataInf}}

\newcommand{\hypi}{\texttt{HyperINF}}

\title{Learning Dynamics of Continual Learning: \\A Unified View of Data Attribution, Forgetting, and Plasticity Loss}

\author{
  \textbf{Yi Ren}$^{1,2}$ \quad
  \textbf{Wenlong Deng}$^{3}$ \quad
  \textbf{Guanzhe Hong}$^{2}$ \quad
  \textbf{Clare Lyle}$^{4,\dagger}$ \quad
  \textbf{Yarin Gal}$^{1,2}$ \\
  $^1$OATML \quad
  $^2$University of Oxford \quad
  $^3$UBC \quad
  $^4$Google DeepMind \\
}

\iclrfinalcopy %
\begingroup

\footnotetext{$^\dagger$ Participated only in an advisory capacity.}
\addtocounter{footnote}{-1}
\endgroup

\begin{document}

\maketitle

\begin{abstract}
Modern language models are likely to be updated throughout their lifetime rather than trained once and frozen.
Each update therefore participates in a recurring cycle: decide which experience to learn from, understand what that update changes, and remain capable of learning from what comes next.
We show that these challenges are governed by the same evolving update--behavior interaction.
We derive a token- and layer-wise decomposition of how learning from one token changes another prediction.
By separating the softmax force, shared readout geometry, and residual connections, it exposes two interaction channels and yields a forward-computable approximation.
Following this interaction through time reveals a unified picture of continual adaptation.
Positive interaction identifies useful experience; negative interaction produces either concentrated \emph{collision} or accumulated \emph{erosion}; over longer horizons, updates reshape the shared geometry mediating future learning signals, reducing their transmission.
These predictions lead to effective data selection, mechanism-specific controls for interference, and a readout-based diagnostic of future learnability whose degradation predicts the benefit of restoring the readout.
Across models and training regimes, the same local interaction thus explains both what an update changes now and how learning today changes what can be learned tomorrow.
This view connects data attribution, forgetting, and plasticity loss as distinct regimes of the same evolving learning dynamics.
\noindent\textbf{Project page:} \url{https://joshua-ren.github.io/learning-dynamics-cl/}
\end{abstract}

\section{Introduction}
\label{sec:intro}

Modern language models are unlikely to remain fixed after deployment. 
They will continue to accumulate experience from users, feedback, tool use, retrieved documents, domain-specific data, and model-generated trajectories \citep{ouyang2022training, sun2024learning, yin2025godel}. 
The challenge is therefore no longer only to train a capable model once, but to build a learner that can repeatedly turn new experience into useful updates over its lifetime.

Every such update raises three tightly coupled questions. 
\emph{Which experience should the model learn from?}
\emph{What existing behavior will change when it learns from that experience?}
And after many such updates, \emph{will the model remain able to learn effectively from what comes next?} 
These questions arise whenever experience is consolidated into model parameters, even if the surrounding system also uses external memory, in-context adaptation, or other faster learning mechanisms \citep{lewis2020retrieval, olsson2022context, behrouz2026nested}.

We argue that these requirements should not be studied as independent problems.
They are different views of the same evolving learning process.
At any time $t$, learning from an experience $u$ changes some existing behavior $o$.
We denote this local \emph{update--behavior interaction} by $\Delta_t(o,u)$.
Before an update, it indicates whether learning from \(u\) is likely to benefit a desired behavior \(o\). 
During adaptation, negative interactions identify which existing behaviors \(o\) are disrupted and which updates \(u\) are responsible for the interference.
Over longer horizons, the updates themselves reshape the model geometry that mediates future interactions, changing how effectively later learning signals propagate.
Continual learning therefore requires understanding not only what an update changes, but also how learning changes the learner itself. \cref{fig:system}(a) summarizes this view.

\begin{figure}[t]
    \vspace{-1em}
    \centering
    \includegraphics[width=0.9\linewidth]{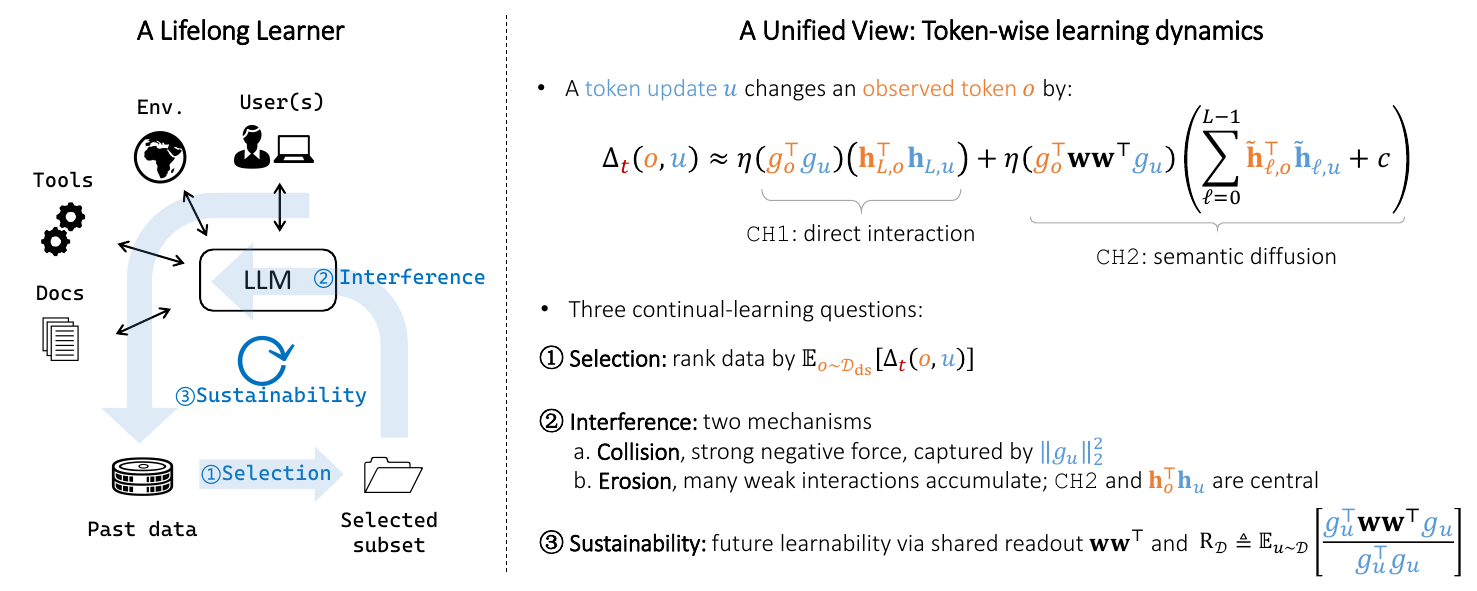}
    \vspace{-1em}
    \caption{A unified view of continual adaptation through the evolving update--behavior interaction underlying selection, interference, and future learnability.
    }
    \label{fig:system}
    \vspace{-1em}
\end{figure}

A natural starting point is the learning-dynamics framework of \citet{ren2025learning}, which studies how an update induced by one example changes the model's behavior on another.
However, a coarse example-level interaction is insufficient for continual adaptation: we need to know which output directions interact, how their learning signals propagate, and which parts of the interaction geometry evolve during training.
Exact evaluation also requires full-parameter Jacobians, making repeated measurement impractical at LLM scale.

We therefore develop a structured, token-level view of the same learning dynamics.
By explicitly separating the softmax learning signal, the shared readout, and the residual backbone, we expose two distinct pathways through which one token update can affect another behavior.
The first is a direct, token-aligned interaction in vocabulary space.
The second is a broader interaction mediated by the shared readout geometry and propagated through the residual stream.
Approximating the dominant residual path further turns this decomposition into a forward-computable quantity based only on logits, hidden states, and the shared readout.
This lets us probe the interaction repeatedly as the model adapts, rather than treating the interaction geometry as fixed over a training trajectory.

Following this interaction through the lifetime of the learner reveals several qualitatively different regimes, as illustrated in \cref{fig:system}.
\emph{Before learning}, positive interactions identify experience that is predicted to improve a desired behavior.
\emph{During learning}, negative interactions determine how existing behavior is disrupted.
Our decomposition reveals two distinct mechanisms: \emph{collision}, where a small number of strongly conflicting forces produce concentrated interference, and \emph{erosion}, where many individually weak interactions accumulate coherently over training.
\emph{After prolonged learning}, the interaction geometry itself changes.
In particular, the shared readout that transmits output-side learning signals into the backbone can become less responsive along directions required by future tasks, reducing the model's ability to adapt.

These regimes lead to distinct and testable predictions.
For selection, the interaction score directly ranks candidate experience by its predicted benefit, and improves both memory retrieval and downstream learning when the selected examples are actually used for adaptation.
For interference, the two mechanisms require different controls: filtering extreme-energy updates mitigates collision, whereas separating the response format of incoming supervision reduces erosion without preventing the new task from being learned.
For long-term adaptability, we derive a task-conditioned measure of readout transmission and find that it deteriorates together with future learnability during long-horizon training.
Restoring the readout partially recovers this lost plasticity, and the amount of transmission degradation predicts the benefit of the intervention.

Viewed through conventional terminology, the three stages in \cref{fig:system} connect to problems usually studied separately as \emph{data attribution}, \emph{forgetting}, and \emph{plasticity loss}, respectively
\citep{koh2017understanding,kirkpatrick2017overcoming,lyle2023understanding-82a}.
Our perspective places them within a single learning process: which update to make, what that update changes, and how accumulated updates affect the model's ability to learn from future experience.

\section{Structured Learning Dynamics of Continual Adaptation}
\label{sec:learning_dynamics}

\subsection{Token-Level Update--Behavior Interaction}
\label{sec:learning_dynamics:01}

Consider an LLM parameterized by $\theta$, with next-token distribution $\pi_\theta(\cdot\mid s)\in\mathbb{R}^V$.
Let $u=(s_u,y_u)$ and $o=(s_o,y_o)$ denote the updating and observing token--context pairs, respectively, where $s$ is the context and $y$ the target token.
We study how learning from $u$ changes the model's confidence in $o$:
\begin{equation} 
    \Delta_t(o, u)\triangleq \log \pi_{\theta_{t+1}}(y_o\mid s_o) - \log \pi_{\theta_{t}}(y_o\mid s_o), \quad
    \label{eq:delta_pi}
\end{equation}
For autoregressive sequences, these token-level changes add across output tokens,
and example-level interactions follow by aggregation over token pairs.
The token-wise view preserves localized interactions that sequence-level averaging can obscure.
Considering one gradient step on the token-level negative log-likelihood, $\theta_{t+1}-\theta_t = \eta\nabla_\theta\log\pi_{\theta_t}(y_u\mid s_u)$.
A first-order expansion gives
\begin{equation}
    \Delta_t(o,u) \approx \langle
        \nabla_\theta\log \pi_{\theta_t}(y_o\mid s_o), \theta_{t+1}-\theta_t
    \rangle
    =\eta\langle
        \nabla_\theta\log \pi_{\theta_t}(y_o\mid s_o), \nabla_\theta\log \pi_{\theta_t}(y_u\mid s_u)
    \rangle,
    \label{eq:def_confidence_change}
\end{equation}
This one-step gradient interaction is the basic object we study throughout the paper.
More general finetuning objectives can locally be expressed as weighted combinations
of token-level log-probability gradients, so the same model-dependent interaction
applies term-wise with objective-specific weights.

\paragraph{Exposing the token force.}
Following the learning-dynamics perspective of \citet{ren2025learning}, we now open the output side of this interaction by explicitly separating the final softmax:
\[
    s\xrightarrow{\text{LLM Blocks } \theta} \vz \xrightarrow{\sigma(\cdot)} \pi.
\]
For a target token $y$,
\[
    \nabla_{\vz} \log \pi_\theta(y\mid s)
    =
    \ve_y-\pi_\theta(\cdot\mid s)\triangleq g(y,s)\in\mathbb{R}^{V\times 1}
\]
where $g(y,s)\in\mathbb{R}^V$ is the signed output-side learning signal,
which we call the \emph{token force}.
Writing $g_u=g(y_u,s_u)$ and $g_o=g(y_o,s_o)$, the chain rule yields
\begin{equation}
    \Delta_t(o,u) \approx
    \eta
    g_o^\top \mathcal{K}_t(s_o,s_u) g_u;\quad \mathcal{K}_t(s_o,s_u) = \nabla_\theta \vz_o\ \nabla_\theta \vz_u^\top.
    \label{eq:def_confidence_change_2}
\end{equation}
This is the token-level specialization of the learning-dynamics interaction:
the updating force $g_u$ is propagated through the model-dependent kernel $\mathcal{K}_t$ and measured along the observing force $g_o$.
The kernel, however, still treats the model as a black box.
To expose how this interaction is realized inside a modern LLM, we next separate the shared readout from the residual backbone.

\begin{figure}[t]
    \centering
    \includegraphics[width=1\linewidth, trim=0 0 0 0, clip]{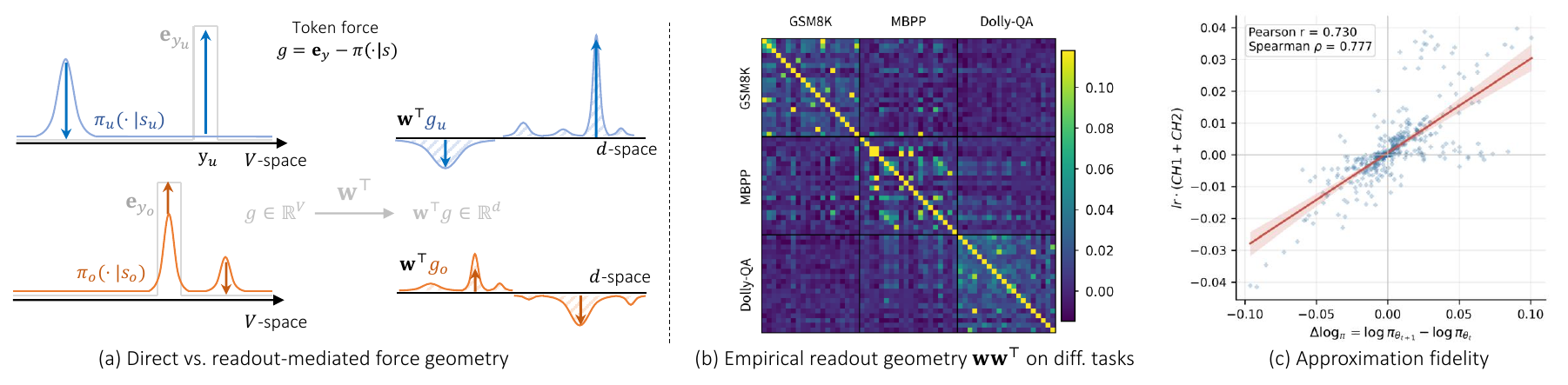}
    \caption{
            \textbf{Geometry and fidelity of the two-channel approximation.}
            (a) Direct token-force geometry in \texttt{CH1} versus readout-mediated geometry in \texttt{CH2}.
            (b) Empirical readout geometry $\vw\vw^\top$ across task-related tokens, showing substantial off-diagonal coupling, measured on Qwen2.5-1.5B.
            (c) The forward-computable approximation tracks the measured one-step change $\Delta_t(o,u)$.
    }
    \label{fig:gg_alignment}
\end{figure}

\subsection{Opening the Model: Readout and Residual Flow}
\label{sec:learning_dynamics:02}

\paragraph{Separating the readout from the backbone.}
The kernel in \cref{eq:def_confidence_change_2} still compresses the entire network into a single interaction operator.
We therefore further expose the model structure as
\[
    s\xrightarrow{f(s;\phi)} \vh \xrightarrow{\vw} \vz \xrightarrow{\sigma(\cdot)} \pi,
\]
where $\vh\in\mathbb{R}^d$ is the final hidden representation,
$\vw\in\mathbb{R}^{V\times d}$ is the shared linear readout, and $\theta=(\vw,\phi)$.
Separating the parameter gradients of the readout and backbone gives
\begin{equation}
    \Delta_t(o,u)
    \approx
    \eta
        \Big[
        \underbrace{
        (g_o^\top g_u)
        (\vh_o^\top \vh_u)
        }_{\text{readout contribution}} + 
        \underbrace{
        (\vw^\top g_o)^\top
        J_o J_u^\top
        (\vw^\top g_u)
        }_{\text{backbone contribution}}
        \Big]
\label{eq:a_kkk_g}
\end{equation}
where $J_o=\nabla_\phi \vh_o$ and $J_u=\nabla_\phi \vh_u$.
This decomposition is exact within the first-order approximation:
the first term arises from updating the shared readout $\vw$, whereas the second
captures how the projected forces $\vw^\top g_o$ and $\vw^\top g_u$ interact through
the backbone.

\begin{figure}[t]
    \centering
    \includegraphics[width=1\linewidth, trim=20 20 40 20, clip]{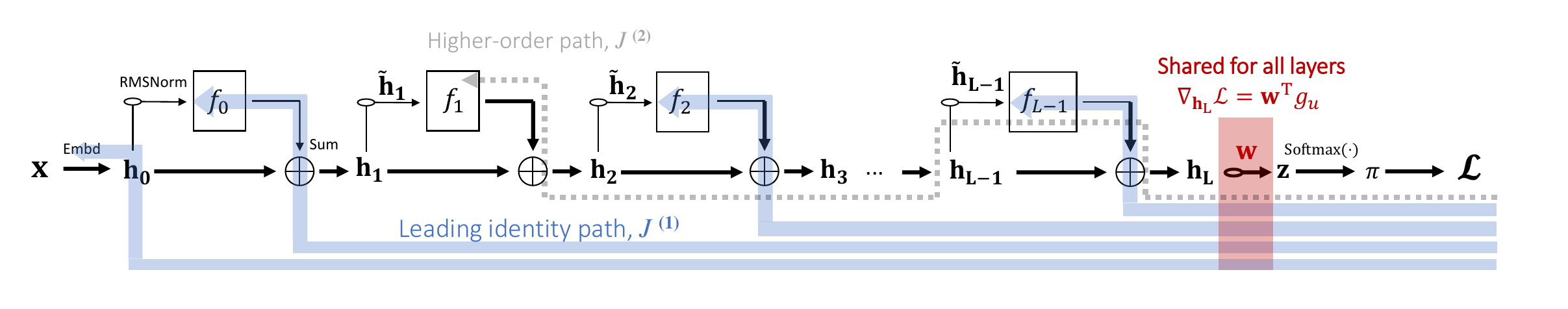}
    \caption{\textbf{Residual architecture and identity-path approximation.}
            All layers share the output-side signal $\vw^\top g$; we retain the direct residual-stream path $J_\ell^{(1)}$ and omit higher-order paths.
    }
    \vspace{-8pt}
    \label{fig:residual_network}
\end{figure}

The remaining obstacle is computational.
Evaluating $J_oJ_u^\top$ directly requires full-parameter Jacobians.
We instead exploit the residual structure, i.e., $\vh_{\ell+1} = \vh_{\ell} + f_\ell(\tilde{\vh}_{\ell};\phi_\ell)$ and $\tilde{\vh}_{\ell} = \mathsf{RMSNorm}(\vh_{\ell})$.
As illustrated in \cref{fig:residual_network}, the learning signal from the output
can reach an earlier block through the direct residual stream or through paths
that traverse one or more residual branches.
We retain the leading identity path through the residual stream and locally linearize each residual block.
Under these approximations, the contribution of each block reduces to the
similarity between its forward hidden representations.
In summary, we have the following proposition:
\begin{restatable}{proposition}{propscore}
    \label{prop:score}
    Under the identity-path and linear-block approximations above,
    \begin{equation}
        \Delta_t(o,u)\approx 
        \underbrace{\eta(g_o^\top g_u)(\tilde{\vh}_{L,o}^\top\tilde{\vh}_{L,u})}_\texttt{CH1} +
        \underbrace{\eta(\vw^\top g_o)^\top(\vw^\top g_u)
                        \left(\sum_{\ell=0}^{L-1}\tilde{\vh}_{\ell,o}^\top\tilde{\vh}_{\ell,u} +
                        \kappa_\text{embd}(s_o,s_u)
                        \right)}_\texttt{CH2},
    \label{eq:score}
    \end{equation}
    where $\kappa_\text{embd}(s_o,s_u)=\sum_{i,j}\mathbf{1}[s_{o,i}=s_{u,j}]$ denotes token overlap between the contexts $s_o$ and $s_u$.
\end{restatable}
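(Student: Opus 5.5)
The starting point is the exact first-order decomposition \cref{eq:a_kkk_g}. We treat its two terms separately and then regroup them into \texttt{CH1} and \texttt{CH2}. The readout contribution $(g_o^\top g_u)(\vh_o^\top\vh_u)$ already has the \texttt{CH1} form. The only adjustment is to read $\vh$ as the normalized final representation $\tilde{\vh}_L$ that actually feeds $\vw$. In Qwen/Llama-style models a final RMSNorm sits before the readout, so $\vz=\vw\,\tilde{\vh}_L$. Then $\nabla_{\vw}\vz_o\,\nabla_{\vw}\vz_u^\top=(\tilde{\vh}_{L,o}^\top\tilde{\vh}_{L,u})I_V$, which gives \texttt{CH1} directly. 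All remaining work goes into the backbone term $(\vw^\top g_o)^\top J_oJ_u^\top(\vw^\top g_u)$, i.e., into approximating the backbone NTK $J_oJ_u^\top=\sum_{\ell}\nabla_{\phi_\ell}\vh_o\,\nabla_{\phi_\ell}\vh_u^\top$ block by block. The embedding table is treated as the block $\ell=-1$.

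For a residual block $\ell$, the chain rule gives $\nabla_{\phi_\ell}\vh_L=\big(\prod_{k>\ell}(I+\partial f_k/\partial\vh_k)\big)\,\nabla_{\phi_\ell}f_\ell$. The identity-path approximation keeps only the $I$ term in the product. The backpropagated signal reaching block $\ell$ is then exactly the output-side vector $\vw^\top g$, shared by all layers as in \cref{fig:residual_network}. This is where the common factor $(\vw^\top g_o)^\top(\vw^\top g_u)$ in \texttt{CH2} comes from.

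Next, linearize $f_\ell$ as $f_\ell(\tilde{\vh}_\ell;\phi_\ell)\approx W_\ell\tilde{\vh}_\ell$. For a linear map, $\nabla_{W_\ell}(a^\top W_\ell x)=a x^\top$. The per-block kernel contraction is therefore $\langle a_o\tilde{\vh}_{\ell,o}^\top,\ a_u\tilde{\vh}_{\ell,u}^\top\rangle_F=(a_o^\top a_u)(\tilde{\vh}_{\ell,o}^\top\tilde{\vh}_{\ell,u})$ with $a=\vw^\top g$. This is the same outer-product factorization that produced the readout term. Summing over $\ell=0,\dots,L-1$ gives the layer sum.

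For the embedding block, $\vh_0=E^\top$ applied to one-hot token vectors. Its gradient with respect to $E$ is a sum of rank-one terms $\ve_{s_i}a^\top$. The Frobenius inner product between $o$ and $u$ then yields $\sum_{i,j}\mathbf 1[s_{o,i}=s_{u,j}]\,a_o^\top a_u$, which is $\kappa_{\text{embd}}$ times the same force factor. Here we treat all context positions as receiving the same identity-path signal, consistent with the previous step.

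The main obstacle is justifying the linear-block step for real blocks. Attention and MLP branches are nonlinear. Each has several weight matrices, and their inputs include other positions. An exact treatment would give a block-specific NTK of the form $a_o^\top K_\ell a_u$, not a plain inner product of $\tilde{\vh}_\ell$. I would handle this by stating the approximation explicitly: each branch is replaced by an effective linear map acting on its normalized input at the current position. This absorbs per-layer constants (e.g., RMSNorm gains, number of matrices) into $\eta$, or regards them as uniform across layers. Cross-position attention contributions and all products of Jacobians $\partial f_k/\partial\vh_k$ are deferred to the omitted higher-order paths. Under these stated approximations the identities above are exact, and adding the two channels recovers \cref{eq:score}.
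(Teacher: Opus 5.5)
Your proposal follows the paper's derivation essentially step for step: the readout/backbone parameter split, the per-block eNTK partition with the embedding as an extra block, the identity path $\nabla_{\vh_{\ell+1}}\vh_L\approx I$, the linear-block surrogate giving $(\tilde{\vh}_{\ell,o}^\top\tilde{\vh}_{\ell,u})I$, and $\kappa_{\mathrm{embd}}$ stated explicitly as an assumed context-level extension; your identification of the readout input with $\tilde{\vh}_L$ also usefully explains why \texttt{CH1} carries normalized states, a point the paper glosses over. The one slip is your claim that the signal reaching each block is then \emph{exactly} $\vw^\top g$: with the final RMSNorm in front of $\vw$ it is $J_{\mathrm{RMS}}(\vh_L)^\top\vw^\top g$, whose example-dependent $1/\mathrm{rms}(\vh_L)$ scale and projection cannot be absorbed into $\eta$, so you need the extra step of dropping the final-norm Jacobian, which the paper states explicitly by formulating the backbone eNTK with respect to the pre-normalization state $\vh_L$.
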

It replaces the backbone Jacobians with forward-pass quantities;
full derivation in Appendix~\ref{app:proofs:delta}.

\textbf{Direct and diffuse interaction channels.}
The two terms expose qualitatively different interaction geometries, as demonstrated in \cref{fig:gg_alignment}(a).
In \texttt{CH1}, the output-side interaction is the direct force alignment $g_o^\top g_u$.
Because token forces are typically concentrated on a small number of vocabulary directions, \texttt{CH1} is dominated by relatively sparse, token-aligned interactions.
By contrast, \texttt{CH2} couples the forces through the shared readout geometry, $g_o^\top \vw\vw^\top g_u$.
The off-diagonal structure of $\vw\vw^\top$ allows different vocabulary directions to interact, so \texttt{CH2} can transmit influence even when direct force overlap is weak.
Thus, \texttt{CH1} captures concentrated direct alignment, whereas \texttt{CH2} provides a broader, more diffuse interaction channel.
Hidden-state similarity modulates the strength of both channels according to context.

\textbf{The shared readout broadens and transmits the interaction.}
Moreover, in \texttt{CH2}, both forces are projected through the same readout matrix $\vw$, giving $g_o^\top \vw\vw^\top g_u$.
Owing to the residual structure, all gradient paths to the residual blocks share the same output-side projection,
$\nabla_{\vh_L}\log \pi(y\mid s)=\vw^\top g$.
Hence, \(\|\vw^\top g\|_2^2 = g^\top \vw\vw^\top g\) measures how strongly an output-side learning signal is transmitted into the hidden space.
Because the readout is shared across residual blocks and evolves during training, the same geometry that broadens local interactions also governs how effectively future learning signals reach the parameters of each layer.

\paragraph{Approximation fidelity.} These structure-derived properties will recur throughout the following sections.
Many of our analyses depend primarily on the structural form of the interaction, rather than on exact numerical reconstruction of every local update: the direct and readout-mediated pathways arise from standard shared-readout and residual architectures, and their structural origin is preserved under common optimizer variants.
We examine approximation fidelity, cold-start behavior, and extensions beyond the canonical setting in Appendix~\ref{app:diagnose}.

\section{Selecting Beneficial Updates}
\label{sec:attribution}

The first question in continual adaptation is which experience should enter the next update.
Our interaction provides a direct criterion: an experience $u$ is useful when learning from it is predicted to improve the desired behavior represented by $o$.
Given a candidate pool $\mathcal D_{\mathrm{pool}}$ and a small probe set $\mathcal D_{\mathrm{ds}}$ representing the target behavior, we aggregate the token-level interactions in \cref{eq:score} as
\begin{equation}
    S_c(u;\mathcal D_{\mathrm{ds}})
    =
    \mathbb E_{o\sim\mathcal D_{\mathrm{ds}}}
    \left[
    \sum_{i,j} \texttt{CH}_c(o_i,u_j)
    \right],
    \quad
    c\in\{1,2\},
    \quad
    u\sim\mathcal{D}_\mathrm{pool}
    \label{eq:selection_score}
\end{equation}
and $S_{\texttt{CH1+2}}=S_{\texttt{CH1}}+S_{\texttt{CH2}}$.
Examples with higher score should be selected.
Unlike gradient-based attribution methods, these scores require only the forward quantities.\footnote{Forward-only at a fixed checkpoint; reliable signed use may require the brief warm-up in Appendix~\ref{app:diagnose}}

\paragraph{When is the direct channel sufficient?}
We first test whether positive interaction identifies related examples in controlled attribution benchmarks following \citet{deng2026value}.
Examples from the same task class are treated as mutually relevant, and candidate examples are ranked for each target using AUC and top-$K$ recall.
These settings contain substantial task and output-token overlap, so our analysis predicts that the direct force alignment in \texttt{CH1} should already be highly informative.
Table~\ref{tab:selection_main} confirms this: \texttt{CH1} nearly saturates both sentence-transformation and mathematical attribution, while adding \texttt{CH2} provides little additional benefit.

We next weaken this direct alignment while preserving semantic correspondence.
For each English GSM8K or MMLU example, we construct semantically equivalent candidates in Chinese, French, Korean, and Spanish and ask the score to recover the corresponding examples across languages.
Direct prediction-vocabulary overlap is substantially reduced in this setting.
Here the additional readout-mediated coupling becomes useful: on Qwen2.5-1.5B, adding \texttt{CH2} improves retrieval accuracy from $0.488$ to $0.625$ on MMLU and from $0.445$ to $0.600$ on GSM8K.
Together, the controlled and cross-lingual settings support the geometric distinction from \cref{sec:learning_dynamics}: direct force alignment can be sufficient when output-space overlap is strong, while the more diffuse readout-mediated channel contributes when that overlap weakens.

\paragraph{From predicted influence to useful experience.}
The same scores remain useful when moved beyond attribution labels.
In a multilingual agent-memory setting, an English query retrieves one experience from a shared multilingual memory before answering.
On Llama3.2-3B, incorporating \texttt{CH2} raises top-1 retrieval accuracy from $0.861$ to $1.000$, yielding $0.944$ downstream answer accuracy.

More importantly, predicted positive influence also translates into better learning when the retrieved experience is used for parameter updates.
We rank GSM8K training examples before fine-tuning and train on only the selected subset.
At a $5\%$ budget, \texttt{CH1+CH2} reaches $0.636$ accuracy on Qwen2.5-1.5B and $0.313$ on Llama3.2-3B, compared with $0.610/0.295$ for random selection and $0.587/0.301$ for \texttt{LESS} \citep{xia2024less}.
Thus, the local interaction is useful not only for identifying related examples, but also for deciding which experience should actually be learned.
Full benchmark results, uncertainty estimates, and experimental details are provided in Appendix~\ref{app:attribution}.

\begin{table*}[h]
  \centering
  \vspace{-5pt}
    \caption{
    \textbf{Forward-computable interactions identify useful experience.}
    On controlled tasks with strong direct alignment, \texttt{CH1} nearly saturates attribution. When alignment is weakened cross-lingually, the readout-mediated \texttt{CH2} provides substantial complementary signal, with \texttt{CH1+2} markedly outperforming gradient-based baselines. Both scores use only forward-pass quantities at a fixed checkpoint.
    }
  \label{tab:selection_main}

  \resizebox{0.85\textwidth}{!}{
\begin{tabular}{c cc cc cc cc}
\toprule
\textbf{Method}
& \multicolumn{2}{c}{\textbf{Sentence Transform.}}
& \multicolumn{2}{c}{\textbf{Controlled Math}}
& \multicolumn{2}{c}{\textbf{Cross-lingual Retrieval}}
& \multicolumn{2}{c}{\textbf{Efficiency}} \\
\cmidrule(lr){2-3}
\cmidrule(lr){4-5}
\cmidrule(lr){6-7}
\cmidrule(lr){8-9}
(Qwen2.5-1.5B-Inst.)

& \textbf{AUC $\uparrow$}
& \textbf{Recall $\uparrow$}
& \textbf{AUC $\uparrow$}
& \textbf{Recall $\uparrow$}
& \textbf{MMLU $\uparrow$}
& \textbf{GSM8K $\uparrow$}
& \textbf{Bwd.}
& \textbf{Complexity} \\
\midrule

\texttt{Random}
& 0.500 & 0.100
& 0.500 & 0.100
& 0.025 & 0.020
& $\times$ & $\mathcal{O}(n)$ \\

\texttt{Embd}
& 0.546 & 0.148
& 0.555 & 0.146
& 0.125 & 0.085
& $\times$ & $\mathcal{O}(nd)$ \\

\texttt{DataInf}
& 0.981 & 0.826
& 0.985 & 0.878
& 0.419 & 0.070
& \checkmark & $\mathcal{O}(nd_{\mathrm{in}}L)$ \\

\texttt{HyperINF}
& 0.993 & 0.934
& 0.986 & 0.942
& 0.694 & 0.370
& \checkmark & $\mathcal{O}(nd^3L)$ \\

\texttt{LESS}
& 0.785 & 0.370
& 0.835 & 0.592
& 0.469 & 0.140
& \checkmark & $\mathcal{O}(np_{\mathrm{proj}})$ \\

\midrule

\texttt{CH1}
& 1.000 & 0.989
& 1.000 & 0.998
& 0.488 & 0.445
& $\times$ & $\mathcal{O}(nd)$ \\

\texttt{CH1+2}
& 0.998 & 0.963
& 1.000 & 0.999
& 0.625 & 0.600
& $\times$ & $\mathcal{O}(ndL)$ \\

\bottomrule
\end{tabular}
  }
\end{table*}

\section{Negative Interactions: Collision and Erosion}
\label{sec:forgetting}

\begin{figure}[t]
    \centering
    \includegraphics[width=0.95\linewidth]{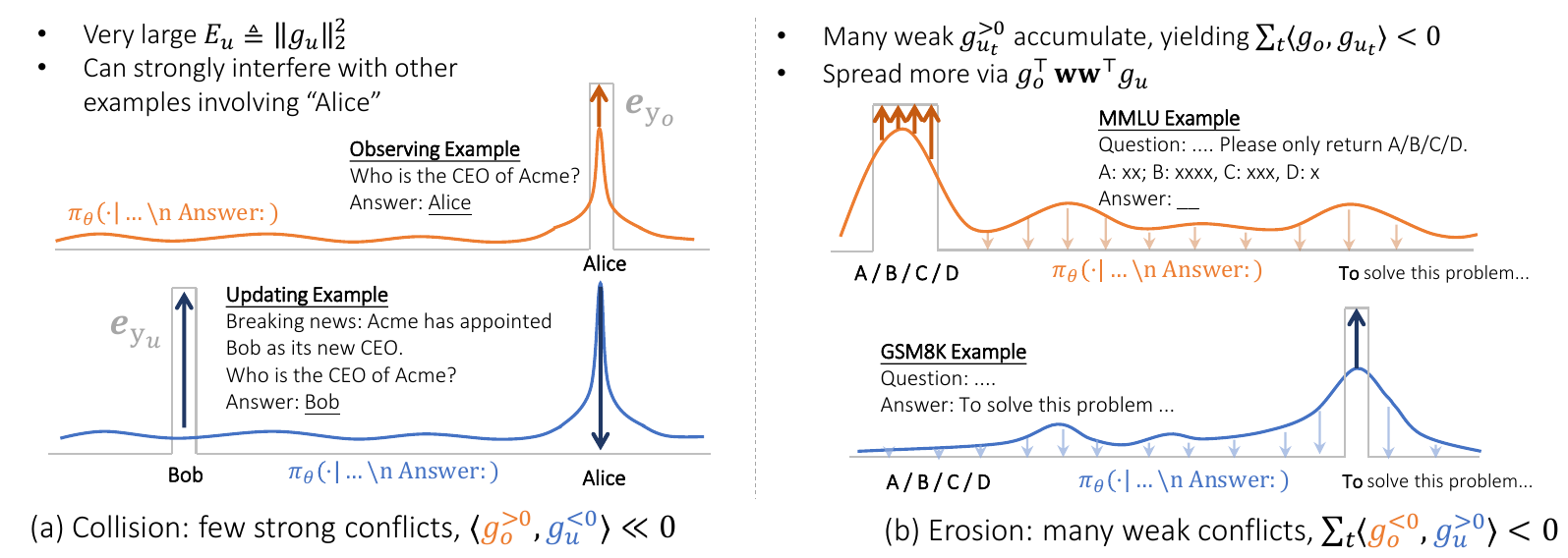}
    \vspace{-5pt}
    \caption{
    \textbf{Collision and erosion as two mechanisms of forgetting.}
    (a) Collision arises from a few strong negative force interactions, often with high-energy updates.
    (b) Erosion arises from many weak conflicts that accumulate and push behavior toward alternative continuations.
    }
    \label{fig:collision_erosion}
\end{figure}

The same interaction can also become negative: when $\Delta_t(o,u)<0$, learning from $u$
decreases confidence in an existing behavior $o$, corresponding to the regime conventionally
studied as catastrophic forgetting \citep{kirkpatrick2017overcoming,li2024revisiting}.
Equation~\ref{eq:score} reveals two qualitatively different regimes.
\emph{Collision} is concentrated: one or a few updates exert strong negative forces on existing
token directions.
\emph{Erosion} is distributed: many weak interactions accumulate coherently and gradually shift
behavior (\cref{fig:collision_erosion}), suggesting different controls.

\paragraph{Collision: Concentrated Negative Forces.} Abrupt forms of catastrophic forgetting can be viewed as a rapid loss of confidence in existing behaviors after new learning.
In our interaction view, this corresponds locally to one or a few updates producing a large negative $\Delta_t(o,u)$, causing the model's confidence in an existing prediction to drop sharply.
Through \cref{eq:score}, we hypothesize that the force alignment $g_o^\top g_u$ plays a central role: hidden-state similarities are typically positive\footnote{This phenomenon is often referred to as the ``narrow cone'' effect \citep{ethayarajh2019contextual,gao2018representation}. We verify the same pattern across settings in Appendix~\ref{app:diagnose:hh}.}, while the strong diagonal structure of $\vw\vw^\top$ causes the largest \texttt{CH2} interactions to often align with the same direct force conflicts, as illustrated in \cref{fig:collision_erosion}(a).

We refer to this regime of concentrated negative force interaction as \emph{collision}.
Because collision is driven by a strong mismatch between the model's current prediction and the incoming supervision, it can be detected directly from the update force.
\begin{restatable}{proposition}{propnegative}
    \label{prop:g_negative}
    \textbf{Extreme energy implies a large negative force, and vice versa.}
    Define
    \[
    E_u \triangleq \|g_u\|_2^2 = \|\ve_{y_u}-\pi_\theta(\cdot\mid s_u)\|_2^2 = 1-2\pi_\theta(y_u\mid s_u)+\|\pi_\theta(\cdot\mid s_u)\|_2^2
    \]
    as token energy. Under one-hot supervision, for any $\delta>0$, if $E_u>1+\delta$, there exists a non-target token $j\neq y_u$ such that $[g_u]_j<-\delta$.
    Conversely, if $[g_u]_j<-\delta$ for some $j\neq y_u$, then $E_u>2\delta^2$.
\end{restatable}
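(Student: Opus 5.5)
The plan is to write the token force coordinatewise and bound $E_u$ directly by the largest non-target probability. Let $p\triangleq\pi_\theta(y_u\mid s_u)$ and $\pi_j\triangleq\pi_\theta(j\mid s_u)$. Under one-hot supervision, $[g_u]_{y_u}=1-p$ and $[g_u]_j=-\pi_j$ for every $j\neq y_u$. This gives
\[
E_u=(1-p)^2+\sum_{j\neq y_u}\pi_j^2 .
\]
The identity stated in the proposition follows by expanding this expression. Both directions then reduce to elementary inequalities on the probability simplex, using only $\sum_{j\neq y_u}\pi_j=1-p$ and $\pi_j\ge 0$.

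\emph{Forward direction.} Set $m\triangleq\max_{j\neq y_u}\pi_j$. The first step is to bound the non-target sum by a max-times-sum estimate:
\[
\sum_{j\neq y_u}\pi_j^2\le m\sum_{j\neq y_u}\pi_j=m(1-p).
\]
Since $0\le 1-p\le 1$, we get $(1-p)^2\le 1$ and $m(1-p)\le m$, hence $E_u\le 1+m$. If $E_u>1+\delta$, this forces $m>\delta$. Taking $j$ to be a maximizer then gives $[g_u]_j=-m<-\delta$, as claimed.

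\emph{Converse.} Suppose $[g_u]_j<-\delta$ for some $j\neq y_u$, i.e.\ $\pi_j>\delta$. We drop all non-target squares except the $j$-th. Because $1-p=\sum_{k\neq y_u}\pi_k\ge\pi_j$, we have $(1-p)^2\ge\pi_j^2$. Therefore
\[
E_u\ge(1-p)^2+\pi_j^2\ge 2\pi_j^2>2\delta^2 .
\]

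I do not expect a real obstacle here. The only step needing care is the forward bound $E_u\le 1+m$: it must be stated with the crude estimates $(1-p)^2\le1$ and $m(1-p)\le m$, rather than trying to optimize over $p$. The strict inequalities also need to carry through correctly, with the strictness coming from the hypotheses $E_u>1+\delta$ and $\pi_j>\delta$ respectively.
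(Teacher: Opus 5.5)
Your proof is correct and follows the paper's argument essentially step for step. The paper also writes $E_u=q_u^2+\sum_{j\neq y_u}\pi_\theta(j\mid s_u)^2$ with $q_u=1-\pi_\theta(y_u\mid s_u)$, bounds the non-target sum by $m_u q_u$ to get $E_u\le 1+m_u$, and for the converse uses $q_u\ge m_u$ to get $E_u\ge 2m_u^2$; the only cosmetic differences are that the paper also records the intermediate bound $m_u\ge E_u/q_u-q_u$ (which needs $q_u>0$), and that you apply the converse to the given $j$ rather than to the maximizer.
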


Thus, extreme update energy is not merely a norm appearing in the Cauchy--Schwarz bound
$|g_o^\top g_u|\leq\|g_o\|_2\|g_u\|_2$.
It certifies the presence of a strong negative vocabulary-space direction that can collide with existing high-confidence behavior.
Moreover, the expansion of $E_u$ clarifies when such risky updates arise.
The term $1-2\pi_\theta(y_u\mid s_u)$ increases when the supervised target is unlikely, while
$\|\pi_\theta(\cdot\mid s_u)\|_2^2$ increases as the predictive distribution becomes more concentrated.
Indeed, $-\log \|\pi_\theta(\cdot\mid s_u)\|_2^2$ is the order-2 R\'enyi, or collision, entropy \citep{renyi1961measures}.
Proof in Appendix~\ref{app:proofs:negative}.

\begin{figure}[t]
    \centering
    \includegraphics[width=1\linewidth]{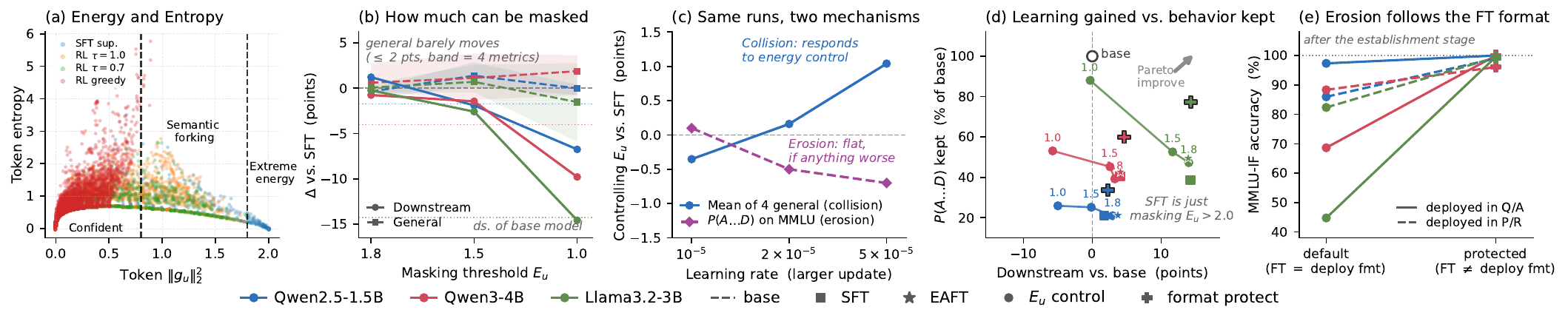}
    \vspace{-15pt}
        \caption{\textbf{Collision and erosion respond to different controls.}
        (a) Extreme-energy updates mark confident conflicts.
        (b--c) Energy masking mitigates collision but has little effect on erosion.
        (d) For erosion, energy control improves retention mainly by sacrificing downstream learning; format separation shifts this trade-off.
        (e) Erosion follows the fine-tuning format, so separating fine-tuning from deployment redirects interference away from the deployed behavior.
        More in Appendix~\ref{app:forgetting}.}
    \label{fig:results_collision_erosion}
    \vspace{-15pt}
\end{figure}

\paragraph{Erosion: Weak Interactions Accumulate.}
Figure~\ref{fig:collision_erosion} contrasts a second regime, \emph{erosion}, in which many
individually weak negative interactions accumulate coherently over training.
Unlike collision, erosion need not involve any extreme-energy update: each individual
$g_o^\top g_u$ may be small, while repeated updates consistently push probability mass toward
competing continuations.
The semantically diffuse \texttt{CH2} further spreads this interaction, so a per-token magnitude
criterion such as $E_u$ need not identify the responsible updates.

The GSM8K example in \cref{fig:collision_erosion}(b) illustrates the mechanism.
For an MMLU prompt that should be answered with \texttt{A/B/C/D}, the observing force assigns small negative components to alternative continuations such as \texttt{To solve this ...}.
GSM8K supervision repeatedly reinforces such reasoning-style continuations, giving them positive update forces.
Each interaction $\langle g_o^{<0},g_u^{>0}\rangle$ can therefore be weak, yet their repeated alignment gradually shifts probability mass away from the desired format.
This distinction suggests different controls: collision should respond to the magnitude of individual updates, whereas erosion depends on how many weak interactions are repeatedly routed toward the same competing behavior.
We next test these contrasting predictions experimentally.

\textit{Fig.~\ref{fig:results_collision_erosion}(a): Why does off-policy SFT forget more?}
On-policy RL forgets less than off-policy SFT.
Our framework attributes this to extreme-energy updates: common under off-policy SFT but rare in on-policy rollouts ($\tau$: sampling temperature; greedy: deterministic).
This motivates energy control; Entropy-Adaptive Fine-Tuning (EAFT) \citep{diao2026entropy} is one example, with other on-policy methods discussed in Appendix~\ref{app:forgetting:gkd_opd}.
We next test whether such control prevents collision.

\textit{Fig.~\ref{fig:results_collision_erosion}(b): How much high-energy masking is safe?}
Energy control is effective only in the extreme-energy regime, since learning itself also requires energy.
Masking updates with $E_u>1.8$ or $1.5$ removes this tail at little downstream cost, changing GSM8K by at most $2.6$ points across models.
In contrast, a threshold of $1.0$ sharply degrades downstream learning, sometimes nearly returning the model to pre-fine-tuning performance.
As Fig.~\ref{fig:results_collision_erosion}(a) shows, $E_u=1.0$ already masks many ordinary semantic tokens (including many semantic-forking tokens), rather than only extreme conflicts.

\textit{Fig.~\ref{fig:results_collision_erosion}(c): Energy control separates collision from erosion.}
Standard MMLU accuracy can mask erosion, so we track
$P(A\ldots D)\triangleq\sum_{c\in\{A,B,C,D\}}\pi_\theta(c\mid s)$,
the total answer-token probability mass.
As the learning rate increases, $E_u$ control increasingly improves collision-side retention
(the mean of four general-capability metrics; Appendix~\ref{app:forgetting:gkd_opd}),
while $P(A\ldots D)$ remains nearly unchanged.
Thus, controlling update magnitude addresses collision but leaves erosion largely intact.

Our decomposition suggests a different control for erosion.
Because erosion accumulates through repeated force alignment and hidden-state similarity in Equation~\eqref{eq:score}, changing the response format can perturb both pathways that repeatedly route interference toward the same behavior.

\textit{Fig.~\ref{fig:results_collision_erosion}(d): The two controls exhibit different gain--retention trade-offs.}
Energy masking recovers answer-position behavior mainly by moving back toward the base model and sacrificing downstream learning.
Response-format protection instead preserves substantially more instruction-following behavior while retaining downstream gains, consistent with our analysis.

\textit{Fig.~\ref{fig:results_collision_erosion}(e): Erosion follows the fine-tuning format.}
To understand how format separation achieves this retention gain, we start from the \texttt{base} model to avoid inherited response-format preferences.
We first establish MMLU behavior in one format, then fine-tune GSM8K using either the same or an alternative format.
With matched formats, instruction following (IF) degrades substantially; when the formats are separated, deployed behavior is largely preserved while erosion shifts toward the fine-tuning format.
Swapping the formats reverses the pattern.
GSM8K is learned in all settings, showing that format separation redirects interference rather than suppressing downstream learning.

\textit{In summary,} collision stems from a few high-energy conflicts and responds to energy control, whereas erosion accumulates through many weak interactions and responds to changing alignment.

\section{Evolving Geometry and Future Learnability}
\label{sec:plasticity}

\begin{figure}[t]
    \centering
    \includegraphics[width=1\linewidth]{figures/plasticity_loss_theory.png}
    \caption{\textbf{Readout transmission governs future learnability.}
            (a--b) The shared readout gates task forces into the hidden space and can form a self-reinforcing bottleneck.
            (c) The corresponding score $R_{\mathcal D}$ declines during long-horizon training for OOD tasks.}
    \label{fig:plastic_theory}
    \vspace{-5pt}
\end{figure}

So far, we have asked how a new update interacts with existing behavior under the model's current geometry.
A lifelong learner faces a further question: after many such updates, will future learning remain equally effective?
This failure mode is commonly studied as \emph{plasticity loss} \citep{lyle2023understanding-82a}.
Our interaction view suggests that the problem is not separate from the analysis above.
The same geometry that determines how an update changes current behavior is itself modified by learning, and therefore determines how strongly future updates can act.

To see this connection, consider the self-interaction $o=u$.
Equation~\ref{eq:score} then measures how effectively a token's own learning signal changes its prediction.
In particular, the backbone-mediated channel contains
$g_u^\top \vw\vw^\top g_u=\|\vw^\top g_u\|_2^2$.
Importantly, $\vw^\top g_u$ is the output-side learning signal.
Every backbone layer receives gradients through it.
The shared readout therefore forms a bottleneck between token-space forces and backbone learning.
If training reshapes $\vw\vw^\top$ to attenuate future-task directions, the backbone receives weaker signals even when $g_u$ remains strong.

\paragraph{Readout transmission as signal gain.}
This suggests viewing plasticity as a signal-transmission problem:
$g_u$ is the incoming signal and $\vw^\top g_u$ the signal reaching the backbone.
The relevant question is not the input magnitude, but how much of it is transmitted.
To isolate this gain from the raw force magnitude, we define the task-conditioned readout transmission
\begin{equation}
    R_\mathcal{D}\triangleq \mathbb{E}_{u\sim\mathcal{D}}\left[
    \frac{\|\vw^\top g_u\|_2^2}{\|g_u\|_2^2}
    \right]
    =\mathbb{E}_{u\sim\mathcal{D}}\left[
    \frac{g_u^\top\vw\vw^\top g_u}{g_u^\top g_u}.
    \right]
    \label{eq:R}
\end{equation}
This Rayleigh quotient of $\vw\vw^\top$ along task-induced force directions measures directional signal gain:
the numerator is the transmitted strength, while the denominator normalizes the input \citep{oppenheim1996signals}.
Thus, $R_{\mathcal D}$ measures how effectively signals from task $\mathcal D$ enter the backbone.

\paragraph{Readout transmission evolves with learning.}
Crucially, $R_{\mathcal D}$ is itself dynamic because the shared readout is updated during training.
Different tasks induce forces along different vocabulary directions and therefore probe different regions of $\vw\vw^\top$ (Fig.~\ref{fig:plastic_theory}a).
Training on the current task preferentially reinforces its supported directions, while transmission along weakly supported future-task directions can deteriorate.
The same output-level force can then induce a weaker update to the backbone.

This degradation can further amplify itself (Fig.~\ref{fig:plastic_theory}b).
As $R_{\mathcal D}$ decreases for a future task, less of its learning signal reaches the backbone, so subsequent adaptation can rely more heavily on the readout itself.
These updates may further concentrate the readout toward currently supported directions, forming a self-reinforcing bottleneck.
We view this loop as a possible amplification mechanism rather than a necessary condition for plasticity loss, and formally prove it in Appendix~\ref{app:proofs:plasticity}.

\paragraph{Experimental verification.}
We test these predictions along a long-horizon PubMed training trajectory.
At each checkpoint, we measure $R_{\mathcal D}$ on three held-out OOD tasks
(GSM8K, MBPP, and Dolly-QA) and in-distribution PubMed.
We predict $R_{\mathcal D}$ to decay on OOD tasks but remain comparatively stable in-distribution;
if this degeneration causes plasticity loss, lower $R_{\mathcal D}$ should also imply slower adaptation.
Broader results across models and training settings are in Appendix~\ref{app:exp_plasticity}.

\textit{Fig.~\ref{fig:plastic_theory}(c): Future-task transmission declines during long-horizon training.}
Over approximately 100M PubMed tokens (results on 300M hybrid tokens in Appendix~\ref{app:exp_plasticity}), \(R_{\mathcal D}\) progressively decreases for held-out OOD tasks, while the in-distribution task shows no comparable systematic decay. This suggests that the effect is not uniform readout shrinkage, but task-dependent reshaping toward currently supported directions. Sequential SFT results in Appendix~\ref{app:exp_plasticity} further support this interpretation.

\begin{figure}[t]
    \centering
    \includegraphics[width=1\linewidth]{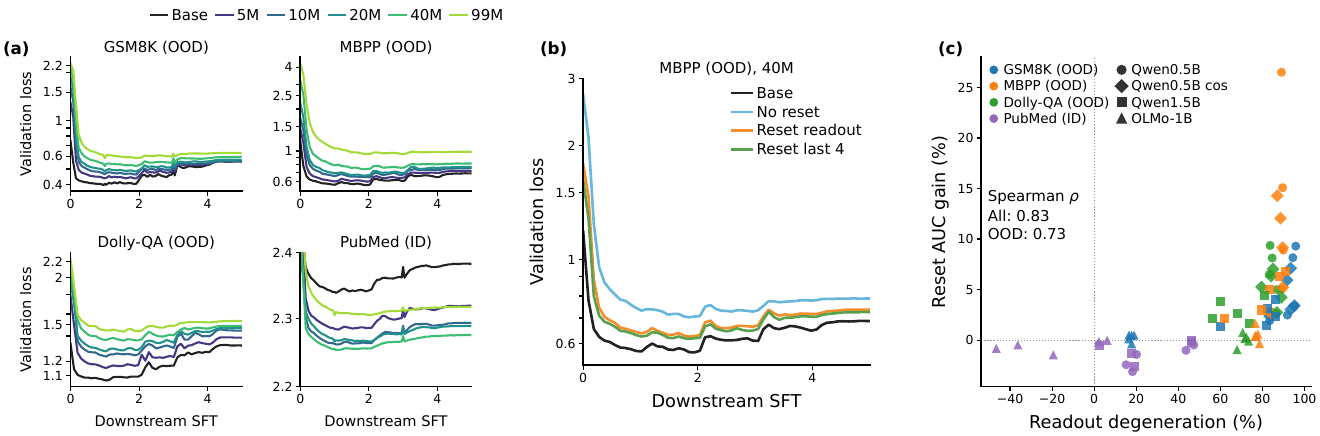}
            \caption{\textbf{Readout degeneration predicts future learnability.}
            (a) Later checkpoints adapt more slowly on OOD tasks but not ID PubMed.
            (b) Readout restoration partially recovers downstream learning.
            (c) More degeneration predicts larger reset gains across models, tasks, and checkpoints.}
    \label{fig:plasticity_results}
    \vspace{-5pt}
\end{figure}

\textit{Fig.~\ref{fig:plasticity_results}(a): Transmission loss predicts plasticity loss.}
We next ask whether declining $R_{\mathcal D}$ corresponds to reduced future learnability.
As predicted, later checkpoints adapt progressively more slowly on three OOD tasks, while in-distribution PubMed shows no comparable degradation.
Thus, the task-dependent decline in $R_{\mathcal D}$ is mirrored by task-dependent loss of future learnability.

\textit{Fig.~\ref{fig:plasticity_results}(b): Restoring the readout recovers plasticity.}
Because the decline in $R_{\mathcal D}$ localizes degeneration to the shared readout, we restore it to its base value before downstream adaptation, echoing reset-based plasticity interventions in classical RL \citep{lyle2023understanding-82a}.
Readout reset substantially improves validation loss, while resetting the last four layers provides only modest additional recovery.
This turns the diagnostic into a functional test of the proposed bottleneck.
Thus, the readout is an important, though not exclusive, source of lost plasticity.
We use readout restoration as a mechanistic intervention rather than a deployment strategy; preserving capabilities acquired during long-horizon training while recovering plasticity is outside the scope of this diagnostic.

\textit{Fig.~\ref{fig:plasticity_results}(c): Readout degeneration predicts reset benefit.}
Finally, we aggregate all settings above and directly compare the decrease in $R_{\mathcal D}$ with the gain from readout reset.
Across models, tasks, and checkpoints, the two quantities are strongly correlated
(Spearman $\rho=0.83$ overall; $0.73$ on OOD tasks).
Thus, the diagnostic and intervention tell a consistent story: settings with greater readout degeneration are precisely those that benefit more from restoring the readout.

\section{Conclusion and Discussion}

We studied continual learning through a single evolving update--behavior interaction.
The same local geometry determines which experiences produce useful updates, when updates interfere with existing behavior, and how accumulated learning reshapes the model's ability to learn next.
Opening this interaction into token forces, shared-readout geometry, and residual flow exposes distinct mechanisms behind these behaviors: direct and diffuse influence for selection, collision and erosion for forgetting, and readout transmission for future learnability.
The decomposition yields both diagnostics and mechanism-derived interventions, from energy control and format separation to readout restoration.

Our analysis is intentionally local and approximate.
It prioritizes a forward-computable view of interaction geometry over exact reconstruction of training trajectories, and the shared readout is important but not the sole source of long-term plasticity loss.
Higher-order residual paths, optimizer-dependent dynamics, longer-term feedback, and broader architectures remain important directions.
We discuss these limitations, related work, and additional evidence in Appendix~\ref{app:sec:related_works}.
More broadly, continual learning requires understanding not only what models learn, but how each act of learning reshapes the learner and its capacity to learn next.

\section*{Acknowledgements and Author contributions}
Acknowledgements: We thank Yihong Chen for feedback on the manuscript, Ruotian Peng for helpful discussions on the experiments, and Hamed Shirzad for comments on the manuscript.
We also thank Hen Davidov, Anushka Nair, Zihuiwen Ye, Lin Li, Hao Fei, Yonatan Gideoni, Xander Davies, Lucciano Carvalho Melo, and Sergio Calvo Ordoñez for helpful discussions during group meetings. 
We are grateful to Christos Thrampoulidis for facilitating access to computational resources used in part of the experiments.
YR acknowledges funding support from GenAI. We also acknowledge computational support from the OATML group and Isambard-AI.

Author contributions: \textbf{YR} conceived and led the project, developed the theoretical framework and analyses, designed and conducted the majority of the experiments, and led the writing and revision of the manuscript. \textbf{WD} contributed to the development and discussion of the ideas, and was a core contributor to the data attribution and forgetting sections, including experiments.
\textbf{GH} contributed to discussions throughout the project and to the experimental studies in the forgetting section.
\textbf{CL} contributed to discussions throughout the project and especially on plasticity loss.
\textbf{YG} supervised the project, contributed to conceptual discussions and research direction, and provided extensive feedback on the manuscript and its revisions.

\bibliography{iclr2027_conference}
\bibliographystyle{iclr2027_conference}

\clearpage

\appendix
\section{Related Work and Discussion}
\label{app:sec:related_works}

\subsection{Continual Adaptation and Learning Dynamics in Agentic LLM Systems}
\label{sec:related_works:CL}
Continual learning is a classical problem in machine learning \citep{mccloskey1989catastrophic,kirkpatrick2017overcoming}, but emerging agentic and self-improving LLM systems give it renewed relevance. 
Recent proposals span recursively self-improving systems \citep[RSI,][]{yin2025godel, zhang2026darwin}, test-time training \citep[TTT,][]{sun2024learning}, and iterated learning \citep{gulcehre2023reinforced, ren2024bias}, and may combine several adaptation mechanisms, including in-context learning \citep{olsson2022context}, external memory \citep{lewis2020retrieval}, lightweight adapters \citep{pfeiffer2022lifting}, parameter resetting \citep{chen2023improving}, and direct parameter updates \citep{ramkumar2023learn}. 
Such systems need not rely on a single learning loop or timescale. 
Nested Learning \citep{behrouz2026nested} provides a useful abstraction for this view: information may be incorporated through a hierarchy of learning processes, with in-context adaptation operating rapidly, memory and lightweight adapters at intermediate timescales, and parameter updates more slowly.

Our framework focuses on one stage of this hierarchy: the consolidation of experience into model parameters. 
Parameter updating may be only one component of continual adaptation, but whenever transient experience is eventually consolidated into model weights, the cycle studied in \cref{fig:system} reappears: the learner must decide what experience to consolidate, understand how the resulting update affects existing behavior, and preserve its ability to incorporate future experience. 
Our analysis provides a concrete account of this cycle for gradient-based parameter updates.

The same adaptation logic has conceptual counterparts in systems that rely primarily on in-context learning or external memory. 
A memory-based learner must still determine which past experience to retrieve, whether newly introduced information conflicts with existing knowledge or instructions, and whether its growing memory remains useful for future adaptation. 
The underlying mechanisms differ substantially from gradient-based learning, but the broader cycle of selection, interference, and sustained adaptability remains. 

We therefore view selection, interference, and sustainability as a broader way of organizing continual adaptation. 
The learning-dynamics framework developed in this paper provides one concrete realization for parameter updates, while analogous questions may arise under other adaptation mechanisms.

Although our concrete analysis focuses primarily on SFT-style updates, the learning-dynamics perspective has already been applied more broadly across LLM post-training. 
Recent work has used related update-level analyses to study refusal-aware instruction tuning, mixed reasoning data, and unlearning-induced probability redistribution \citep{zhu2025grait, deng2025inverse, li2026llm}, as well as negative-gradient effects, exploration and exploitation, and support preservation in RLVR \citep{deng2025on, ren_thesis, peng2025beyond, deng2026token}. 
This perspective has further been extended to tool-integrated reinforcement learning and agent search \citep{deng2026on}, as well as to safety alignment \citep{huang2026alignment}. 
These developments suggest that fine-grained learning dynamics may provide a useful analytical language beyond supervised adaptation, while a unified treatment of SFT, RL, tool use, and memory-based adaptation remains an open direction.

\subsection{Related Work on Selection, Interference, and Plasticity}
\label{sec:related_works:three_tasks}

Data attribution, forgetting, and plasticity loss have largely developed as separate research areas, but each concerns a different aspect of how model updates interact with behavior.
Data attribution asks which updates are beneficial, forgetting studies harmful interactions with existing behavior, and plasticity concerns how previous updates alter the model's responsiveness to future learning. 
We briefly relate our framework to these three literatures from this shared perspective.

\textbf{Beneficial interactions and data attribution.}
Data attribution studies how individual training examples affect model predictions or downstream behavior, with influence functions providing a classical formulation \citep{koh2017understanding}.
Recent work has adapted this idea to modern LLMs through scalable influence estimation \citep{grosse2023studying}, efficient approximations for parameter-efficient fine-tuning \citep{kwon2024datainf}, gradient-based data selection \citep{xia2024less,zhao2026rethinking}, and forward-only attribution \citep{deng2026value}.
Our perspective is complementary: rather than treating attribution only as a ranking problem, we resolve the underlying update–observation interaction into token-level forces, readout geometry, and layer-wise propagation. 
The resulting approximation yields practical attribution scores, but more importantly exposes the same interaction structure that later governs interference and future learnability.

\textbf{Harmful interactions and forgetting.} 
Forgetting has long been recognized as a central challenge in continual learning \citep{kirkpatrick2017overcoming}, and has recently been revisited in LLMs through continual instruction tuning, rehearsal, and analyses of fine-tuning-induced degradation \citep{scialom2022fine, li2024revisiting}.
Earlier theoretical accounts often relate forgetting to representation or task-subspace overlap, for example through frozen-feature similarity or NTK overlap \citep{ramasesh2020anatomy, doan2021theoretical}.
Our analysis instead focuses on the signed local interactions that produce these behavioral changes.
By separating output forces, readout geometry, and layer-wise propagation, the framework distinguishes two negative-interaction regimes: \textit{collision}, where concentrated negative forces induce sharp interference, and \textit{erosion}, where many individually mild interactions accumulate into gradual behavioral drift. 
This connects abrupt capability loss and subtler forms of forgetting within the same local learning dynamics.

\textbf{Evolving interactions and plasticity loss.} 
Plasticity loss concerns whether a trained model remains responsive to new learning signals.
It has been studied extensively in deep and reinforcement learning, where repeated adaptation can progressively reduce responsiveness to new experience, and resetting upper layers can partially restore plasticity \citep{lyle2023understanding-82a, nikishin2022primacy}.
Related phenomena are now emerging in LLMs: extended pretraining can make models harder to fine-tune \citep{springer2025overtrained}, while post-training strategies exhibit distinct stability–plasticity trade-offs across continual natural-language training \citep{hernandez2026can}.
From our interaction view, plasticity introduces a qualitatively different question from the two regimes above: previous updates change the geometry through which future updates act. 
Our analysis identifies the higher-layer and shared readout geometry as part of this evolving interaction operator, whose reshaping modulates the effective learning signal reaching the backbone.
This provides a structural link between degraded future learnability and the empirical effectiveness of resetting the readout or other upper layers.

Across these literatures, our distinction is a finer unified mechanistic analysis.
Existing methods often summarize updates through example-level influence, aggregate forgetting, or global plasticity measures.
Our decomposition instead retains signed token-level forces, shared readout geometry, and layer-wise propagation. 
This finer resolution connects beneficial influence, harmful interference, and evolving future learnability through the same local object.

\subsection{Limitations and Assumptions of the Learning-Dynamics Framework}
\label{sec:related_works:limitations}
Theory needs assumptions and approximations. 
The relevant question is whether they preserve the structure needed to explain the system of interest. 
Throughout this work, we trade some fidelity to the full training dynamics for tractability, interpretability, and computational efficiency.
These approximations arise from a common goal rather than being introduced separately to fit individual observations: retaining the dominant structure of model updates while keeping the framework analytically transparent and practical at LLM scale. 
Most of our results build on \cref{eq:score}, which relates behavioral change to the local gradient interaction. 
We summarize below where the main approximations enter and how they constrain the interpretation of our results.

\textbf{First-order local approximation.}
The starting point of our analysis is \cref{eq:def_confidence_change}, which follows from a first-order Taylor expansion and neglects terms of order \(\mathcal{O}(\|\theta_{t+1}-\theta_t\|_2^2)\). 
It is therefore most reliable for sufficiently small parameter updates.
This assumption is related to, but distinct from, the local approximations used by influence functions: influence functions characterize how a nearby empirical-risk optimum changes under a small data perturbation, whereas our analysis asks how one actual local update changes an observed behavior. 
This leads directly to the gradient interaction \(\nabla_\theta \log \pi_o^\top \nabla_\theta \log \pi_u\), without modeling re-optimization around a nearby optimum.

\textbf{Optimizer dynamics.}
For clarity, our derivation assumes an SGD update, while LLM training typically uses adaptive optimizers such as AdamW \citep{loshchilov2017decoupled}.
For a fixed optimizer state, the framework extends naturally to a preconditioned update.
If $\theta_{t+1}-\theta_t=\eta P_t\nabla_\theta\log\pi_\theta(y_u\mid s_u)$, then $\Delta_t(o,u)\approx\eta\nabla_\theta\log\pi_\theta(y_o\mid s_o)^\top P_t\nabla_\theta\log\pi_\theta(y_u\mid s_u)$.
The main complication is therefore not preconditioning itself, but its dependence on the training history.
In AdamW, momentum and second-moment estimates make the effective preconditioner $P_t$ evolve with previous gradients, while decoupled weight decay introduces an additional update component not induced by the current token.
Thus, incorporating a fixed optimizer state into the one-step analysis is straightforward, whereas jointly modeling the evolution of optimizer and model states would require extending the framework beyond local update dynamics.
We briefly discuss our framework in the AdamW case in Appendix~\ref{app:proofs:adamw} and leave the more principled analysis to future work.

\textbf{Residual-path approximation.}
The forward-computable form in \cref{eq:score} relies on two related but distinct approximations.
First, we locally linearize each residual block over the scale of the update, replacing its parameter-gradient geometry with that of the corresponding local linear map.
Second, when propagating the learning signal across layers, we retain only the leading identity path through the residual stream and omit paths that traverse one or more additional residual branches.
These two approximations control different sources of error: the former determines how accurately each block is represented locally, while the latter determines how much cross-layer Jacobian composition is ignored.
Importantly, the omitted higher-order residual paths remain part of the same first-order gradient rather than higher-order derivatives.
Retaining them would systematically improve fidelity, but would also reintroduce increasingly expensive Jacobian products.
See Appendix~\ref{app:diagnose} for more.

\textbf{From one-step interactions to accumulated learning.}
Perhaps the most important limitation is that the theory is fundamentally local in time. 
Our interaction analysis is local to a single update, whereas forgetting and plasticity loss emerge through many updates.
These mechanistic arguments therefore extrapolate the structure revealed by local interactions along a training trajectory, with the resulting predictions validated empirically rather than derived from an exact multi-step theory. 
A complete trajectory-level treatment would need to track the joint evolution of gradients, representations, and interaction geometry throughout optimization. 
Recent continuous-time formulations \citep{litman2026theory} provide one possible route by integrating time-varying kernel interactions; extending our structured decomposition in this direction would allow the interaction channels, force geometry, and layer-wise propagation to evolve explicitly over time.

\textbf{Empirical scope.}
Our experiments cover only a limited portion of the space of possible continual-learning systems.
Token forces, representations, and readout geometry can vary with model architecture, data distribution, training objective, and optimizer; our empirical results should therefore be viewed as mechanistic patterns rather than universal quantitative laws.
The current decomposition is also tailored to Transformer architectures with residual connections and a shared readout, and does not explicitly resolve finer-grained structures such as MoE, QKV projections, or circuit-level pathways.
Its signed fidelity can also depend on the representation regime, particularly at cold start; we characterize this behavior and the effect of brief warm-up in \cref{app:diagnose}.
We therefore view the framework as a set of mechanistic tools and testable hypotheses for continual adaptation, rather than a closed theory specific to the models and settings studied here.

\section{Proofs and Derivations}
\label{app:proofs}

\subsection{Derivation of One-step Token-wise Confidence Change}
\label{app:proofs:delta}
We now formally derive \cref{prop:score} in \cref{sec:learning_dynamics}.
We use the convention that Jacobians of vector outputs are output-by-parameter,
so $J=\partial h/\partial\phi\in\mathbb R^{d\times p}$.
\propscore*

\begin{proof}
We start from calculating \cref{eq:a_kkk_g}
\[
    \Delta_t(o,u)
    \approx
    \eta
        \Big[
        \underbrace{
        (g_o^\top g_u)
        (\vh_o^\top \vh_u)
        }_{\text{readout contribution}} + 
        \underbrace{
        (\vw^\top g_o)^\top
        J_o J_u^\top
        (\vw^\top g_u)
        }_{\text{backbone contribution}}
        \Big],
\]
which considers the following model:
\[
s\xrightarrow{f(s;\phi)} \vh \xrightarrow{\vw} \vz \xrightarrow{\sigma(\cdot)} \pi
\]
Recall that the logits are written as $\vz = \vw \vh,$ and $\theta=(\vw,\phi)$.
Since the readout and backbone form two disjoint parameter groups, their contributions to the logit eNTK can be separated directly:
\[
    \nabla_{\theta}\vz_o \nabla_{\theta}\vz_u^\top
    = \nabla_{\vw}\mathbf \vz_o \nabla_{\vw}\vz_u^\top + \nabla_{\phi}\vz_o \nabla_{\phi}\vz_u^\top.
\]
For the readout term, each logit only depends on its corresponding row of $\vw$.
Therefore, $\nabla_{\vw}\mathbf \vz_o\nabla_{\vw}\mathbf \vz_u^\top=(\vh_o^\top\vh_u)I_{V\times V}$.
For the backbone term, the chain rule gives $\nabla_{\phi}\vz=\nabla_{\vh}\vz\nabla_\phi \vh=\vw\nabla_{\phi}\vh=\vw J$ and hence $\nabla_{\phi}\vz_o\nabla_{\phi}\vz_u^\top=\vw J_oJ_u^\top\vw^\top$.
Starting from Equation~\eqref{eq:def_confidence_change_2}, and substituting these two terms into \cref{eq:a_kkk_g} and sandwiching the result between $g_o^\top$ and $g_u$ gives \cref{eq:a_kkk_g}.

The decomposition is therefore simply induced by the two parameter groups: updating $\vw$ gives the readout contribution, while updating $\phi$ gives the backbone contribution.
No additional approximation beyond the first-order update is required for this parameter-group decomposition.

\cref{eq:score} further considers the residual architecture used by modern LLMs, as in \cref{fig:residual_network}.
Under this setting, we further partition the backbone parameters according to the residual blocks and the embedding layer $\phi=(\phi_0,\ldots,\phi_{L-1},\phi_{\text{embd}})$.
Accordingly, its eNTK can be written as the sum of the contributions from these parameter blocks:
\begin{equation}
J_oJ_u^\top = \nabla_\phi\vh_{L,o} \nabla_\phi\vh_{L,u}^\top = 
\sum_{\ell=0}^{L-1}
\nabla_{\phi_\ell}\vh_{L,o}
\nabla_{\phi_\ell}\vh_{L,u}^\top +
\nabla_{\phi_{\text{embd}}}\vh_{L,o}
\nabla_{\phi_{\text{embd}}}\vh_{L,u}^\top .
\label{app:eq:JJ}
\end{equation}

Here and below, we formulate the backbone eNTK with respect to the pre-normalization state $\vh_L$ and omit the Jacobian of the final RMSNorm. 
Retaining this Jacobian replaces the transmitted force $\vw^\top g$ by $J_{\mathrm{RMS}}(\vh_L)^\top \vw^\top g$, while preserving the same force--kernel--force structure.
We now consider residual blocks of the form
\[
    \vh_{\ell+1} = \vh_{\ell} + f_\ell(\tilde{\vh}_{\ell};\phi_\ell), \qquad
    \tilde{\vh}_{\ell} = \mathsf{RMSNorm}(\vh_{\ell}).
\]

Since $\vh_\ell$ is independent of $\phi_\ell$, $\nabla_{\phi_\ell}\vh_{\ell+1}=\nabla_{\phi_\ell}f_\ell(\tilde{\vh}_\ell;\phi_\ell)$.
By the chain rule, $\nabla_{\phi_\ell}\vh_L=\nabla_{\vh_{\ell+1}}\vh_L \nabla_{\phi_\ell}\vh_{\ell+1} = \nabla_{\vh_{\ell+1}}\vh_L\,\nabla_{\phi_\ell}f_\ell(\tilde h_\ell;\phi_\ell)$.

The next step is the identity-path approximation.
For block $\ell$, its contribution to the final representation satisfies (after distributively expanding the product and grouping terms by the number of residual transformations, similar as derivations in \citet{chen2026decomposing})
\[
    J_\ell= %
    \left[\prod_{k=\ell+1}^{L-1}(I+A_k)\right]B_\ell
    =
    \underbrace{B_\ell}_{J_\ell^{(1)}}
    +
    \underbrace{\sum_{k=\ell+1}^{L-1}A_k B_\ell}_{J_\ell^{(2)}}
    +
    \underbrace{
    \sum_{\ell+1\leq k_1<k_2\leq L-1}
    A_{k_2}A_{k_1}B_\ell
    }_{J_\ell^{(3)}}
    +\cdots,
\]
where $A_k \triangleq \nabla_{\vh_{k}} f_k(\tilde{\vh}_{k};\phi_k)$ and $B_\ell\triangleq\nabla_{\phi_\ell}\vh_{\ell+1}$.
Under the identity-path approximation, i.e., $\nabla_{\vh_{\ell+1}}\vh_L\approx I$, the only problem is $\nabla_{\phi_\ell}f_\ell(\tilde h_\ell;\phi_\ell)$.

We then approximate this geometry by the linear block $f_\ell(\tilde{\vh}_\ell)\approx M_\ell\tilde{\vh}_\ell$.
For this linear surrogate, 
\[
\nabla_{\phi_\ell}\vh_{L,o}
\nabla_{\phi_\ell}\vh_{L,u}^\top \approx
\nabla_{M_\ell}f_{\ell,o}
\nabla_{M_\ell}f_{\ell,u}^\top
=
\left(
\tilde{\vh}_{\ell,o}^\top
\tilde{\vh}_{\ell,u}
\right)I_{d\times d}.    
\]
Thus, each residual block contributes its forward-pass hidden-state similarity to the backbone eNTK.

The embedding layer has the same structure.
An embedding parameter is shared only when the corresponding input tokens are identical, giving 
\[
\nabla_{\phi_{\text{embd}}}\vh_{L,o}
\nabla_{\phi_{\text{embd}}}\vh_{L,u}^\top
\approx
\kappa_{\text{embd}}(s_o,s_u)I_{d\times d},
\]
where $\kappa_{\mathrm{embd}}(s_o,s_u)=\sum_{i=1}^{|s_o|}\sum_{j=1}^{|s_u|}\mathbf 1[s_{o,i}=s_{u,j}]$.
Then, by substituting the two parts above back into \cref{app:eq:JJ} and \cref{eq:score}, we can get the desired expression in \cref{prop:score}.

Note that the embedding term should be viewed as a context-level approximation rather than a strict consequence of the identity path, since full-context overlap implicitly captures non-direct residual routing such as attention-mediated token mixing. 
In practice, this contribution is typically small and does not materially affect our results (rank correlation with and without this term usually $>0.999$), while the context-overlap form better reflects the full model than target-position overlap alone. 
A more faithful treatment of token routing, particularly for long contexts, is left to future work.

The key consequence is that the identity path removes the layer-dependent output-side Jacobian, while the linear-block approximation reduces the remaining parameter geometry to hidden-state similarity.
This is what makes the all-layer backbone contribution estimable from forward-pass quantities.
\end{proof}

\subsection{Energy and Negative Force}
\label{app:proofs:negative}
We now formally prove \cref{prop:g_negative} in \cref{sec:forgetting}.
\propnegative*

We here have a more detailed expression for this proposition.

\paragraph{Detailed form of Proposition~\ref{prop:g_negative}.}
\textit{
    Extreme update energy implies a large negative force. 
    Let
    \[
        g_u=\ve_{y_u}-\pi_\theta(\cdot\mid s_u)\in\mathbb{R}^{V\times 1};\quad E_u\triangleq\|g_u\|_2^2,
    \]
    where $\pi_\theta(\cdot\mid s_u)$ is the model's prediction and $y_u$ is the supervision.
    Define the largest non-target probability as
    \[
        m_u\triangleq \max_{j\neq y_u} \pi_\theta(j\mid s_u).
    \]
    Then, we have
    \[
        m_u\geq\frac{E_u}{1-\pi_\theta(y_u\cdot\mid s_u)}-(1-\pi_\theta(y_u\mid s_u))\geq E_u - 1.
    \]
    Consequently, whenever $E_u > 1+\delta$, there necessarily exists a
    non-target token $j \neq y_u$ such that
    \[
    \exists j\neq y_u,\quad [g_u]_j = -\pi_\theta(j\mid s_u) < -\delta.
    \]
    Therefore, extremely large update energy is not merely a large gradient norm: it certifies the existence of a large negative force along at least
    one non-target token direction.
    In particular, $E_u > 1.9$ implies that some non-target token receives a negative force with magnitude greater than $0.9$.
}

\begin{proof}
Let
\[
    q_u \triangleq 1-\pi_\theta(y_u\mid s_u)= \sum_{j \neq y_u} \pi_\theta(j\mid s_u).
\]
By the structure of the SFT force,
\[
    E_u = \|\ve_{y_u}-\pi_\theta(\cdot\mid s_u)\|_2^2 = 1-2\pi_\theta(y_u\mid s_u) + \sum_j\pi_\theta(j\mid s_u)^2 = q_u^2 + \sum_{j \neq y_u} \pi_\theta(j\mid s_u)^2.
\]

Since $\pi_\theta(j\mid s_u) \leq m_u$ for every $j \neq y_u$ by definition, we have
\[
    \sum_{j \neq y_u} \pi_\theta(j\mid s_u)^2\leq  \sum_{j \neq y_u}m_u \pi_\theta(j\mid s_u)= m_u\sum_{j \neq y_u} \pi_\theta(j\mid s_u)=m_u q_u.
\]

Therefore,
\[
    E_u \leq q_u^2 + m_u q_u,
\]
which implies
\[
    m_u\geq\frac{E_u-q_u^2}{q_u}=\frac{E_u}{q_u}-q_u=\frac{E_u}{1-\pi_\theta(y_u\mid s_u)}-(1-\pi_\theta(y_u\mid s_u)).
\]
Moreover, because $q_u \leq 1$, we have
\[
    E_u\leq q_u^2 + m_u q_u \leq 1+m_u.
\]

Thus, if $E_u>1+\delta$, then $m_u>\delta$. By the definition of $m_u$,
there exists some $j \neq y_u$ such that
\[
[g_u]_j=-\pi_\theta(j\mid s_u)=-m_u<-\delta.
\]

In other words, if the energy exceeds 1 by the amount of $\delta$, the $V\times 1$ vector $g_u$ must have a negative force greater than $\delta$ on the token $j=\underset{j\neq y_u}{\text{argmax }}\pi_\theta(\cdot\mid s_u)$.

Furthermore, we can also have a bound for the inverse claim, i.e., if $g_u$ has a very negative force, the resulting energy must also be large.
Specifically, if there exists a non-target token $j \neq y_u$ such that
$[g_u]_j < -\delta$, then
\[
    E_u = \|g_u\|_2^2 > 2\delta^2.
\]
Indeed, recalling that $m_u = \max_{j \neq y_u}\pi_\theta(j\mid s_u)$ and
$q_u = 1-\pi_\theta(y_u\mid s_u)$, we have $q_u \geq m_u$, and therefore
\[
E_u=q_u^2+\sum_{j\neq y_u}\pi_\theta(j\mid s_u)^2\geq q_u^2+m_u^2\geq 2m_u^2.
\]
The bound is tight when all non-target probability mass is concentrated
on a single token.
Note that this bound is not as tight as that in the forward direction:
if $\delta=0.9$, which means the largest negative component in $g_u$ is $-0.9$, we can only conclude $E_u\geq 2m_u^2=2(0.9)^2=1.62$.
\end{proof}

\subsection{Formal Analysis of Cross-Task Readout Degeneration}
\label{app:proofs:plasticity}

We now formally describe the degeneration of the readout layer mentioned in \cref{sec:plasticity}.

\begin{restatable}{proposition}{propplasticity}
    \label{prop:plasticity}
    \textbf{Readout degeneration limits future learning.} 
    Under mild norm control and sufficiently limited overlap between task-induced force directions, training on one task can concentrate the shared readout geometry toward its task-relevant directions while weakening transmission along others. For a subsequently encountered task \(\mathcal D_B\), this can reduce \(R_{\mathcal D_B}\), attenuating effective updates to the backbone and hence its one-step learnability.
\end{restatable}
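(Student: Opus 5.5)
We will make the informal statement precise by tracking the readout Gram matrix $G_t\triangleq\vw_t\vw_t^\top$ under training on a task $\mathcal D_A$. Its force directions span a subspace $U_A\subseteq\mathbb{R}^V$, and we let $P_A$ be the orthogonal projector onto $U_A$. \emph{Limited overlap} will mean that forces of the later task $\mathcal D_B$ satisfy $\|P_A g\|_2^2\le\epsilon\|g\|_2^2$ for $g$ in the support of $\mathcal D_B$. \emph{Mild norm control} will mean an explicit contraction on the complement of $U_A$, of the form
\[
(I-P_A)\vw_{t+1}=(1-\eta\lambda)(I-P_A)\vw_t ,
\]
which holds under weight decay $\lambda$ or an equivalent normalisation. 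We also require the task-$A$ hidden states to stay bounded.

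The first step is to write the readout update explicitly. For a token $u\sim\mathcal D_A$, the gradient of $\log\pi$ with respect to $\vw$ is $g_u\vh_u^\top$, so $\vw_{t+1}=(1-\eta\lambda)\vw_t+\eta\,\mathbb E_u[g_u\vh_u^\top]$. Every data-driven increment has column space inside $U_A$. Hence the component $(I-P_A)\vw_t$ receives no growth and only shrinks, giving
\[
\|(I-P_A)\vw_t\|^2=(1-\eta\lambda)^{2t}\|(I-P_A)\vw_0\|^2 .
\]
Meanwhile $P_A\vw_t$ is driven toward the directions supported by $\mathcal D_A$. This is the concentration claim: $G_t$ concentrates on $U_A$ in the sense that $\operatorname{tr}((I-P_A)G_t)\to0$ while $\operatorname{tr}(P_AG_t)$ stays bounded below or grows.

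Next we bound the transmission of $\mathcal D_B$. For $g\in\mathcal D_B$, split $g=P_Ag+(I-P_A)g$ and apply $(a+b)^2\le2a^2+2b^2$:
\[
\frac{\|\vw_t^\top g\|^2}{\|g\|^2}\le2\epsilon\|P_A\vw_t\|_{op}^2+2(1-\eta\lambda)^{2t}\|(I-P_A)\vw_0\|_{op}^2 .
\]
Taking expectations gives an upper bound on $R_{\mathcal D_B}(t)$. We compare it with $R_{\mathcal D_B}(0)$, which is lower bounded by its Rayleigh quotient at initialisation. This shows a decrease once $t$ is large and $\epsilon$ is small, where "large" is quantified by $\epsilon$, $\lambda$ and the task-$A$ norm bound. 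Finally we connect the result to learnability using the self-interaction case of Proposition~\ref{prop:score}. The CH2 term equals $\eta\|\vw^\top g_u\|^2 S(u)$, where $S(u)=\sum_\ell\|\tilde\vh_{\ell,u}\|^2+\kappa_{\mathrm{embd}}$. It is therefore at most $\eta R\,\|g_u\|^2 S$, with $S$ bounded by the RMSNorm scale. This shows that the backbone's contribution to the one-step confidence gain on $\mathcal D_B$ is attenuated proportionally.

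The main obstacle is the bound on $\|P_A\vw_t\|_{op}$. Without control, growth inside $U_A$ could make the $\epsilon$-leakage term dominate and cancel the decay. We will first try to bound it using the same weight decay together with bounded forces and hidden states, $\|g\|\le\sqrt2$ and $\|\vh\|\le C$. This gives a stationary bound $\|P_A\vw\|\lesssim\sqrt2C/\lambda$, so the leakage is $O(\epsilon C^2/\lambda^2)$; this is the precise role of "mild norm control". An optional self-reinforcement remark follows: when $R_{\mathcal D_B}$ is small, later $\mathcal D_B$ updates act mainly through the readout. Those updates would then enlarge $P_B\vw$ only along the $g_B$ directions.
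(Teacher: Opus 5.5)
Your argument is valid under your own formalization, but it takes a different route from the paper.

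\textbf{The paper's route.} The paper never introduces a task-$A$ subspace. For each probe $o\sim\mathcal D_B$ it writes $\vw_{t+1}^\top g_o=\alpha\vw_t^\top g_o+\eta\vh_u(g_u^\top g_o)$ and applies the triangle inequality to $\sqrt{r_o}$. Minkowski over $\mathcal D_B$ then gives the per-step recursion $\sqrt{R_{\mathcal D_B}(\vw_{t+1})}\le\alpha\sqrt{R_{\mathcal D_B}(\vw_t)}+\eta\|g_u\|_2\|\vh_u\|_2\,\epsilon_{u\to B}$. Here the overlap is measured by \emph{pairwise} cosines between each update force and the probe forces.

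\textbf{What your route buys.} Your projector decomposition gives a direct statement of the concentration half of the proposition: $\operatorname{tr}((I-P_A)\vw_t\vw_t^\top)=(1-\eta\lambda)^{2t}\|(I-P_A)\vw_0\|_F^2$. Written in expectation, it also gives a quantitative learnability bound, $\mathbb E_{u\sim\mathcal D_B}[\Delta^{\texttt{CH2}}_u]\le2\eta S_{\max}R_{\mathcal D_B}$ with $S_{\max}=\sup_u S(u)$. This is sharper than the paper's qualitative closing remark. Note that per example the factor is $r_u$, not $R$, so your bound must be stated in expectation.

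\textbf{What the paper's route buys: robustness.} Your complement of $U_A$ must receive \emph{no} growth, so $U_A$ has to contain every task-$A$ force along the whole trajectory. All forces $\ve_y-\pi_\theta$ lie in $\mathbf 1^\perp$. With full-support softmax, $V-1$ of them in general position already span $\mathbf 1^\perp$, at which point $P_Ag=g$ and your $\epsilon$ equals $1$, even when every pairwise overlap $g_u^\top g_o$ is tiny. In practice your hypothesis is therefore met essentially only in the idealized disjoint-support regime, which is the paper's limiting case. The paper's recursion degrades gracefully under partial overlap. It also never needs your stationary bound on $\|P_A\vw_t\|_{op}$, because only the overlap of new increments with the probes enters.

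\textbf{Smaller repairs.}
\begin{itemize}
\item The comparison with $R_{\mathcal D_B}(0)$ is circular as written. What your bound actually needs is $R_{\mathcal D_B}(0)>2\epsilon\sup_t\|P_A\vw_t\|_{op}^2$. Even then, the factor $2$ and the operator norms give only eventual decrease, not per-step decrease.
\item When $P_Ag=0$ you can skip the split entirely: $\vw_t^\top g=(1-\eta\lambda)^t\vw_0^\top g$, so $r_g(\vw_t)=(1-\eta\lambda)^{2t}r_g(\vw_0)$ exactly. This matches the paper's $\alpha^2$ decay.
\item The claim that $\operatorname{tr}(P_A\vw_t\vw_t^\top)$ stays bounded below is asserted, not shown. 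The conclusion about $R_{\mathcal D_B}$ does not depend on it.
\end{itemize}
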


\begin{proof}
We formalize the cross-task attenuation induced by readout updates. 
Consider one updating example $u\sim\mathcal{D}_A$.
Under a first-order SGD update with mild norm control, the readout update can be written as
\[
    \vw_{t+1} = \alpha \vw_t + \eta g_u \vh_u^\top,
\]
where $0<\alpha<1$ denotes the multiplicative contraction. 
One example is decoupled weight decay, which gives $\alpha=1-\eta\lambda$.
But $0<\alpha<1$ can also come from a budget constraint on the readout: if the readout can only grow in some directions at the cost of others, the effect on any fixed direction is the same. Related work on the implicit bias of gradient descent also supports such mild norm-control assumptions in simplified settings \citep{gunasekar2018characterizing}. 
The argument below only requires such a contraction and is not specific to weight decay.

For a probing example $o\sim\mathcal{D}_B$, define its readout transmission as the Rayleigh quotient
\[
    r_o(\vw)\triangleq \frac{g_o^\top\vw\vw^\top g_o}{g_o^\top g_o}= \frac{\|\vw^\top g_o\|_2^2}{\|g_o\|_2^2}.
\]
As throughout our one-step analysis, we evaluate $g_o$ at the current parameters and keep it fixed within this local update. 
This isolates the change induced by the readout geometry itself.

After updating on $u$, we have
\[
    \vw_{t+1}^\top g_o=\alpha\vw^\top_{t} g_o + \eta \vh_u(g_u^\top g_o).
\]
Therefore, by applying triangle inequality, we have
\[
    \sqrt{r_o(\vw_{t+1})} = \frac{\|\vw_{t+1}^\top g_o\|_2}{\|g_o\|_2} \leq 
    \alpha\frac{\|\vw_{t}^\top g_o\|_2}{\|g_o\|_2} + \eta \frac{|g_u^\top g_o|}{\|g_o\|_2}\|\vh_u\|_2.
\]

Let
\[
\rho_{u,o}
\triangleq
\frac{g_u^\top g_o}{\|g_u\|_2\|g_o\|_2}
\]
denote the normalized overlap between the updating and probing logit-gradient directions.
Using the definition of \(\rho_{u,o}\), the triangle-inequality bound becomes
\begin{equation}
\sqrt{r_o(\vw_{t+1})}
\le
\alpha\sqrt{r_o(\vw_t)}
+
\eta |\rho_{u,o}|\,\|g_u\|_2\|h_u\|_2.
    \label{app:plasticity:a1}
\end{equation}

\Cref{app:plasticity:a1} exposes the competition underlying readout degeneration. 
The first term contracts the transmission already available to task B, while the second term measures how much the update from task A replenishes the probing direction of task B. If the two tasks have little overlap in logit-gradient space, this compensating term is small.

The limiting case is relatively straightforward. 
Suppose that task A and task B have disjoint gradient supports, or more generally $g_u^\top g_o=0$.
Then the task A update contributes nothing along the probing direction $g_o$, and $\vw_{t+1}^\top g_o = \alpha \vw^\top_{t} g_o$.
Hence, in this extreme case, we have
\begin{equation}
    r_o(\vw_{t+1})=\alpha^2 r_o(\vw_{t}).
\end{equation}
Thus, any direction that is not supported by the current task is strictly attenuated whenever $\alpha<1$.
The same argument extends to partially overlapping tasks. Define
\[
\epsilon_{u\to B}
\triangleq
\left(
\mathbb E_{o\sim\mathcal D_B}
\left[\rho_{u,o}^2\right]
\right)^{1/2},
\]
and recall the dataset-level transmission score $R_{\mathcal D_B}(\vw)=\mathbb E_{o\sim\mathcal D_B}[r_o(\vw)]$.
Taking the $L_2(\mathcal D_B)$ norm of both sides of \cref{app:plasticity:a1} and applying Minkowski's inequality gives

\begin{align}
    \sqrt{R_{\mathcal D_B}(\vw_{t+1})}
    &=
    \left\|
    \sqrt{r_o(\vw_{t+1})}
    \right\|_{L_2(\mathcal D_B)}\nonumber\\
    &\le
    \alpha
    \left\|
    \sqrt{r_o(\vw_t)}
    \right\|_{L_2(\mathcal D_B)}
    +
    \eta\|g_u\|_2\|h_u\|_2
    \left\|
    \rho_{u,o}
    \right\|_{L_2(\mathcal D_B)}\nonumber\\
    &=
    \alpha\sqrt{R_{\mathcal D_B}(\vw_t)}
    +
    \eta\|g_u\|_2\|h_u\|_2\epsilon_{u\to B},
    \label{app:plasticity:a4}
\end{align}
where \(\|f(o)\|_{L_2(\mathcal D_B)} \triangleq (\mathbb E_{o\sim\mathcal D_B}[f(o)^2])^{1/2}\).

Consequently, the readout transmission associated with task B decreases whenever
\begin{equation}
  \eta\|g_u\|_2\|h_u\|_2\epsilon_{u\to B}<(1-\alpha)\sqrt{R_{\mathcal D_B}(w_t)}. 
\end{equation}

This is a sufficient condition for cross-task readout degeneration. 
\textit{It states that when task B is only weakly supported by the current task-A gradient, the update received by the B-relevant directions cannot compensate for their contraction under norm control}.

To isolate the evolution of the readout geometry, we keep the probing
force directions $\{g_o:o\sim\mathcal D_B\}$ fixed over the following
multi-step bound.
Allowing the probing forces themselves to evolve would require a
trajectory-level analysis beyond this local bound.
The effect accumulates under repeated training. 
Consider a sequence of updates
$u_1,\ldots,u_T\sim\mathcal D_A$, and suppose
\(
\|g_{u_\tau}\|_2\|h_{u_\tau}\|_2
\epsilon_{u_\tau\to B}
\le \delta
\)
for all $\tau$. Repeatedly applying \cref{app:plasticity:a4} gives
\begin{equation}
\sqrt{R_{\mathcal D_B}^{(T)}}
\le
\alpha^T\sqrt{R_{\mathcal D_B}^{(0)}}
+
\eta\delta
\sum_{\tau=0}^{T-1}\alpha^\tau
=
\alpha^T\sqrt{R_{\mathcal D_B}^{(0)}}
+
\eta\delta
\frac{1-\alpha^T}{1-\alpha}.
\end{equation}

For negligible cross-task overlap, $\delta\approx0$, and the transmission approximately follows
\begin{equation}
    R^{(T)}_{\mathcal{D}_B}\lesssim \alpha^{2T}R^{(0)}_{\mathcal{D}_B}.
\end{equation}

Hence, repeated updates on task A provide task-dependent replenishment to directions supported by A, while directions weakly supported by the current task progressively lose readout transmission.
This provides the Rayleigh-quotient counterpart of the block-wise interpretation in the main text.

Finally, we connect the degeneration of $R_{\mathcal D_B}$ to plasticity on task B.
After training on task A, consider a subsequent update on an example
$u\sim\mathcal D_B$.
As in \cref{sec:plasticity}, plasticity concerns the confidence change of the updating example itself, and we therefore set $o=u$.
The \texttt{CH2} contribution to its one-step improvement contains the factor
\(
g_u^\top \vw\vw^\top g_u
=
\|g_u\|_2^2 r_u(\vw).
\)
Hence,
\[
\Delta^{\texttt{CH2}}_u
=
\eta
\|g_u\|_2^2
r_u(\vw)
\left(
\sum_{\ell=0}^{L-1}
\|\tilde \vh_{\ell,u}\|_2^2
+
\kappa_{\mathrm{embd}}(s_u,s_u)
\right),
\]
up to the same approximation used in our learning-dynamics decomposition.

Thus, $r_u(\vw)$ is exactly the multiplicative readout-transmission factor governing the CH2 contribution for example $u$, while
\[
R_{\mathcal D_B}
=
\mathbb E_{u\sim\mathcal D_B}[r_u(\vw)]
\]
summarizes this transmission across task B.
A reduction in $R_{\mathcal D_B}$ therefore indicates weaker average readout transmission for the new task.
When the output-level signal and residual-state factors remain of comparable scale and do not systematically increase enough to compensate for this attenuation, the available CH2 update is correspondingly reduced.
This connects cross-task attenuation of the readout Rayleigh quotient to a reduction in effective one-step adaptation on the subsequent task, yielding the plasticity-loss mechanism described in \cref{prop:plasticity}.
\end{proof}

\textbf{Scope of the bound.}
The contraction factor $\alpha$ need not arise from explicit weight decay. 
It only requires some budget constraint on the readout: if the readout can only grow in some directions at the cost of others, the same attenuation follows. 
This explains why we still see degeneration without weight decay.
When the training distribution is narrow, a few directions take most of the budget, and the directions needed by new tasks are squeezed out. 
We do not derive this constraint here, and leave the exploration of other constraining mechanisms as an open question.

\subsection{Beyond SGD: What Changes under Adaptive Optimization?}
\label{app:proofs:adamw}
Our analysis uses SGD as the canonical optimization setting because it exposes the interaction structure in a particularly simple form.
In practice, however, LLM fine-tuning commonly uses adaptive optimizers such as Adam or AdamW; indeed, all experiments in this work are performed with AdamW.
We therefore briefly discuss which conclusions depend on the SGD assumption and which arise from the model structure itself.

The main distinction is \textit{structural} versus \textit{quantitative}.
Adaptive optimization reweights parameter-coordinate interactions and can therefore change exact attribution values, relative channel strengths, and pairwise rankings.
In contrast, the force-kernel-force form, the decomposition into the same parameter-induced interaction channels, and the role of the shared readout remain structurally unchanged.
Decoupled weight decay further introduces a candidate-independent local drift and therefore does not affect the ranking over candidate updates for a fixed observation.

\textbf{A preconditioned view of Adam.}
Recall \cref{eq:def_confidence_change} where we track the model's confidence change
\[
    \Delta_t(o,u)\approx \langle \nabla_\theta\log\pi(y_o\mid s_o), \theta_{t+1}-\theta_t\rangle
\]
Note that the $\nabla_\theta\log\pi(y_o\mid s_o)$ part does not depend on the optimizer: it is just the gradient of $(y_o,s_o)$ given to the model $\theta_{t}$, originating from the 1st-order Taylor expansion.
The difference arise in the calculation of $\Delta\theta_t\triangleq\theta_{t+1}-\theta_t$.
Let
$G_u \triangleq \nabla_\theta \log \pi(y_u\mid s_u)$.
For SGD, we have
\[
\Delta\theta_t^{\mathrm{SGD}}(u)=\eta_t G_u.
\]

However, different optimizers will make $\Delta\theta_t$ different.
For the adaptive learning rate mechanisms applied in Adam and AdamW, we need a \textit{precondition matrix} to capture it.
Following standard analyses of adaptive optimization \citep{malladi2022sdes,malladi2023kernel,xia2024less}, we consider the local preconditioned approximation
\(
\Delta\theta_t^{\mathrm{Adam}}(u)
\approx
\eta_t P_t G_u
\),
where $P_t \succ 0$ is a diagonal, optimizer-state-dependent preconditioner.
AdamW additionally applies decoupled weight decay \citep{loshchilov2017decoupled}: 
\(
\Delta\theta_t^{\mathrm{AdamW}}(u)
\approx
\eta_t P_t G_u
-\eta_t\lambda\theta_t.
\)
As a result, we have
\begin{align}
    \Delta_t^{\mathrm{SGD}}(o,u)
    &\approx
    \eta_t G_o^\top G_u,\nonumber\\
    \Delta_t^{\mathrm{Adam}}(o,u)
    &\approx
    \eta_t G_o^\top P_tG_u,\nonumber\\
    \Delta_t^{\mathrm{AdamW}}(o,u)
    &\approx
    \eta_t G_o^\top P_tG_u
    -\eta_t\lambda G_o^\top\theta_t.
\end{align}
The first-moment accumulator contributes an additional history-dependent drift shared across candidate updates, which we omit to isolate the interaction induced by the incoming gradient. 
Accordingly, the signed score reflects this local gradient interaction rather than the exact sign of the full AdamW.

\textbf{The structural decomposition is preserved.}
Recall that by separating the softmax layer, 
the parameter gradient can be written as
$G_{o/u}=\mathcal{J}_{o/u}^{\top}g_{o/u}$,
where $\mathcal{J}_{o/u}\triangleq\nabla_{\theta} \vz_{o/u}$ is the full-parameter logit Jacobian
and $g_{o/u}$ is the token-space force used throughout the main text.
Under Adam,
\[
G_o^\top P_tG_u
=
g_o^\top
\underbrace{\mathcal{J}_oP_t\mathcal{J}_u^\top}_{\mathcal{K}^{(P_t)}(o,u)}
g_u.
\]
Hence adaptive preconditioning preserves the same force-kernel-force form, with the Euclidean parameter-space kernel replaced by an optimizer-weighted kernel.

More importantly, Adam does not introduce new cross-parameter paths.
Partition the parameters into the shared readout $\vw$ and residual-block parameters $\{\phi_\ell\}_{\ell=0}^{L-1}$.
Since the Adam preconditioner is block-diagonal, 
\[
P_t
=
\operatorname{diag}
\left(
P_t^{\vw},
P_t^{0},\ldots,P_t^{L-1}
\right),
\]
and therefore
\[
G_o^\top P_tG_u
=
(G_o^{\vw})^\top P_t^{\vw}G_u^{\vw}
+
\sum_{\ell=0}^{L-1}
(G_o^{\ell})^\top P_t^{\ell}G_u^{\ell}.
\]
The same parameter paths that produce \texttt{CH1} and \texttt{CH2} under SGD therefore remain present under adaptive optimization.
The optimizer changes their weighting, but not their computational origin.

The role of the shared readout is similarly preserved.
For a residual-block weight matrix, the gradient has the outer-product form
\[
G_{\ell,o/u}
=
\operatorname{vec}(a_{o/u} \vh_{o/u}^\top),
\qquad
a_{o/u}=\vw^\top g_{o/u}.
\]
Under SGD, its interaction factorizes as
\[
I_{\ell}^{\mathrm{SGD}}
=
(a_o^\top a_u)(\vh_o^\top \vh_u)
=
(g_o^\top \vw\vw^\top g_u)(\vh_o^\top \vh_u).
\]
Under a diagonal Adam preconditioner
$P_{\ell,t}=\operatorname{diag}(p_{\ell,ij})$,
the same interaction becomes
\[
I_{\ell}^{\mathrm{Adam}}
=
\sum_{i,j}
p_{\ell,ij}\,
a_{o,i}a_{u,i}
h_{o,j}h_{u,j} = (a_o \odot a_u)^\top D_\ell (\vh_o \odot \vh_u),
\]
where $D_\ell\in\mathbb{R}^{d_{out}\times d_{in}}$ and $(D_\ell)_{ij}=p_{\ell,ij}$ is the reshaped version of $P_\ell$.
For SGD case, $D_\ell = \mathbf{1}\mathbf{1}^\top$, and hence the element-wise expression can be simplified to the inner product.

Thus Adam reweights the existing parameter-coordinate interactions rather than creating a new interaction mechanism.
In particular, since $p_{\ell,ij}>0$, the sign of each individual
coordinate contribution is unchanged, although their relative strengths and
therefore the sign or magnitude of the aggregate interaction may change.
The shared $\vw$ remains the source of the projected coupling
$a_{o/u}=\vw^\top g_{o/u}$ that underlies \texttt{CH2}.

\textbf{Decoupled weight decay does not affect local candidate ranking.}
Comparing AdamW with Adam
\[
\Delta_t^{\mathrm{AdamW}}(o,u)
=
\Delta_t^{\mathrm{Adam}}(o,u)
-\eta_t\lambda G_o^\top\theta_t.
\]
For a fixed observation $o$ and checkpoint $\theta_t$, the second term is
independent of the candidate update $u$.
Consequently,
\[
\operatorname{rank}_{u}
\Delta_t^{\mathrm{AdamW}}(o,u)
=
\operatorname{rank}_{u}
\Delta_t^{\mathrm{Adam}}(o,u)
\]
under this local analysis.
The same conclusion holds when weight decay is applied only to a subset of
parameters by replacing $\theta_t$ with the corresponding masked parameters.

\paragraph{What does change?}
The preceding invariances are structural rather than numerical.
Adaptive preconditioning changes the weights assigned to individual parameter
coordinates.
It can therefore change the exact attribution magnitude, the relative strength
of \texttt{CH1} and \texttt{CH2}, and the ranking or aggregate sign of pairs
whose contributions are comparable.
These quantities should therefore be regarded as optimizer dependent.
In contrast, our main mechanistic conclusions concern the origin and geometry of
the interaction channels, rather than their exact optimizer-specific magnitudes.

This distinction is also consistent with prior optimizer-aware attribution work.
For example, LESS \citep{xia2024less} explicitly compares data selection using
SGD, SignGD, and Adam influence formulations.
Their average downstream scores are $49.7$, $47.8$, and $50.5$, respectively,
compared with $46.0$ for random selection.
Thus the Adam-aware formulation improves attribution fidelity, while the simpler
SGD formulation already retains most of the downstream utility of the
optimizer-aware score.
This motivates our use of SGD as a clean analytical lens for exposing the
interaction structure.

Finally, the fixed-preconditioner approximation does not capture all details of
Adam, in particular the history dependence introduced by the first- and
second-moment estimates.
A fully trajectory-aware treatment of adaptive optimization is beyond the scope
of this work.
Our goal here is narrower: to clarify that practical optimizers primarily modify
the quantitative weighting of the interactions analyzed in the main text, while
leaving their structural origin intact.
\section{Understanding the Validity Regime of the Approximation}
\label{app:diagnose}

\cref{sec:related_works:limitations} summarizes the assumptions used in deriving our framework.
Here we examine these assumptions more systematically.
We distinguish two questions. 
First, how robust is the structural decomposition in \cref{eq:score} to common changes in optimization and model architecture?
Second, given this structure, under what training regimes does the forward-computable score faithfully approximate the exact local interaction? The latter reveals a particularly interesting behavior: interaction magnitude is reliable even at a cold start, whereas signed fidelity can initially fail and then recover rapidly after only a small amount of in-distribution adaptation. 
Together, these results characterize the practical operating regime of our approximation and clarify which conclusions in Section~\ref{sec:attribution}--\ref{sec:plasticity} depend on which aspects of its fidelity.

\subsection{How Robust Is the Structure of \cref{eq:score}}
\label{app:diagnose:01}

\textbf{The \texttt{CH1}+\texttt{CH2} structure is robust to common optimizer choices.}
\cref{eq:score} is derived under SGD, but introducing an adaptive optimizer does not remove the two-channel decomposition.
A parameter-wise preconditioner simply reweights the corresponding parameter-space interaction, while \texttt{CH1} remains the direct force-alignment channel and \texttt{CH2} remains the backbone-mediated channel coupled through the shared readout. 
Such reweighting can change individual scores and, consequently, their ranking.
By contrast, the decoupled weight-decay term in AdamW is independent of the selected updating example at a fixed model state, and therefore does not affect candidate ranking. 
We provide the corresponding derivation in Appendix~\ref{app:proofs:adamw}.

More generally, the decomposition follows from the basic residual-flow structure rather than a particular Transformer implementation.
Standard single-stream residual architectures retain the same two-channel form, while architectures with multiple residual streams, such as Hyper-Connections \citep{zhu2025hyperconnections} and the four-branch Gated Residual design in \texttt{Qwen3.8-Next} \citep{qwen2026design}, require the corresponding residual paths to be incorporated explicitly.
Thus, architectural changes modify the path structure of \texttt{CH2} rather than the underlying force-kernel-force view.

\textbf{Tied embeddings introduce additive coupling terms.}
Many language models tie the output readout to the input embedding matrix. 
In this case, the shared parameter receives gradients from both roles. 
Writing the gradient on the tied matrix as \(G=G^{\mathrm{out}}+G^{\mathrm{in}}\), its interaction contains
\[
    \langle G_o, G_u \rangle = \langle G_o^\text{out}, G_u^\text{out} \rangle + \langle G_o^\text{out}, G_u^\text{in} \rangle
                              +\langle G_o^\text{in}, G_u^\text{out} \rangle + \langle G_o^\text{in}, G_u^\text{in} \rangle.
\]
The first term is already captured by the standard readout contribution, while the remaining terms arise from tying the input and output parameterizations. 
Thus, tied embeddings add corrections rather than replacing the \texttt{CH1}+\texttt{CH2} structure.
Across the settings we tested, explicitly including these terms changes the resulting scores only modestly, so we use the simpler \cref{eq:score} throughout the main experiments. 
We leave a more complete treatment of tied parameterizations to future work.

\subsection{When Is the Approximation Numerically Reliable?}
\label{app:diagnose:02}

\begin{figure}[htbp]
    \centering
    \begin{subfigure}[b]{0.32\textwidth}
        \centering
        \includegraphics[width=\textwidth]{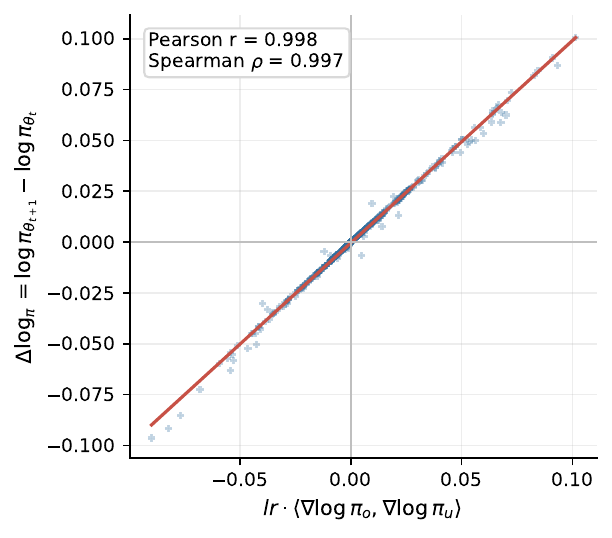}
        \caption{$\Delta\log\pi$ vs. first-order influence}
        \label{fig:sub1}
    \end{subfigure}
    \hfill
    \begin{subfigure}[b]{0.32\textwidth}
        \centering
        \includegraphics[width=\textwidth]{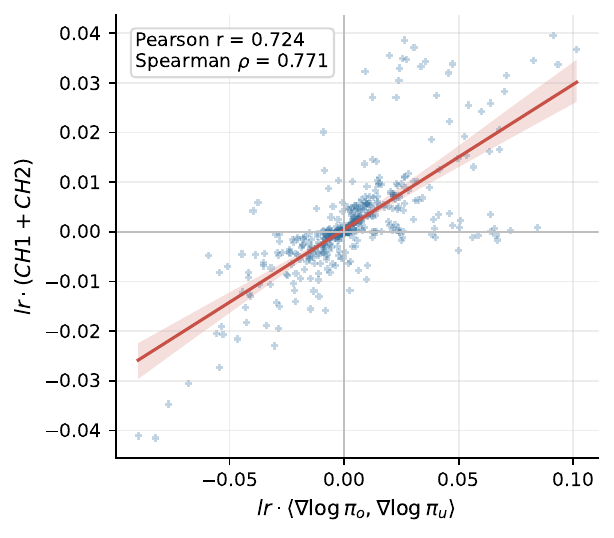}
        \caption{Eq.\ref{eq:score} vs. first-order influence}
        \label{fig:sub2}
    \end{subfigure}
    \hfill
    \begin{subfigure}[b]{0.32\textwidth}
        \centering
        \includegraphics[width=\textwidth]{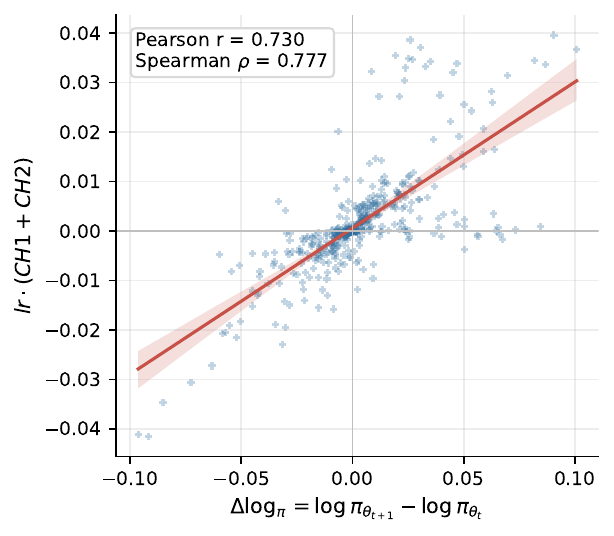}
        \caption{Eq.\ref{eq:score} vs. $\Delta\log\pi$}
        \label{fig:sub3}
    \end{subfigure}
    \caption{Empirical validation and geometry of the two-channel approximation.
            (a–c) Token-level validation over sampled \texttt{GSM8K} updating and \texttt{MMLU} observing token pairs, comparing the measured one-step log-probability change, the exact first-order gradient interaction, and our forward-computable approximation at an early in-distribution checkpoint obtained after a brief adaptation on the updating distribution. Correlations are reported in each panel.
            }
    \label{fig:validation}
\end{figure}

\cref{eq:score} is obtained through several approximations with different numerical consequences.
The main ones are the first-order expansion of the observed log probability, the separation of the direct readout contribution, and the approximation of backbone propagation through the residual flow. 
\cref{fig:validation} shows that their composition is accurate in the operating regime used throughout the paper. 
We now isolate these approximations and characterize where fidelity is gained or lost.

\begin{figure}[t]
    \centering
    \begin{subfigure}[t]{0.48\linewidth}
        \centering
        \includegraphics[width=\linewidth]{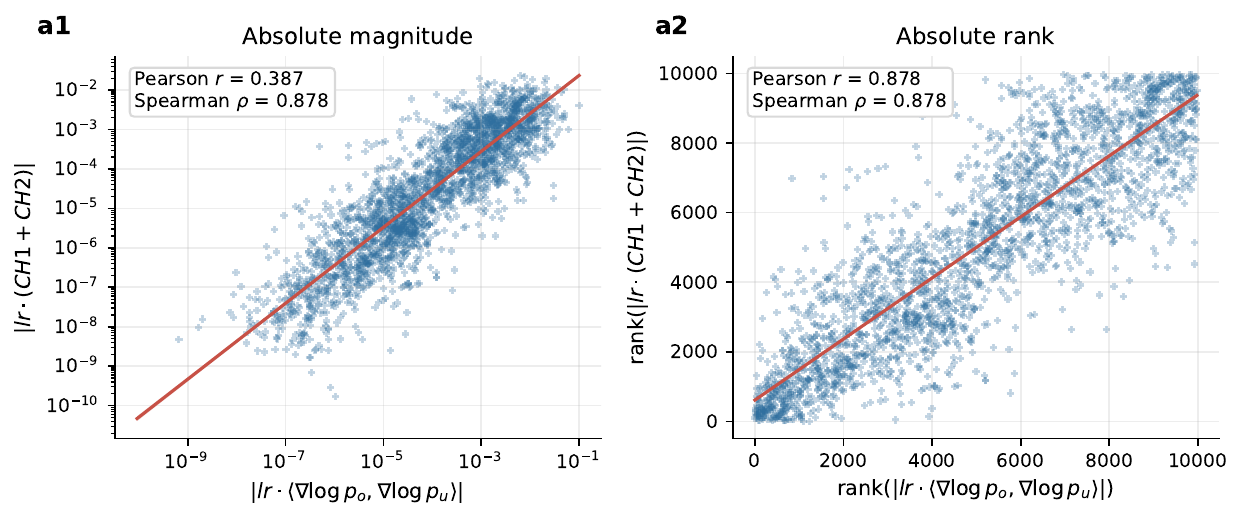}
        \caption{Unsigned approximation quality.}
    \end{subfigure}
    \hfill
    \begin{subfigure}[t]{0.48\linewidth}
        \centering
        \includegraphics[width=\linewidth]{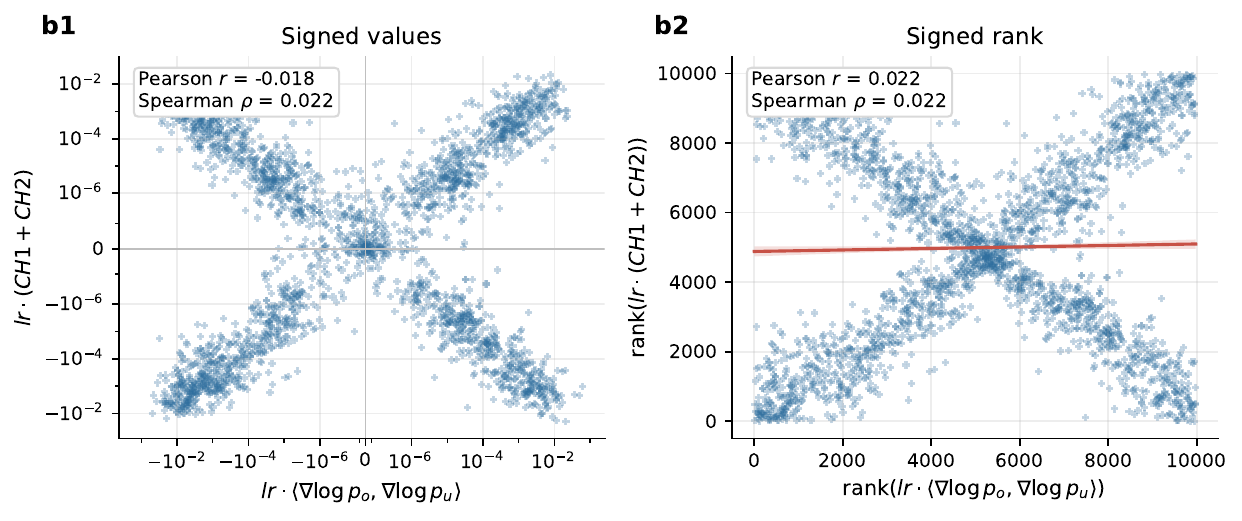}
        \caption{Signed approximation failure.}
    \end{subfigure}
    \caption{Cold start: magnitude is preserved more reliably.
            (a) The approximation tracks the absolute gradient interaction well in both value and rank. (b) In the signed setting, the characteristic X-shaped pattern reveals frequent sign flips despite preserved magnitude structure. The same behavior is observed across multiple models and dataset pairs; one representative setting is shown here.
            (\texttt{GSM8K} as $u$-side and \texttt{MMLU} as $o$-side).}
    \label{app:fig:scatter_x}
\end{figure}

\begin{figure}[t]
    \centering
    \includegraphics[width=0.8\linewidth]{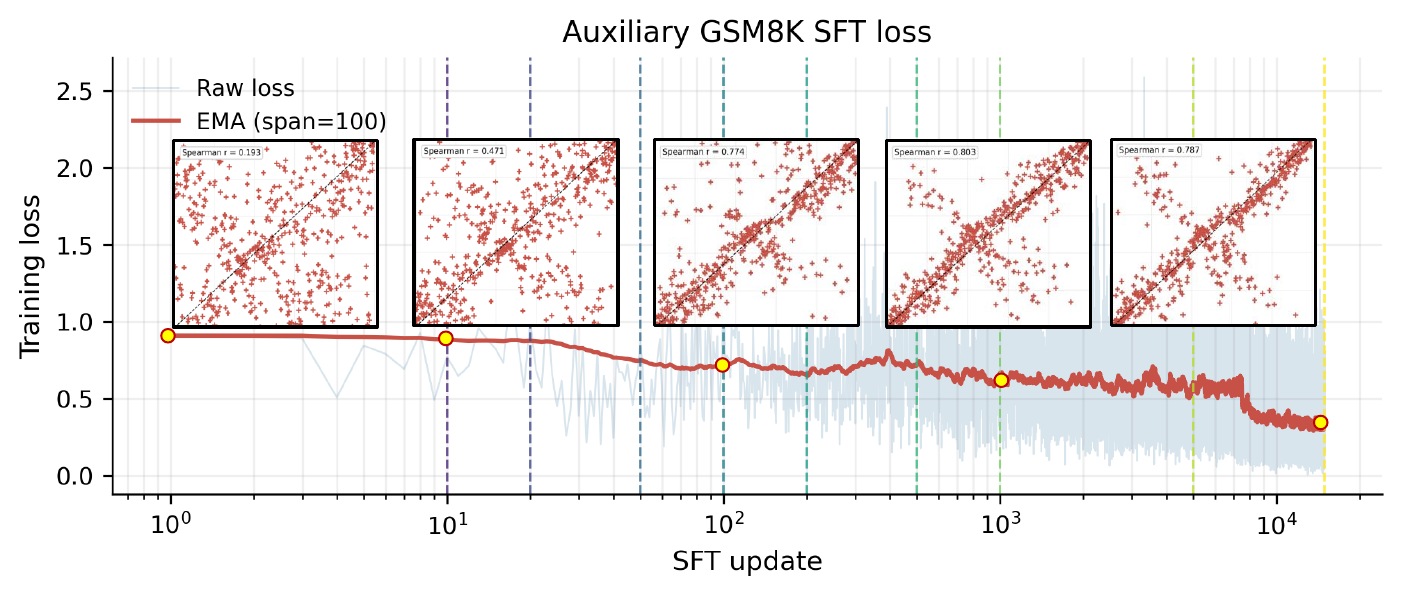}
    \caption{Signed approximation quality recovers within the first 1\% of updates. 
            Auxiliary SFT experiment on \texttt{GSM8K} for 2 epochs; insets show scatter plots of the exact first-order interaction against \cref{eq:score} at the marked checkpoints. At initialization the scatter also exhibits the characteristic X-shape, indicating frequent sign flips despite preserved magnitude structure. The signed geometry aligns rapidly and is largely recovered before the training loss shows systematic improvement. (\texttt{GSM8K} as $u$ and \texttt{Dolly} as $o$). Similar trends across different models and update--observation pairs.}
    \label{fig:app:loss_and_scatter}
\end{figure}

\begin{figure}[t]
    \centering
    \includegraphics[width=0.8\linewidth]{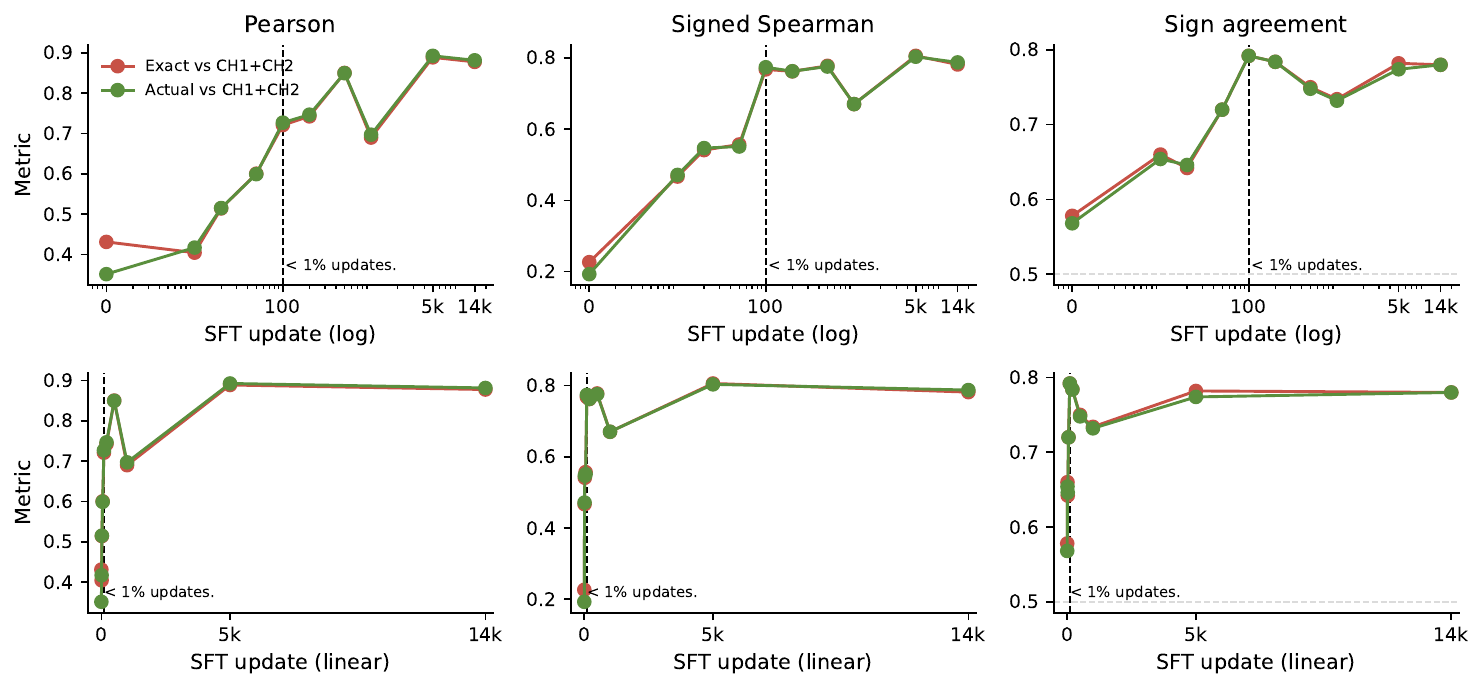}
    \caption{Signed metrics improve early and remain stable thereafter. Pearson correlation, signed Spearman correlation, and sign agreement between \cref{eq:score} and both the exact first-order interaction (``Exact'') and the measured log-probability change (``Actual''), evaluated at checkpoints along the auxiliary SFT run. Top: log-scaled x-axis; bottom: linear x-axis. Most of the improvement occurs within the first 100 updates, below 1\% of the run (dashed line), after which all three metrics stay flat through 14k updates. The two curves nearly coincide, indicating that the residual error comes from the structural approximations rather than the first-order expansion.}
    \label{fig:app:step1plus_checkpoint_metric_comparison}
\end{figure}

\begin{figure}[t]
    \centering
    \includegraphics[width=0.8\linewidth]{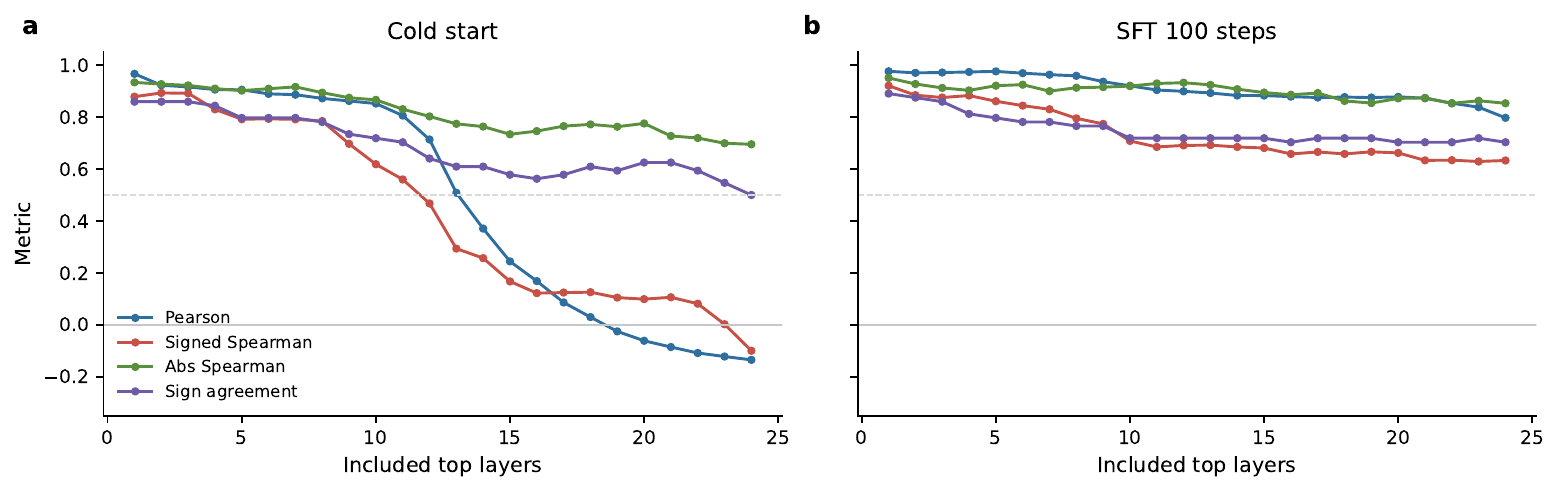}
    \caption{A brief warm start restores signed fidelity across depth. Approximation quality for \texttt{CH2} as lower layers are progressively included, starting from the output. (a) At initialization, signed metrics degrade sharply once middle layers enter the sum, reaching chance-level sign agreement and near-zero signed correlation when all 24 layers are aggregated, while absolute Spearman correlation remains around 0.7. (b) After 100 auxiliary SFT updates, the same aggregation retains signed Spearman above 0.6 and sign agreement near 0.7 at full depth. Dashed line marks chance-level sign agreement (0.5). Magnitude structure is thus preserved regardless of depth, whereas signed fidelity at depth requires a warm start.}
    \label{fig:app:layer_accuracy}
    \vspace{-10pt}
\end{figure}

\textbf{The first-order expansion contributes little error.}
Across the fine-tuning trajectory, the exact first-order gradient interaction remains extremely close to the measured one-step change in log probability.
This is visible in \cref{fig:app:step1plus_checkpoint_metric_comparison}, where the curves comparing \cref{eq:score} with the exact first-order interaction and with the measured change nearly coincide. 
Thus, the dominant approximation error studied below does not arise from the first-order Taylor expansion.

\textbf{\texttt{CH1} is essentially exact under readout-only updates.}
Freezing the backbone and updating only the readout yields nearly perfect agreement between \texttt{CH1} and the measured log-probability change (Pearson/Spearman \(\approx1.0\), MAE \(=2.2\times10^{-9}\)).
The nontrivial approximation error therefore arises primarily from \texttt{CH2} and residual-backbone propagation.

\textbf{Magnitude is reliable even at a cold start.}
At initialization, \texttt{CH2} exhibits a striking asymmetry between magnitude and sign.
As shown in \cref{app:fig:scatter_x}-(a), the approximation preserves the absolute interaction remarkably well in both value and rank.
In the representative setting shown here, absolute-value Spearman correlation reaches \(0.878\). 
However, the signed scatter in \cref{app:fig:scatter_x}-(b) forms a characteristic X-shaped pattern: many interactions have approximately correct magnitudes but reversed signs, reducing signed rank correlation to nearly zero. 
We observe the same qualitative behavior across multiple models and dataset pairs.

\textbf{Signed fidelity recovers rapidly and remains stable.}
The cold-start mismatch is highly transient. 
We perform an auxiliary SFT (same settings with most of our downstream SFT experiments, only batch-size is one) on data drawn from the updating distribution and recompute the score throughout training. 
As shown in \cref{fig:app:loss_and_scatter} and \ref{fig:app:step1plus_checkpoint_metric_comparison}, Pearson correlation, signed Spearman correlation, and sign agreement all improve sharply during the first \(\sim100\) updates, less than \(1\%\) of the full run, and remain stable thereafter. 
In contrast, the unsigned correlation is already high at initialization. 
The recovery therefore does not require convergence or substantial task learning. 
In our experiments, a brief adaptation is sufficient to enter the stable regime.

\textbf{Cold-start sign error accumulates with residual depth.}
\cref{fig:app:layer_accuracy} localizes this failure by progressively aggregating \texttt{CH2} contributions from the output side toward earlier Transformer layers. 
At a cold start, shallow approximations remain accurate, but signed Pearson correlation, signed Spearman correlation, and sign agreement deteriorate as deeper layers enter the sum. 
With all 24 layers included, the signed metrics approach chance even though absolute-rank correlation remains around \(0.7\).
After only 100 auxiliary SFT updates, the same aggregation remains reliable across the full depth of the network.
This suggests that individual local contributions are not the main source of failure; rather, signed error emerges when many residual paths are composed and partially cancel.

\textbf{Large interactions are more sign-stable.}
Signed errors are also concentrated among weaker interactions. 
\cref{app:fig:bin_accuracy} bins token pairs by the magnitude of the exact interaction. 
At a cold start, sign agreement remains near chance over much of the distribution, while the strongest interactions are already more reliable. After the brief warm start, sign agreement rises substantially and reaches around \(0.9\) for large-magnitude interactions.
Thus, when signed predictions must be used near initialization, confidence can be improved by restricting attention to sufficiently strong interactions.

\begin{wrapfigure}{r}{0.39\linewidth}
    \centering
    \vspace{-5mm}
    \includegraphics[width=\linewidth]{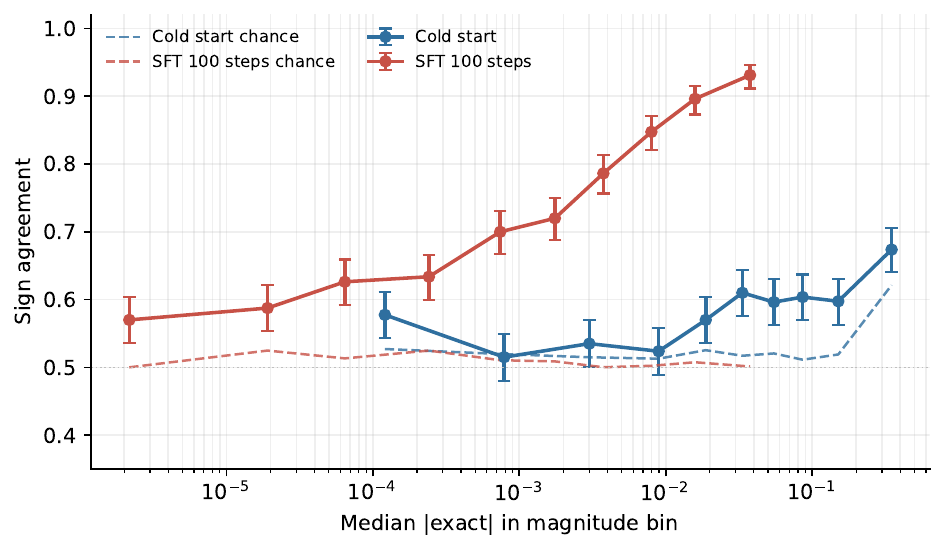}
    \vspace{-5mm}
    \caption{Sign agreement as a function of interaction magnitude before and after a brief warm start.}
    \label{app:fig:bin_accuracy}
    \vspace{-4mm}
\end{wrapfigure}

\subsection{Why Do We Stop at the Identity Path?}
\label{app:diagnose:03}
Our approximation retains only the first-order residual path \(J_{\ell}^{(1)}\), which is the key to keeping the resulting score forward-computable.
Under the same approximation \(f_\ell(h)\approx M_\ell h\), \(J_{\ell}^{(1)}=B_\ell\) contains only the identity residual propagation and therefore reduces to interactions between forward representations.
Moving beyond this path, however, rapidly changes the computational structure of the approximation.

Consider the \(k\)-th block from the output.
Its residual expansion contains one \(J^{(1)}\) path, \(k-1\) different \(J^{(2)}\) paths, \(\binom{k-1}{2}\) different \(J^{(3)}\) paths, \(\binom{k-1}{3}\) different \(J^{(4)}\) paths, and so forth.
More generally, the number of \(J^{(r)}\) paths grows as
\(
    \binom{k-1}{r-1}.
\)
However, even the second-order term,
\[
    J_{\ell}^{(2)}
    =
    \sum_{k>\ell} A_k B_\ell
    \approx
    \sum_{k>\ell} W_k B_\ell,
\]
requires explicitly introducing layer-dependent transformations, while higher-order terms involve products of multiple such transformations along different residual paths.
Consequently, higher-order paths introduce both combinatorial path growth and explicit layer-dependent propagation, progressively eroding the computational advantage over direct gradient-inner-product evaluation.
As more orders are retained, the advantage over direct gradient-inner-product computation quickly diminishes.

We therefore retain \(J^{(1)}\) as a deliberate computational choice.
It preserves a particularly simple structure based on forward quantities, making the score inexpensive to evaluate and recompute throughout training.
This scalability is important for our goal of using the score as an online probe of learning dynamics rather than as an exact reconstruction of the full gradient interaction.

\subsection{Practical Implications and Usage Guidelines}
\label{app:diagnose:04}

The results above clarify how the score should be used in practice.
Quantities that depend primarily on interaction magnitude can be evaluated reliably even near a cold start, whereas signed interactions are more faithful after a brief in-distribution adaptation.
Since our score is lightweight and forward-computable, an alternative is to recompute it periodically as training proceeds.
This is particularly natural in continual learning, where the model state and its local interaction geometry evolve together.
Unlike the earlier eNTK-based framework \citep{ren2025learning}, this strategy does not require the interaction geometry to remain approximately fixed over a long optimization trajectory.

\textbf{Data attribution.}
Data attribution is the application most directly affected by cold-start sign errors, since selecting examples by positive influence requires reliable signed ranking.
Two simple strategies are available.
First, when only interaction strength is required, examples can be ranked by absolute influence, whose fidelity remains high even at initialization.
Second, signed attribution can be performed after a brief in-distribution warm start.
The adaptation data need not contain the exact examples subsequently scored; matching the updating distribution is sufficient in our experiments.
This is also compatible with the short warm-start stages commonly used by gradient-based methods.

\textbf{Forgetting.}
The erosion mechanism in \cref{sec:forgetting} describes weak interactions that accumulate throughout training.
Since the score can be refreshed along the trajectory, this analysis mainly operates in the stable regime identified above.
Collision depends more directly on strong negative interactions.
Its energy criterion, $E_u=\|g_u\|_2^2$, is computed directly from the output force and does not rely on the residual-flow approximation, although the predicted transmission of a negative force through \texttt{CH2} can be affected by cold-start sign errors.
For settings that require signed collision estimates near initialization, we therefore recommend either a brief warm start or online recomputation as training begins.

\textbf{Plasticity loss.}
The analysis in \cref{sec:plasticity} depends primarily on the magnitude and geometry of readout transmission rather than the signed \texttt{CH2} approximation.
Its main conclusions are therefore largely insensitive to the cold-start sign phenomenon.

\textbf{Cost of online evaluation.}
Computing \texttt{CH2} over all Transformer blocks costs $\mathcal{O}(ndL)$, where $n$ is the number of token interactions, $d$ the hidden dimension, and $L$ the number of layers.
The structural decomposition allows this cost to be reduced when only part of the network is relevant.
If only $\ell$ layers are updated or evaluated, the cost becomes $\mathcal{O}(nd\ell)$.
Top-$K$ approximations in vocabulary space can likewise reduce the cost of evaluating $\vw^\top g$.
These properties make periodic recomputation practical and allow the score to serve as an online probe of learning dynamics rather than a fixed approximation computed only at initialization.
For sequence-level scoring, the token-wise contributions are additive, so the observation-side forward quantities can be aggregated before scoring without changing the resulting sequence-level score.
Thus, caching need only retain the aggregated per-example quantities rather than all token-level hidden states.

\textbf{Open questions.}
Two aspects of this transition remain unexplained. 
First, why does signed error accumulate so strongly across residual depth while interaction magnitude remains stable?
Second, why does a very small amount of in-distribution adaptation restore the signed geometry? 
One possible explanation is that a model arriving from pretraining or instruction tuning is locally adapted to its previous data distribution, while a new, relatively off-policy supervision distribution initially induces poorly aligned residual updates. 
A short adaptation phase may reorganize these local directions before substantial task learning occurs. 
This behavior may also relate to the early plateaus often observed in SFT loss curves.
We leave a formal characterization of this to future work.

\subsection{Representation overlap mainly modulates interaction magnitude}
\label{app:diagnose:hh}
Several analyses in the main text rely on the observation that hidden representations in Transformer models often exhibit a narrow-cone geometry \citep{ethayarajh2019contextual, gao2018representation}. 
We therefore empirically verify this property across different models and layers.
Specifically, we randomly select non-overlapping examples as update and observing examples, and calculate their token-level pairwise similarities.
As shown in \cref{fig:app:hh}, pairwise representation overlaps are predominantly positive, with most hidden states lying within a relatively narrow angular region rather than being distributed isotropically. 
This suggests that the representation-overlap term in our decomposition primarily modulates the magnitude of an interaction, while its sign is more strongly determined by the force-side geometry. 
We use this observation repeatedly in the subsequent analyses.

\begin{figure}[htbp]
    \centering
    \begin{subfigure}[b]{0.8\textwidth}
        \centering
        \includegraphics[width=\textwidth]{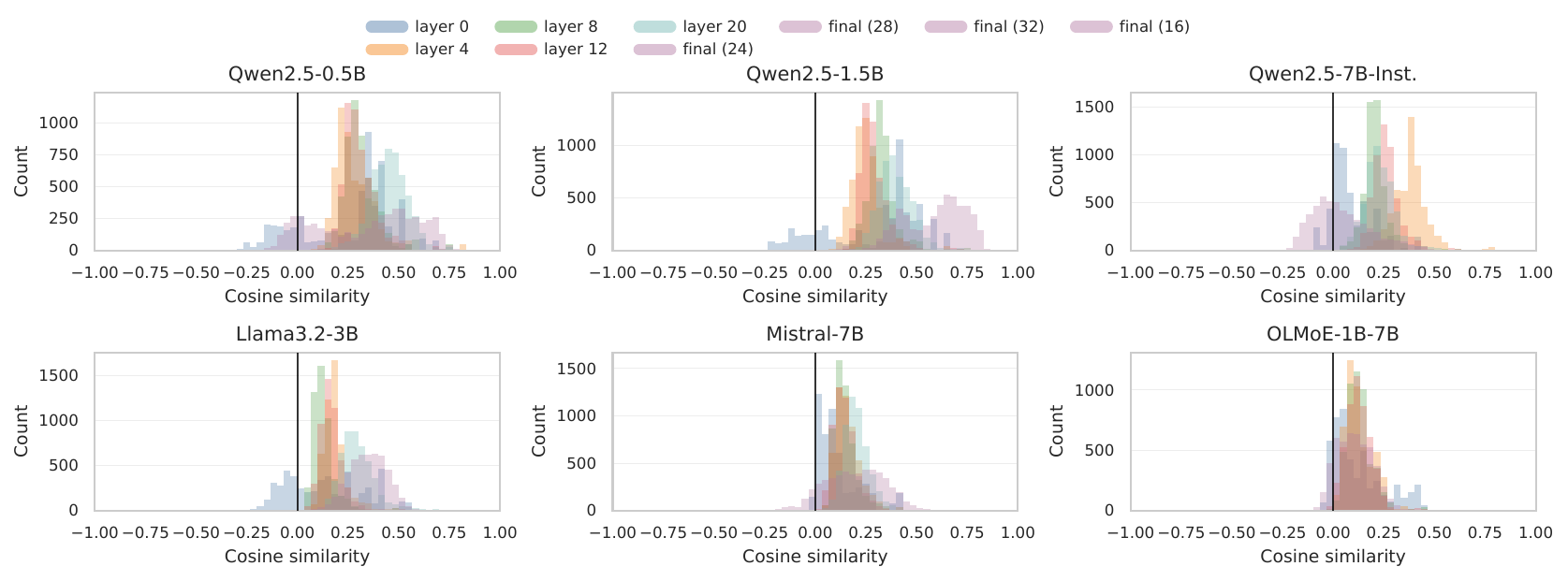} 
        \caption{Layer-wise distributions of pairwise hidden-state cosine similarity.}
    \end{subfigure}
    
    \vspace{1em} 
    
    \begin{subfigure}[b]{0.8\textwidth}
        \centering
        \includegraphics[width=\textwidth]{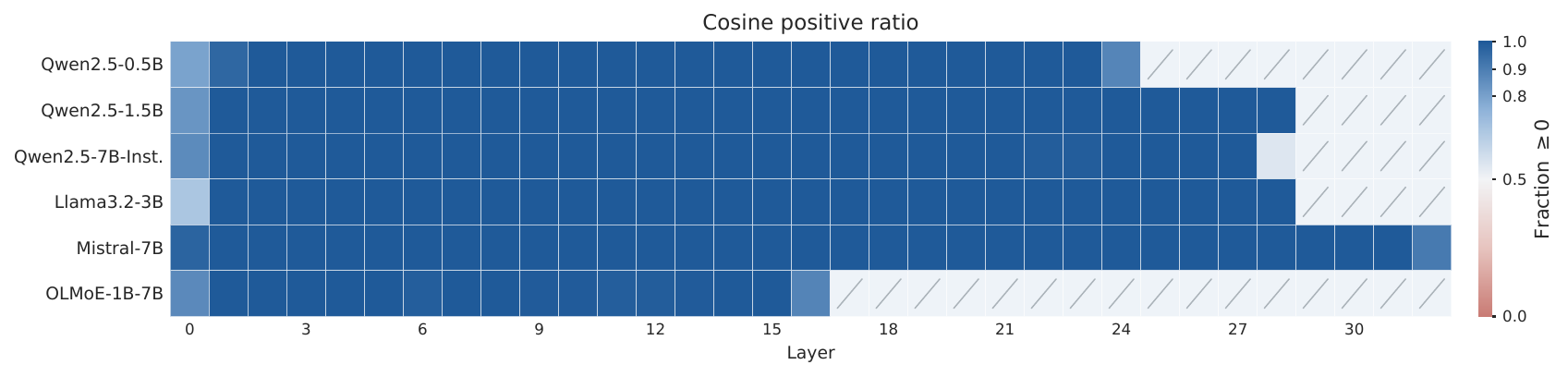} 
        \caption{Fraction of positive pairwise cosine similarities across layers.}
    \end{subfigure}
    \caption{Narrow-cone geometry of hidden representations across models.
            We compute token-level pairwise hidden-state alignment between 10 randomly sampled \texttt{GSM8K} examples and 10 \texttt{MMLU} examples, leading to roughly 5k $u$-$o$ pairs. For this analysis, we randomly sample approximately 5k token pairs from the full set of token pairs induced by the selected example pairs. We evaluate \texttt{Qwen} models across multiple scales, together with representative dense \texttt{Llama}, \texttt{Mistral} and MoE \texttt{OLMoE} architectures. 
            (a) Distributions of pairwise cosine similarity $\langle \vh_i,\vh_j\rangle/\|\vh_i\|_2\|\vh_j\|$ at representative layers, with the vertical line marking zero. (b) Fraction of pairwise cosine similarities that are non-negative at each layer; blank cells denote unavailable layers. Positive alignment dominates across model families, scales, and architectures, supporting a pervasive narrow-cone structure in the residual representations.}
    \label{fig:app:hh}
\end{figure}

\section{More on Data Attribution}
\label{app:attribution}

This appendix provides additional experimental details for the data-attribution studies in \cref{sec:attribution}. 
We first give the full results contributing to \cref{tab:selection_main} in the main context, including the details of those experimental settings.
We then provide further details on the controlled cross-lingual experiment used to distinguish the information captured by \texttt{CH1} and \texttt{CH2}, followed by the training setup for subset-based fine-tuning.

\subsection{Experimental Details for Data Selection}
\label{app:attribution:exp_details}
\textbf{Controlled influential-data identification.}
We follow prior data-attribution benchmarks and consider three settings: sentence transformation, mathematical problems without reasoning annotations, and mathematical problems with reasoning annotations \citep{deng2026value}. 
Each dataset contains 10 task classes with 100 examples per class, divided into 90 candidate examples and 10 downstream examples. 
The classes correspond to different transformation types for sentence transformation and different arithmetic operations for the mathematical tasks.
For each downstream example, candidates from the same class are treated as relevant examples.
We rank all candidates using each attribution score and evaluate the resulting rankings using AUC and top-$K$ recall.

We compare against three representative families of attribution methods. 
\texttt{Embedding} ranks examples using cosine similarity between their mean-pooled final-layer representations.
\texttt{DataInf} \citep{kwon2024datainf} and \texttt{HyperINF} \citep{zhou2024hyperinf} are influence-function-based methods that approximate the inverse-curvature term using different structural assumptions. 
\texttt{LESS} \citep{xia2024less} instead uses gradient alignment computed in a LoRA parameter subspace, providing a lower-cost approximation to full-parameter gradient interaction. 
In contrast, our \texttt{CH1} and \texttt{CH1+2} scores are computed directly from the forward quantities defined in Eq.~\ref{eq:score}, without per-example backpropagation.
\vspace{-10pt}
\begin{table}[h]
\centering
\caption{
Influential-data identification and efficiency on \texttt{Qwen2.5-1.5B} across three controlled benchmarks (5 seeds). Backward indicates whether per-example backpropagation is required, and Complexity reports the asymptotic attribution cost. Here, \(n\) is the number of candidate-target pairs, \(d\) the hidden dimension, \(d_{\mathrm{in}}\) the layer-input dimension, \(L\) the number of layers, and \(d_{\mathrm{proj}}\) the projected-gradient dimension used by LESS. Complexity denotes the pairwise scoring cost after caching per-example forward quantities; one-time feature construction costs are excluded.
}
\label{tab:ch1_res}
\resizebox{\textwidth}{!}{
\begin{tabular}{c*{4}{cc}}
\toprule
\multirow{2}{*}{\textbf{Method}}
& \multicolumn{2}{c}{\textbf{Sentence transformations}}
& \multicolumn{2}{c}{\textbf{Math problems (w/o reasoning)}}
& \multicolumn{2}{c}{\textbf{Math problems (w/ reasoning)}} 
& \multicolumn{2}{c}{\textbf{Efficiency}}\\
\cmidrule(lr){2-3}
\cmidrule(lr){4-5}
\cmidrule(lr){6-7}
\cmidrule(lr){8-9}
& AUC $\uparrow$
& Recall $\uparrow$
& AUC $\uparrow$
& Recall $\uparrow$
& AUC $\uparrow$
& Recall $\uparrow$ 
& Backward
& Complexity \\
\midrule

\texttt{Random}
& $0.50 $
& $0.10 $
& $0.50 $
& $0.10 $
& $0.50 $
& $0.10 $ 
& $\times$
& $\mathcal{O}(n)$\\

\texttt{Embd}
& $0.546 \pm 0.306$
& $0.148 \pm 0.205$
& $0.555 \pm 0.298$
& $0.146 \pm 0.295$
& $0.560 \pm 0.310$
& $0.198 \pm 0.311$ 
& $\times$
& $\mathcal{O}(nd)$\\

\dfv
& $0.981 \pm 0.019$
& $0.826 \pm 0.121$
& $0.985 \pm 0.032$
& $0.878 \pm 0.154$
& $0.987 \pm 0.030$
& $0.892 \pm 0.155$ 
& $\checkmark$
& $\mathcal{O}(nd_{in}dL)$\\

\hypi
& ${0.993 \pm 0.013}$
& ${0.934 \pm 0.063}$
& ${0.986 \pm 0.024}$
& ${0.942 \pm 0.080}$
& ${0.988 \pm 0.023}$
& ${0.950 \pm 0.060}$ 
& $\checkmark$
& $\mathcal{O}(nd^3L)$\\

\texttt{LESS}
& $0.785 \pm 0.096$
& $0.370 \pm 0.139$
& $0.835 \pm 0.235$
& $0.592 \pm 0.291$
& $0.829 \pm 0.172$
& $0.524 \pm 0.350$ 
& $\checkmark$
& $\mathcal{O}(nd_\text{proj})$\\

 \texttt{CH1}
&  $\mathbf{1.000 \pm 0.001}$
&  $\mathbf{0.989 \pm 0.025}$
&  $\mathbf{1.000 \pm 0.000}$
&  ${0.998 \pm 0.011}$
&  $\mathbf{1.000 \pm 0.000}$
&  ${0.998 \pm 0.008}$ 
& $\times$
& $\mathcal{O}(nd)$\\

 \texttt{CH1+2}
&  $ {0.998 \pm 0.006}$
&  $ {0.963 \pm 0.044}$
&  $\mathbf{1.000 \pm 0.000}$
&  $\mathbf{0.999 \pm 0.006}$
&  $\mathbf{1.000 \pm 0.000}$
&  $\mathbf{1.000 \pm 0.000}$ 
& $\times$
& $\mathcal{O}(ndL)$\\

\bottomrule
\end{tabular}%
}
\vspace{-4mm}
\end{table}

\subsection{Controlled Cross-lingual Attribution}
\label{app:attribution:control_multi}

The cross-lingual experiment in \cref{tab:selection_main} is designed to reduce direct vocabulary overlap while preserving the underlying correspondence between examples. 
We consider two domains. For mathematical reasoning, we sample the first 50 training examples from \texttt{GSM8K}. 
For medical knowledge, we sample the first 10 test examples from each of four \texttt{MMLU} subjects: College Biology, Clinical Knowledge, College Medicine, and Medical Genetics. 
Each English example is translated into Chinese, French, Korean, and Spanish, producing four semantically corresponding candidates with different forms.

For each English target, all translated examples from the same domain are pooled into the candidate set. 
Each attribution method ranks this pool, and we retrieve the top four candidates. 
Since every English source has exactly four translated counterparts, attribution accuracy is defined as the fraction of the retrieved examples that originate from the same English source as the target. Under random ranking, the expected accuracy is \(1/N\), where \(N\) is the number of English source examples in the corresponding domain.
\begin{table}[h]
\centering
\caption{Cross-lingual data attribution accuracy on \texttt{MMLU} and \texttt{GSM8K} across different models.}
\label{tab:data_attr_crosslingual}
\resizebox{\linewidth}{!}{
\begin{tabular}{ccccccc}
\toprule
\multirow{2}{*}{Method} 
& \multicolumn{2}{c}{Qwen2.5 1.5B-Instruction} 
& \multicolumn{2}{c}{Qwen3-4B-Instruction} 
& \multicolumn{2}{c}{Llama3.2-3B-Instruction}  \\
\cmidrule(lr){2-3} \cmidrule(lr){4-5} \cmidrule(lr){6-7}
& MMLU Acc. & GSM8K Acc. 
& MMLU Acc. & GSM8K Acc. 
& MMLU Acc. & GSM8K Acc. \\
\midrule

\texttt{Random} & 0.025 & 0.020 & 0.025 & 0.020 & 0.025 & 0.020 \\
\texttt{Embd} & 0.125 & 0.085 & 0.131 & 0.150 & 0.475 & 0.370 \\
\hypi & \textbf{0.694} & 0.370 & 0.356 & 0.450 & 0.606 & 0.645 \\
\dfv & 0.419 & 0.070 & 0.138 & 0.115 & 0.313 & 0.175 \\
\texttt{LESS}  & 0.469 & 0.140 & 0.188 & 0.140 & 0.319 & 0.355 \\
 \texttt{CH1} & 0.488 & 0.445 & 0.450 & 0.689 & 0.506 & 0.830 \\
 \texttt{CH1+2} & 0.625 & \textbf{0.600} & \textbf{0.525} & \textbf{0.745} & \textbf{0.625} & \textbf{0.930} \\
\bottomrule
\end{tabular}
}
\end{table}

\textbf{Reduced prediction-vocabulary overlap.}
To verify that the multilingual construction indeed weakens the direct alignment available to \texttt{CH1}, we measure the overlap between the top-K prediction tokens of each English example and its translations. 
As shown in \cref{app:fig:data_overlap}, the overlap remains low across languages, with the average Jaccard similarity below $0.15$. 
The experiment therefore provides a controlled setting in which semantic correspondence is preserved while direct prediction-vocabulary overlap is substantially reduced. 
This makes it possible to test whether the additional coupling captured by \texttt{CH2} contributes attribution information beyond direct token alignment.

\textbf{Multilingual agent-memory retrieval.}
We additionally evaluate the same scores in an external-memory selection setting.
For each model, we run inference on the first 200 \texttt{GSM8K} examples and retain those that are answered incorrectly, yielding 62 hard queries for \texttt{Qwen2.5-1.5B-Instruct}, 51 for \texttt{Qwen3-1.7B}, and 36 for \texttt{Llama3.2-3B-Instruct}. 
For every hard query, its corresponding question-answer pair is translated into Chinese, French, Korean, and Spanish, and all translated examples are pooled into a shared multilingual memory bank.

\begin{wrapfigure}{r}{0.3\linewidth}
    \centering
    \vspace{-5mm}
    \includegraphics[width=\linewidth]{figures/medical_topk_token_overlap_prediction_union_jaccard_by_topk_square_log2.png}
    \vspace{-5mm}
    \caption{Top-K prediction token overlap between English and others.}
    \label{app:fig:data_overlap}
    \vspace{-4mm}
\end{wrapfigure}

At evaluation time, the original English question is used as the query.
Each method ranks the entries in the memory bank and retrieves the highest-scoring question-answer pair, which is then provided as additional context when answering the original query. 
We report two metrics. \textbf{Top-1 retrieval accuracy} measures whether the retrieved memory originates from the same underlying English example as the query, while \textbf{answer accuracy} measures whether the model answers the original question correctly after conditioning on the retrieved memory.

Random retrieval serves as a useful reference because even an unrelated \texttt{GSM8K} example can provide a valid reasoning and answer-format demonstration. 
We therefore interpret improvements over random retrieval as the additional benefit of selecting query-relevant experiences.
Embedding retrieval uses mean-pooled hidden-state similarity, whereas our scores retain token-wise learning-dynamics interactions when ranking candidate memories.
\begin{table}[h]
\centering
\caption{Multilingual agent-memory retrieval results on \texttt{GSM8K}.}
\label{tab:data_attr_crosslingual_agent}
\resizebox{1\linewidth}{!}{
\begin{tabular}{ccccccccc}
\toprule
\multirow{2}{*}{Method}
& \multicolumn{2}{c}{Qwen2.5-1.5B-Instruction} & \multicolumn{2}{c}{Qwen3-1.7B} & \multicolumn{2}{c}{Llama3.2-1B-Instruction}  
& \multicolumn{2}{c}{Llama3.2-3B-Instruction} \\
\cmidrule(lr){2-3}
\cmidrule(lr){4-5}
\cmidrule(lr){6-7}
\cmidrule(lr){8-9}
& Top-1 Retrieval Acc.
& Answer Acc.
& Top-1 Retrieval Acc.
& Answer Acc. 
& Top-1 Retrieval Acc.
& Answer Acc. 
& Top-1 Retrieval Acc.
& Answer Acc. \\
\midrule
\texttt{Base}
& -- & 0.000
& -- & 0.000
& -- & 0.000
& -- & 0.000 \\

\texttt{Random}
& 0.016 & 0.389
& 0.000 & 0.373
& 0.010 & 0.137
& 0.028 & 0.500 \\

\texttt{Embd}
& 0.339 & 0.532
& 0.157 & 0.353
& 0.248 & 0.206
& 0.861 & 0.889 \\

 \texttt{CH1}
 &  \textbf{0.484} &  \textbf{0.597}
  &  \textbf{0.431} &  \textbf{0.569}
    &  0.765 &  0.363
 &  0.861 &  0.889 \\

 \texttt{CH1+2}
 & 0.468  & \textbf{0.597}
   &  {0.392} &  {0.549}
    &  \textbf{0.902} &  \textbf{0.382}
 &  \textbf{1.000} &  \textbf{0.944} \\
\bottomrule
\end{tabular}
}
\end{table}

\subsection{Fine-tuning with Selected Data}
\label{app:attribution:finetune}
We further evaluate whether the attribution rankings translate into useful training subsets by fine-tuning on selected \texttt{GSM8K} examples. 
For each model, we compute the attribution score over the \texttt{GSM8K} training set, rank the candidate examples, and retain either the top 1\%, top 5\% or top 10\% according to each selection method. 
We compare \texttt{CH1} and \texttt{CH1+2} against random selection and \texttt{LESS}, in terms of both accuracy and GPU hours in \cref{tab:gsm8k_5epoch}.
We compute the scores after a 100-step in-distribution warm start on the probing data, while downstream fine-tuning starts from the original base checkpoint.

All selected subsets are used to fine-tune the corresponding base model for five epochs with a learning rate of \(1\times10^{-5}\). 
The same training setup is used across selection methods for each model.
For the random baseline, we independently sample the corresponding fraction of the training set with three random seeds and report the average downstream accuracy. 
The base-model \texttt{GSM8K} accuracy and training-set perplexity are also reported in the table to provide context for the model's initial fit to the target distribution.
As discussed in Section~\ref{sec:attribution}, the relative advantage of \texttt{CH1} and \texttt{CH1+2} varies across model-budget settings, with \texttt{CH2} sometimes providing additional gains.
We treat this model-dependent difference as an empirical observation rather than a general relationship between model scale or perplexity and the relative advantage of the two channels.

\begin{table*}[h]
\centering
\small
\setlength{\tabcolsep}{4.2pt}
\caption{
Closed-loop results.
Test accuracy after full-parameter fine-tuning on the top 1\%, 5\%, or 10\% selected examples. We reserve 500 examples from the \texttt{GSM8K} training split as a fixed probing set for score computation, leaving 6,973 examples as the candidate pool for selection and downstream fine-tuning.
Accuracy is reported on the $[0,1]$ scale as mean $\pm$ standard deviation
over three training seeds.
The corresponding base-model accuracies are 0.6194 for \texttt{Qwen2.5-1.5B}
and 0.1069 for \texttt{Llama-3.2-3B}.
GPU h denotes aggregate A100 GPU-hours for score computation, including all method-specific setup such as warm-up and gradient extraction; downstream fine-tuning is excluded.
}
\label{tab:gsm8k_5epoch}
\resizebox{\textwidth}{!}{
\begin{tabular}{ccccc|cccc}
\toprule
& \multicolumn{4}{c|}{Qwen2.5-1.5B}
& \multicolumn{4}{c}{Llama-3.2-3B} \\
\cmidrule(lr){2-5}
\cmidrule(lr){6-9}
Method
& 1\% (69) & 5\% (348) & 10\% (697) & GPU h
& 1\% (69) & 5\% (348) & 10\% (697) & GPU h \\
\midrule

\texttt{Random}
& 0.646$\pm$0.004
& 0.610$\pm$0.007
& 0.608$\pm$0.005
& --
& 0.2290$\pm$0.010
& 0.295$\pm$0.012
& 0.326$\pm$0.016
& -- \\

\texttt{LESS}
& 0.639$\pm$0.007
& 0.587$\pm$0.004
& 0.597$\pm$0.007
& 2.65
& \textbf{0.248$\pm$0.005}
& 0.301$\pm$0.005
& 0.330$\pm$0.007
& 4.16 \\

\texttt{CH1}
& 0.652$\pm$0.007
& 0.628$\pm$0.002
& \textbf{0.631$\pm$0.004}
& 0.09
& 0.202$\pm$0.011
& 0.307$\pm$0.007
& \textbf{0.348$\pm$0.007}
& 0.12 \\

\texttt{CH1+2}
& \textbf{0.654$\pm$0.013}
& \textbf{0.636$\pm$0.007}
& 0.623$\pm$0.007
& 0.87
& 0.202$\pm$0.002
& \textbf{0.313$\pm$0.006}
& 0.341$\pm$0.002
& 3.12 \\

\bottomrule
\end{tabular}
}
\end{table*}

\section{More On Forgetting}
\label{app:forgetting}

\subsection{Update Energy under Different Finetuning Objectives}
\label{app:forgetting:gkd_opd}

The collision analysis in \cref{sec:forgetting} also provides a common perspective on several finetuning objectives that are empirically more stable than standard off-policy SFT. 
Their mechanisms differ, but each reduces the chance of producing extremely large update forces.

\textbf{Generalized knowledge distillation} \citep[GKD,][]{agarwal2024policy}.
For GKD, the token-wise forward-KL objective can be written as
\[
    \mathcal{L}_\text{GKD}=\sum_u\mathsf{KL}(\pi_\text{teacher} || \pi_\theta)\propto -\sum_u\pi_\text{teacher}(\cdot\mid s_u)^\top\log\pi_\theta(\cdot\mid s_u).
\]
Compared with SFT, the one-hot target $\ve_{y_u}$ is replaced by the teacher distribution, giving
\[
    g_u^\text{GKD}=\pi_\text{teacher}(\cdot\mid s_u)-\pi_\theta(\cdot\mid s_u).
\]
Because the teacher target is generally softer than a one-hot label, the resulting force is less likely to enter the extreme-energy confident-conflict regime.

\textbf{On-policy distillation} \citep[OPD,][]{lu2025onpolicydistillation}.
OPD instead minimizes the reverse KL,
\[
    \mathcal{L}_\text{OPD}=\mathsf{KL}(\pi_\theta(\cdot\mid s_u) || \pi_\text{teacher}(\cdot\mid s_u))=\mathbb{E}_{y_u\sim\pi_\theta(\cdot\mid s_u)}[\log\pi_\theta(y_u\mid s_u)-\log\pi_\text{teacher}(y_u\mid s_u)].
\]
Define the log-ratio
\[
    A(y,s_u)\triangleq\log \frac{\pi_\theta(y\mid s_u)}{\pi_\text{teacher}(y\mid s_u)}.
\]
Treating the teacher as fixed, the gradient of the reverse KL can be written as
\[\begin{aligned}
\nabla_\theta \mathcal{L}_\text{OPD}
    &=\nabla_\theta\left(\sum_y{\pi_\theta}(y\mid s_u) A(y,s_u)\right)\\
    &=\sum_y\left(\pi_\theta(y\mid s_u) \nabla_\theta A(y,s_u) +  \mathsf{sg}(A(y,s_u)) \nabla_\theta \pi_\theta(y\mid s_u)\right)\\
    &=\sum_y\left( \pi_\theta(y\mid s_u) \nabla_\theta \log\pi_\theta(y\mid s_u) + \mathsf{sg}(A(y,s_u)) \nabla_\theta\pi_\theta(y\mid s_u)\right)\\
    &=\sum_y\left( \pi_\theta(y\mid s_u) \nabla_\theta \log\pi_\theta(y\mid s_u) +  \pi_\theta(y\mid s_u)\mathsf{sg}(A(y,s_u))\nabla_\theta\log\pi_\theta(y\mid s_u)\right)\\
    &=\mathbb{E}_{y\sim\pi_\theta(\cdot\mid s_u)}[\nabla_\theta\log\pi_\theta(y\mid s_u)] + \mathbb{E}_{y\sim\pi_\theta(\cdot\mid s_u)}[\mathsf{sg}(A(y,s_u))\nabla_\theta\log\pi_\theta(y\mid s_u)]\\
    &=\mathbb{E}_{y\sim\pi_\theta(\cdot\mid s_u)}[\mathsf{sg}(A(y,s_u))\nabla_\theta\log\pi_\theta(y\mid s_u)]
\end{aligned}\]
where the last equation comes from the fact that 
\[
\mathbb{E}_{y\sim\pi_\theta(\cdot\mid s_u)}[\nabla_\theta\log\pi_\theta(y\mid s_u)]=\nabla_\theta\sum_y\pi_\theta(y\mid s_u)=\nabla_\theta 1 = 0.
\]

Therefore, $\nabla_\theta \mathcal{L}_\text{OPD}=\mathbb{E}_{y\sim\pi_\theta(\cdot\mid s_u)}[\mathsf{sg}(A(y,s_u))\nabla_\theta\log\pi_\theta(y\mid s_u)]$, which has a similar form as the SFT loss, and the scalar $A(y,s_u)$ plays a role of effective learning rate (can be positive or negative).

Since gradient descent follows $-\nabla_\theta\mathcal{L}_\text{OPD}$, and our SFT force $g_u=\ve_{y_u}-\pi_\theta(\cdot\mid s_u)$ corresponds to the ascent direction of $\log\pi_\theta(y_u\mid s_u)$, the effective output-side force becomes
\[
    g_u^\text{OPD} = -A_u g_u.
\]
Accordingly, its effective token-wise energy is
\[
    E_u^\text{OPD} = \|g_u^\text{OPD}\|_2^2 = A_u^2\|g_u\|_2^2.
\]
The first factor and the second factor are controlled in different ways.
Because $y_u$ is sampled on-policy from $\pi_\theta$, tokens with small $\pi_\theta(y_u\mid s_u)$, and hence very large raw update energy $\|g_u\|_2^2$, are rarely sampled. 
This naturally suppresses many of the confident conflicts common in off-policy SFT. 
The remaining risk comes from the multiplier $A_u$: a large student-teacher likelihood ratio can amplify an otherwise moderate update. 
This provides a complementary explanation for why OPD becomes more sensitive when the student and teacher distributions are poorly matched \citep{li2026rethinking}.

\textbf{Label smoothing} \citep[LS,][]{muller2019does} also controls the same quantity more explicitly by replacing the one-hot target with $\ve_{y_u}\rightarrow (1-\epsilon)\ve_{y_u}+\epsilon\vq_{\mathrm{unif}}$,
where $\vq_{\mathrm{unif}} = \frac{1}{V}\mathbf{1}.$ is the uniform distribution over the vocabulary (or subset of it, as used in \cite{peng2025beyond}). 
The corresponding update force becomes
\[
    g_u^\text{LS} = (1-\epsilon)\ve_{y_u}+\epsilon\vq_{\mathrm{unif}} - \pi_\theta(\cdot\mid s_u).
\]
Its maximum energy is
\[
    \max_{\pi}\|g_u^\text{LS}\|_2^2\approx 2-2\epsilon+\epsilon^2(1-1/V).
\]
For the common choice $\epsilon=0.1$ and a sufficiently large vocabulary, this upper bound is approximately 1.81, explicitly excluding the most extreme collision regime discussed in \cref{sec:forgetting}.

\cref{tab:objective_energy} summarizes these objectives. 
Standard off-policy SFT combines one-hot supervision with potentially low-probability targets and can therefore generate forces approaching the maximum energy of 2. 
On-policy sampling, soft teacher targets, and label smoothing reduce this risk in different ways. 
This analysis primarily concerns the output-side force $g_u$.
How the corresponding contexts $s_u$ route these updates through the kernel is a separate question, which becomes central to the erosion mechanism studied in \cref{sec:forgetting}.

\begin{table}[h]
\centering
\caption{Comparison of update-force characteristics under different finetuning objectives.}
\label{tab:objective_energy}
\small
    \begin{tabular}{cccc}
    \toprule
    Objective & Update force & Supervision / sampling & Risk of extreme energy \\
    \midrule
    \texttt{SFT} 
    & $\ve_{y_u}-\pi_\theta$ 
    & Off-policy, one-hot 
    & Possible \\
    
    \texttt{RL} 
    & $\ve_{y_u}-\pi_\theta$ 
    & On-policy 
    & Uncommon \\
    
    \texttt{GKD} 
    & $\pi_{\mathrm{teacher}}-\pi_\theta$ 
    & Soft teacher target 
    & Suppressed \\
    
    \texttt{OPD} 
    & $A_u(\ve_{y_u}-\pi_\theta)$ 
    & On-policy 
    & Suppressed by sampling; scaled by $A_u$ \\
    
   \texttt{LS} 
    & $(1-\epsilon)\ve_{y_u}+\epsilon \vq_{\mathrm{unif}}-\pi_\theta$ 
    & Smoothed target 
    & Bounded by $\epsilon$ \\
    \bottomrule
    \end{tabular}
\end{table}

\subsection{Additional Energy-Control Experiments on OpenMathInstruct-2}
\label{app:forgetting:energy}

\begin{figure}[h]
    \centering
    \includegraphics[width=1\linewidth]{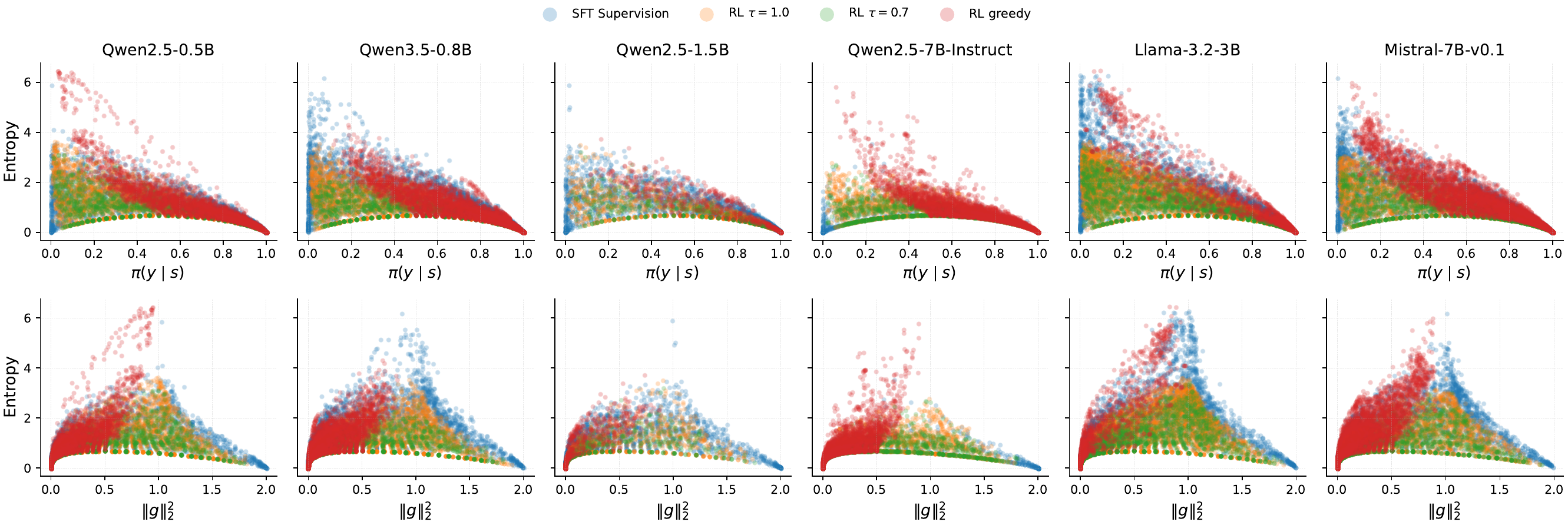}
    \caption{Cross-model validation of the entropy–energy geometry in \Cref{fig:results_collision_erosion}(a).
            Token entropy as a function of the target-token probability $\pi(y\mid s)$ (top) and gradient energy $\|g\|_2^2$ (bottom) across six models and four supervision settings. The characteristic geometry observed in the main text persists across model families and scales, further supporting gradient energy as a more intrinsic characterization of token-level uncertainty than probability alone.}
    \label{app:fig:entropy_vs_g_all}
\end{figure}

\textbf{Data and experimental setup.}
We conduct additional energy-control experiments on \texttt{OpenMathInstruct-2}, a mathematical instruction-tuning corpus containing 14M problem-solution pairs whose solutions were generated by \texttt{Llama-3.1-405B-Instruct}.
We uniformly sample 50,000 examples from its deduplicated train\_1M split with a fixed seed, preserving the original mixture of synthetic MATH-style \(85.5\%\), synthetic \texttt{GSM8K}-style \(12.1\%\), original \texttt{GSM8K} \(1.2\%\), and original MATH \(1.2\%\) examples.
Exact-match decontamination against the \texttt{GSM8K} test set and \texttt{MATH-500} removes no examples. 
Unlike the \texttt{GSM8K} setting in the main text, solutions in this corpus follow a consistent response style and report their final answers using \texttt{\textbackslash boxed\{\}}, providing a substantially larger and differently formatted substrate for testing our analysis.

We evaluate \texttt{Qwen2.5-1.5B}, \texttt{Qwen3-4B}, and \texttt{Llama3.2-3B}, all in their instruction-tuned versions.
All runs use full-parameter fine-tuning for one epoch with AdamW, cosine decay, a warmup ratio of \(0.1\), global batch size \(16\), bf16 precision, and a maximum sequence length of \(1{,}024\). 
Unless otherwise specified, the learning rate is \(10^{-5}\); the learning-rate sweep additionally considers \(2\times10^{-5}\) and \(5\times10^{-5}\). EAFT uses \(\alpha=1.0\). 
Under this common setup, we compare standard SFT, EAFT, and direct control of high-energy updates.

\textbf{Threshold sensitivity.}
We additionally sweep the threshold of $E_u$ to examine the trade-off between retention and downstream learning. 
Moderate thresholds consistently improve retention while preserving most of the learning signal, whereas overly aggressive masking eventually degrades downstream performance by removing useful updates together with harmful collisions. This confirms that the benefit does not come from indiscriminately reducing the number of gradient updates, but from selectively suppressing the extreme-energy tail. Bolded indicates best, underlined indicates second.

\begin{table}[h]
\centering
\caption{Downstream learning and general-capability retention under different \texttt{OpenMathInstruct-2} finetuning methods.
        We compare standard SFT, EAFT, and energy-controlled variants. 
        \texttt{GSM8K} reports generative accuracy, \texttt{MMLU} likelihood-based multiple-choice accuracy, \texttt{IFEval} prompt-level strict accuracy, and \texttt{Dolly classification/closed-QA} Rouge-L F1.
        Energy-controlled rows differ only in the masking threshold $E_u$.}
\label{tab:app_omi2_main}
\resizebox{0.9\linewidth}{!}{\begin{tabular}{ccccccc}
\hline
Model & Method & GSM8K & MMLU & IFEval & Dolly-CLS & Dolly-QA \\ \hline
\multirow{6}{*}{Qwen2.5-1.5B-Instruct}
 & base & 68.46 & 60.12 & 42.14 & 38.77 & 40.68 \\
 & SFT  & 70.20 & 58.88 & 26.99 & 28.78 & 34.12 \\
 & EAFT & 72.18 & 58.86 & \underline{27.36} & 27.46 & 33.68 \\
 & Mask $E_u{>}1.8$ & 71.42 & \underline{58.89} & 26.80 & \underline{27.71} & 33.92 \\
 & Mask $E_u{>}1.5$ & 68.31 & \textbf{59.03} & \textbf{28.65} & \textbf{29.68} & \textbf{36.98} \\
 & Mask $E_u{>}1.0$ & 63.46 & 58.98 & 26.25 & 29.17 & \underline{34.33} \\ \hline
\multirow{6}{*}{Qwen3-4B-Instruct}
 & base & 85.21 & 70.67 & 83.18 & 17.26 & 33.19 \\
 & SFT  & 89.23 & 69.94 & 69.87 & 34.55 & 37.55 \\
 & EAFT & 89.31 & \textbf{70.03} & \textbf{74.49} & 36.04 & 37.58 \\
 & Mask $E_u{>}1.8$ & 88.48 & 69.69 & \underline{72.64} & 34.24 & 37.79 \\
 & Mask $E_u{>}1.5$ & 87.79 & \underline{69.72} & 68.58 & \textbf{36.78} & \textbf{41.38} \\
 & Mask $E_u{>}1.0$ & 79.45 & 69.18 & 71.16 & \underline{37.91} & \underline{41.20} \\ \hline
\multirow{6}{*}{Llama3.2-3B-Instruct}
 & base & 62.55 & 62.28 & 71.16 & 31.43 & 40.32 \\
 & SFT  & 76.80 & 60.66 & 58.23 & 30.24 & 40.62 \\
 & EAFT & 76.50 & \textbf{60.4}5 & 59.89 & \underline{30.31} & \textbf{41.26} \\
 & Mask $E_u{>}1.8$ & 76.57 & \underline{60.23} & 58.41 & \textbf{30.75} & 40.66 \\
 & Mask $E_u{>}1.5$ & 74.22 & 60.21 & \underline{60.81} & 30.31 & \underline{41.24} \\
 & Mask $E_u{>}1.0$ & 62.24 & 59.02 & \textbf{61.00} & 24.33 & 39.32 \\ \hline
\end{tabular}}
\end{table}

\textbf{Larger updates amplify the benefit of energy control.}
We also test whether the relative benefit of suppressing high-energy updates changes with the update magnitude. 
On \texttt{Qwen2.5-1.5B} trained on \texttt{OpenMathInstruct-2}, we compare SFT and EAFT across learning rates $10^{-5}$, $2\times 10^{-5}$, and $5\times10^{-5}$.
As the learning rate increases, the EAFT-minus-SFT retention gap moves consistently in EAFT's favor across \texttt{MMLU}, \texttt{IFEval}, and the two \texttt{Dolly} subsets. 
For example, the \texttt{MMLU} gap increases from \(-0.02\) to \(+0.53\) and $+1.10$ points across the three learning rates, while the \texttt{IFEval} gap increases from $+0.37$ to $+0.74$ and $+0.93$.

This trend is consistent with the collision mechanism: when individual updates are larger, extreme-energy conflicts become more consequential, increasing the relative value of suppressing them. 
We treat this result as supporting rather than isolating evidence, since changing the learning rate scales all update components rather than energy alone.

\begin{table}[h]
\centering
\caption{EAFT $-$ SFT difference (percentage points) on \texttt{Qwen2.5-1.5B-Instruct} trained on \texttt{OpenMathInstruct-2}, across three learning rates. Positive values favour \texttt{EAFT}. The gap increases monotonically with learning rate on all four retention metrics, and \texttt{EAFT} is ahead on every metric only at the largest learning rate: at $10^{-5}$ it is behind on \texttt{MMLU}, \texttt{Dolly-CLS}, and \texttt{Dolly-QA}. The benefit of suppressing high-energy updates therefore grows with update magnitude rather than being uniform, consistent with the collision mechanism of \cref{sec:forgetting}. Since changing the learning rate scales all update components rather than energy alone, we treat this as supporting rather than isolating evidence.}
    \label{tab:app_omi2_lr}
    \begin{tabular}{cccc}
    \toprule
    metric / learning rate & $10^{-5}$ & $2{\times}10^{-5}$ & $5{\times}10^{-5}$ \\
    \midrule
    \texttt{MMLU}       & $-0.02$ & $+0.53$ & $+1.10$ \\
    \texttt{IFEval}     & $+0.37$ & $+0.74$ & $+0.93$ \\
    \texttt{Dolly-CLS}  & $-1.32$ & $-0.74$ & $+0.38$ \\
    \texttt{Dolly-QA}  & $-0.44$ & $+0.11$ & $+1.75$ \\
    \bottomrule
    \end{tabular}
\end{table}

\subsection{Additional Evidence for Erosion on OpenMathInstruct-2}
\label{app:forgetting:erosion_if}

The \texttt{OpenMathInstruct-2} experiments also provide an independent test of the erosion mechanism in \cref{sec:forgetting}. 
Unlike the \texttt{GSM8K} setting in the main text, this dataset contains substantially more training examples and uses a different response convention, with final answers reported in \texttt{\textbackslash boxed\{\}}. 
If erosion reflects accumulated behavioral drift rather than a \texttt{GSM8K}-specific artifact, we should therefore expect the model to drift toward this new response pattern after fine-tuning.

\textbf{Generative MMLU evaluation.}
We evaluate the fine-tuned models on the full \texttt{MMLU} test set using the following template. 

\begin{tcolorbox}[colback=black!3,colframe=black!40,boxrule=0.4pt,arc=1mm,left=1mm,right=1mm,top=0.5mm,bottom=0.5mm]
\ttfamily\small
Question:\\
Please only return A/B/C/D.\\
A: \{choice$_A$\}; B: \{choice$_B$\}; C: \{choice$_C$\}; D: \{choice$_D$\}\\
Answer:
\end{tcolorbox}

Each question explicitly instructs the model to return only A/B/C/D, and decoding is greedy.
We report five complementary measurements in \cref{app:tab:app_omi2_erosion_full}. 
\textbf{Non-IF} is the fraction of generations that do not directly follow the requested A/B/C/D format. 
\textbf{Boxed} measures the fraction containing \texttt{\textbackslash boxed\{\}}, the characteristic final-answer convention of \texttt{OpenMathInstruct-2}. 
\(P(A\ldots D)\triangleq\sum_{c \in \{A,B,C,D\}} \pi_\theta(c \mid s_o)\) is shorthand for the first-token probability mass \(\pi_\theta(\mathcal C\mid s_o)\). 
\textbf{Acc.} is extraction-based answer accuracy, which allows the answer letter to be recovered even from non-compliant responses. 
Finally, \textbf{Worked} denotes generations longer than 200 characters, which we use as a simple indicator that the model produces a worked mathematical solution rather than directly returning an answer.

The last measurement is useful for distinguishing two levels of behavioral transfer. 
An increase in Boxed alone could indicate that the model has merely copied a new final-answer marker.
In contrast, an increase in Worked indicates that the drift extends to the broader response pattern of producing a mathematical derivation before the final answer. 
Although the 200-character threshold is only a coarse heuristic, it provides a simple diagnostic of this more substantial behavioral change.
Consistent with this interpretation, Worked rises sharply after SFT across all three models, from 4.0\% to 75.6\% on \texttt{Qwen-1.5B}, 1.8\% to 36.6\% on \texttt{Qwen-4B}, and 4.1\% to 50.1\% on \texttt{Llama-3B}. 
This shows that the transferred behavior often extends beyond the final-answer marker to the broader solution-generation pattern.
We treat Worked only as a supporting diagnostic rather than a continuous measure of erosion, since its fixed length threshold can saturate or vary with response verbosity.

\begin{table}[h]
\centering
\caption{Generative \texttt{MMLU} evaluation after \texttt{OpenMathInstruct-2} fine-tuning. We compare base, SFT, and EAFT across three models, together            with a learning-rate sweep on \texttt{Qwen2.5-1.5B}. LR ×2 and LR ×5 denote two and five times the default learning rate, respectively.}
\label{app:tab:app_omi2_erosion_full}
    \resizebox{0.85\linewidth}{!}{
    \begin{tabular}{cccccccc}
    \hline
Model & Method & Non-IF & Boxed & P(A...D) & Acc. & Worked \\ \hline
\multirow{6}{*}{Qwen2.5-1.5B}
 & base            & 20.5\%  & 1.8\%  & 0.954 & 54.5\% & 4.0\%  \\
 & SFT             & 99.9\%  & 67.6\% & 0.201 & 47.9\% & 75.6\% \\
 & EAFT            & 100.0\% & 58.1\% & 0.202 & 48.3\% & 78.1\% \\
 & Mask $E_u{>}1.8$ & 99.9\%  & 73.7\% & 0.200 & 47.0\% & 79.1\% \\
 & Mask $E_u{>}1.5$ & 99.7\%  & 60.5\% & 0.240 & 47.6\% & 69.2\% \\
 & Mask $E_u{>}1.0$ & 99.0\%  & 42.3\% & 0.247 & 46.7\% & 76.6\% \\ \hline
\multirow{6}{*}{Qwen3-4B}
 & base            & 11.4\%  & 0.0\%  & 0.981 & 69.4\% & 1.8\%  \\
 & SFT             & 63.7\%  & 53.3\% & 0.399 & 68.1\% & 36.6\% \\
 & EAFT            & 60.7\%  & 44.7\% & 0.418 & 69.2\% & 34.5\% \\
 & Mask $E_u{>}1.8$ & 65.4\%  & 46.6\% & 0.385 & 66.6\% & 46.3\% \\
 & Mask $E_u{>}1.5$ & 55.1\%  & 40.8\% & 0.444 & 66.1\% & 34.4\% \\
 & Mask $E_u{>}1.0$ & 51.5\%  & 11.9\% & 0.521 & 66.7\% & 31.6\% \\ \hline
\multirow{6}{*}{Llama3.2-3B}
 & base            & 59.9\%  & 0.0\%  & 0.962 & 55.4\% & 4.1\%  \\
 & SFT             & 88.1\%  & 21.3\% & 0.371 & 52.2\% & 50.1\% \\
 & EAFT            & 67.9\%  & 12.8\% & 0.476 & 52.1\% & 35.9\% \\
 & Mask $E_u{>}1.8$ & 75.7\%  & 15.0\% & 0.455 & 49.9\% & 41.5\% \\
 & Mask $E_u{>}1.5$ & 70.2\%  & 21.1\% & 0.506 & 50.9\% & 42.3\% \\
 & Mask $E_u{>}1.0$ & 65.9\%  & 13.7\% & 0.847 & 49.8\% & 30.8\% \\ \hline
\multirow{4}{*}{Qwen2.5-1.5B (Big LR)}
 & SFT, LR $\times 2$  & 100.0\% & 76.2\% &  0.156 & 45.6\% & 72.5\% \\
 & EAFT, LR $\times 2$ & 100.0\% & 75.8\% &  0.151 & 46.3\% & 75.4\% \\
 & SFT, LR $\times 5$  & 100.0\% & 88.3\% &  0.118 & 35.7\% & 69.7\% \\
 & EAFT, LR $\times 5$ & 100.0\% & 85.2\% & 0.111 & 37.1\% & 76.2\% \\ \hline
    \end{tabular}}
\end{table}

\begin{table}[h]
\centering
\caption{
\textbf{Response-format separation redirects erosion toward the fine-tuning format.}
We evaluate all 14,042 \texttt{MMLU} questions using both the
\texttt{Question:/Answer:} (Q/A) and \texttt{Problem:/Result:} (P/R) prompts.
Compared with standard SFT, using an alternative P/R response format during
\texttt{OpenMathInstruct-2} fine-tuning produces strongly format-dependent erosion:
behavior in the deployed Q/A format is better preserved, while interference is
redirected toward P/R.
Downstream \texttt{GSM8K} performance remains similar, indicating that format
separation redirects interference rather than simply suppressing learning.
}
\label{app:tab:omi2_format_separation}

\resizebox{\linewidth}{!}{
\begin{tabular}{ccccccccc}
\hline
Model & SFT form
& Non-IF Q/A $\downarrow$
& Non-IF P/R 
& Boxed Q/A $\downarrow$
& Boxed P/R 
& $P(A\ldots D)$ Q/A $\uparrow$
& $P(A\ldots D)$ P/R 
& GSM8K $\uparrow$ \\
\hline

\multirow{3}{*}{Qwen3-4B}
& base
& 11.4\% & 7.7\% & 0.0\% & 0.0\% & 0.980 & 0.984 & 85.21 \\
& SFT (standard)
& 63.7\% & 64.0\% & 53.3\% & 60.3\% & 0.399 & 0.388 & 89.23 \\
& SFT (P/R)
& 31.5\% & 94.9\% & 26.2\% & 92.7\% & 0.588 & 0.201 & 89.84 \\
\hline

\multirow{3}{*}{Llama3.2-3B}
& base
& 59.9\% & 55.4\% & 0.0\% & 0.0\% & 0.962 & 0.954 & 62.55 \\
& SFT (standard)
& 88.1\% & 86.2\% & 21.3\% & 26.2\% & 0.371 & 0.351 & 76.80 \\
& SFT (P/R)
& 42.5\% & 99.9\% & 7.6\% & 34.4\% & 0.743 & 0.011 & 76.88 \\
\hline

\multirow{3}{*}{Qwen2.5-1.5B}
& base
& 21.0\% & 19.6\% & 1.9\% & 0.3\% & 0.953 & 0.982 & 69.22 \\
& SFT (standard)
& 99.9\% & 99.5\% & 67.6\% & 70.0\% & 0.201 & 0.283 & 70.20 \\
& SFT (P/R)
& 99.5\% & 100.0\% & 53.0\% & 79.7\% & 0.321 & 0.163 & 70.74 \\
\hline

\end{tabular}}
\end{table}

\textbf{Fine-tuning induces drift toward the training distribution.}
As shown in \cref{app:tab:app_omi2_erosion_full}, SFT produces substantial instruction-following degradation across all three models.
At the same time, the resulting responses increasingly resemble \texttt{OpenMathInstruct-2} rather than arbitrary malformed outputs. 
For \texttt{Qwen3-4B-Instruct}, for example, Non-IF rises from \(11.4\%\) to \(63.7\%\), Boxed rises from \(0.0\%\) to \(53.3\%\), and Worked rises from \(1.8\%\) to \(36.6\%\). 
The corresponding first-token choice mass drops from \(0.981\) to \(0.399\). 
Thus, fine-tuning not only reduces the tendency to directly follow the A/B/C/D instruction, but systematically reallocates behavior toward the response convention repeatedly reinforced during training.
The same pattern appears under the other models, although its severity is model-dependent. 
In particular, \texttt{Qwen2.5-1.5B} reaches an almost completely non-compliant regime after fine-tuning, while \texttt{Llama3.2-3B} already has a relatively high Non-IF rate before fine-tuning. 
These differences motivate using continuous distributional measurements in addition to the binary generation-based criterion.

\textbf{Probability-level measurements remain informative after greedy behavior saturates.}
The learning-rate sweep on \texttt{Qwen2.5-1.5B} makes this distinction particularly clear. 
Its Non-IF rate is already approximately \(100\%\) at the default learning rate and therefore cannot reflect further degradation. 
However, the underlying output distribution continues to move monotonically away from the requested behavior as the learning rate increases.
Under SFT, \(P(\mathrm{A\ldots D})\) decreases from \(0.201\) to \(0.156\) and \(0.118\) for learning rates \(10^{-5}\), \(2\times10^{-5}\), and \(5\times10^{-5}\), respectively. 
Over the same sweep, the Boxed rate increases from \(67.6\%\) to \(76.2\%\) and \(88.3\%\).

Therefore, saturation of the greedy response does not imply that erosion has saturated internally. 
Even when nearly every generation already violates the instruction, the model can continue shifting probability mass away from the protected behavior and toward the fine-tuning distribution.
This also illustrates why probability-level measurements such as \(\pi_\theta(\mathcal C\mid s_o)\) are useful for tracking erosion when benchmark accuracy or binary generation metrics have already saturated.

\textbf{Erosion is largely dissociated from answer correctness.}
Despite these large behavioral changes, answer accuracy is considerably more stable. 
For example, \texttt{Qwen3-4B-Instruct} changes from \(69.4\%\) to \(68.1\%\) confidence-based \texttt{MMLU} accuracy under SFT, despite the large changes in Non-IF, Boxed, Worked, and first-token probability mass.

\textbf{EAFT and energy control only partially mitigates erosion.}
EAFT can improve retention in some settings, but it does not eliminate the behavioral drift. 
For example, on \texttt{Llama3.2-3B}, EAFT reduces Non-IF from \(88.1\%\) under SFT to \(67.9\%\), reduces Boxed from \(21.3\%\) to \(12.8\%\), and increases \(P(\mathrm{A\ldots D})\) from \(0.371\) to \(0.476\). 
On \texttt{Qwen3-4B}, the corresponding improvements are smaller but follow the same direction. Moreover, for the energy-control runs, as $E_u$ the energy cap decreases (from 1.8 to 1.0), instruction following also improves. For example, for \texttt{Qwen3-4B}, Non-IF decreases from \(63.7\%\) under SFT to \(55.1\%\) with masking $E_u > 1.5$, and reduces Boxed from \(53.3\%\) under SFT to \(40.8\%\) with masking $E_u > 1.5$, while maintaining downstream and retention metrics (as seen in \cref{tab:app_omi2_main}).
\texttt{Qwen2.5-1.5B} remains close to saturation under both objectives. 
These results are consistent with our distinction between collision and erosion: suppressing high-energy updates can remove some harmful interactions, but erosion can still accumulate through many individually mild updates that are not targeted by the energy criterion.

\subsection{Additional Analyses of Context Separation}
\label{app:forgetting:erosion_mitigate}

We provide additional results for the context-separation intervention in \cref{sec:forgetting}. 
We first report the raw performance underlying the changes summarized in the main text, then verify the same effect on instruction-tuned models, and finally examine how this intervention modifies \texttt{CH1} and \texttt{CH2}.

\textbf{Raw results of the controlled experiment.}
\Cref{fig:results_collision_erosion}(e) reports the before- and after-finetuning results underlying \cref{app:tab:context_separation}. 
Starting from the same base checkpoint, we first establish the \texttt{MMLU} behavior using either the \texttt{Question:/Answer:} (Q/A) or \texttt{Problem:/Result:} (P/R) format, and then fine-tune on \texttt{GSM8K} using the alternative format. 
The raw results show the same pattern as in the main text: interference is substantially stronger when \texttt{MMLU} is evaluated in the response format used by the incoming \texttt{GSM8K} updates, while downstream \texttt{GSM8K} learning remains comparable.

\begin{table*}[h]
\centering
\small
\setlength{\tabcolsep}{3.5pt}
\renewcommand{\arraystretch}{1.08}
\caption{Controlled test of context-dependent erosion under matched and mismatched response formats. 
        Starting from base models, we first establish the \texttt{MMLU} behavior using either the \texttt{Question:/Answer:} (Q/A) or \texttt{Problem:/Result:} (P/R) format, and then fine-tune on \texttt{GSM8K} using the alternative format.
        We report instruction-following accuracy on \texttt{MMLU} together with \texttt{MMLU} and \texttt{GSM8K} task accuracy before and after the second-stage fine-tuning.
}
\label{app:tab:context_separation}
\resizebox{\textwidth}{!}{
\begin{tabular}{lcccccccccccc}
\toprule
\multirow{3}{*}{\textbf{Model}}
& \multicolumn{4}{c}{\textbf{Instruction Following Acc.}}
& \multicolumn{4}{c}{\textbf{MMLU Acc.}}
& \multicolumn{4}{c}{\textbf{GSM8K Acc.}} \\
\cmidrule(lr){2-5}
\cmidrule(lr){6-9}
\cmidrule(lr){10-13}

& \multicolumn{2}{c}{\textbf{Before FT}}
& \multicolumn{2}{c}{\textbf{After FT}}
& \multicolumn{2}{c}{\textbf{Before FT}}
& \multicolumn{2}{c}{\textbf{After FT}}
& \multicolumn{2}{c}{\textbf{Before FT}}
& \multicolumn{2}{c}{\textbf{After FT}} \\
\cmidrule(lr){2-3}
\cmidrule(lr){4-5}
\cmidrule(lr){6-7}
\cmidrule(lr){8-9}
\cmidrule(lr){10-11}
\cmidrule(lr){12-13}

\textbf{Eval. format}

& Q/A & P/R
& Q/A & P/R
& Q/A & P/R
& Q/A & P/R
& Q/A & P/R
& Q/A & P/R \\
\midrule

Qwen2.5-1.5B-QA
& 100.00 & 99.85
& {100.00} & 93.13
& 59.31 & 58.41
& {58.89} & 55.73
& 8.42 & 9.25
& 39.27 & {41.62} \\

Llama-3.2-3B-QA
& 99.98 & 99.94
& {99.53} & 46.51
& 55.28 & 53.95
& {53.81} & 42.44
& 5.69 & 5.53
& 33.43 & {35.78} \\

Qwen3-4B-QA
& 100.00 & 99.97
& {99.99} & 13.30
& 71.04 & 70.74
& {70.65} & 23.81
& 22.37 & 22.29
& {67.63} & 67.10 \\

Qwen2.5-1.5B-PR
& 100.00 & 99.99
& 90.14 & {99.01}
& 59.76 & 59.76
& 56.97 & {59.04}
& 0.00 & 12.66
& 56.71 & 58.45 \\

Llama-3.2-3B-PR
& 99.99 & 99.99
& 70.89 & {99.34}
& 55.52 & 55.32
& 44.97 & {52.79}
& 0.00 & 5.46
& 45.56 & 45.72 \\

Qwen3-4B-PR
& 100.00 & 100.00
& 71.72 & {96.08}
& 71.36 & 71.15
& 58.67 & {68.93}
& 0.00 & 31.08
& 79.38 & 79.83 \\

\bottomrule
\end{tabular}
}
\vspace{-3mm}
\end{table*}

\begin{table}[h]
  \centering
  \small
  \caption{Format separation removes context-dependent erosion. Each model first
  establishes MMLU behavior in one response format, then is fine-tuned on
  \texttt{GSM8K} in either the same format (\emph{default}) or the alternative one
  (\emph{protected}). We report instruction-following accuracy on MMLU in the
  established format, i.e.\ the format the model is deployed in.}
  \label{tab:format-separation}
  \begin{tabular}{llcccc}
    \toprule
    Established & Model & After est. & Default & Protected & $\Delta$ \\
                &       &            & (FT $=$ deploy) & (FT $\neq$ deploy) & \\
    \midrule
    Q/A & Qwen2.5-1.5B & 100.00 & 97.35 & 100.00 & $+2.65$ \\
        & Llama-3.2-3B &  99.98 & 44.77 &  99.53 & $+54.76$ \\
        & Qwen3-4B     & 100.00 & 68.64 &  99.99 & $+31.35$ \\
    \midrule
    P/R & Qwen2.5-1.5B &  99.99 & 85.98 &  99.01 & $+13.03$ \\
        & Llama-3.2-3B &  99.99 & 82.35 &  99.34 & $+16.99$ \\
        & Qwen3-4B     & 100.00 & 88.35 &  96.08 & $+7.73$ \\
    \bottomrule
  \end{tabular}
\end{table}

\textbf{The same effect appears in instruction-tuned models.}
The controlled experiment above manually establishes a response-format preference, whereas instruction-tuned checkpoints already contain preferences inherited from their unknown training mixtures. 
We therefore repeat the intervention by first identifying which of Q/A and P/R each instruction-tuned model follows more reliably before \texttt{GSM8K} fine-tuning, and then comparing \texttt{GSM8K} training under the preferred and alternative formats.

As shown in \cref{app:tab:instruct_pr_qa}, the same qualitative tendency remains.
Fine-tuning through a response format that is more strongly aligned with the model's existing behavior generally produces greater instruction-following degradation than using the alternative format. 
The effect is not perfectly symmetric across models, as expected given their uncontrolled instruction-tuning histories, but it provides a complementary robustness check that the main result is not an artifact of manually constructing the initial response preference.

\begin{table}[t]
\centering
\caption{Instruction Following Accuracy comparison on MMLU under different prompt formats and \texttt{GSM8K} fine-tuning settings.
Q/A denotes the \texttt{MMLU} \texttt{Question/Answer:} format, while P/R denotes the \texttt{MMLU} \texttt{Problem/Result:} format.}
\label{app:tab:instruct_pr_qa}
\resizebox{0.8\linewidth}{!}{
\begin{tabular}{lcccccc}
\toprule
\multirow{2}{*}{\textbf{Model}}
& \multicolumn{2}{c}{\textbf{Base}}
& \multicolumn{2}{c}{\textbf{GSM8K Q/A FT}}
& \multicolumn{2}{c}{\textbf{GSM8K P/R FT}} \\
\cmidrule(lr){2-3}
\cmidrule(lr){4-5}
\cmidrule(lr){6-7}
& \textbf{Q/A} & \textbf{P/R}
& \textbf{Q/A} & \textbf{P/R}
& \textbf{Q/A} & \textbf{P/R} \\
\midrule
Qwen2.5-1.5B-Ins
& 75.27\% &  \textbf{96.14\%}
& \cellcolor{blue!10} \textbf{73.20\%} & \cellcolor{blue!10}72.52\%
& \cellcolor{red!10} \textbf{52.63\%} & \cellcolor{red!10} 31.06\% \\

Llama-3.2-3B-Ins
& 92.34\% & \textbf{97.02\%}
& \cellcolor{blue!10}\textbf{47.59\%} &\cellcolor{blue!10} 44.85\%
& \cellcolor{red!10} \textbf{44.10\%} & \cellcolor{red!10} 39.87\% \\

Qwen3-4B-Ins
& \textbf{85.89\%} & 76.93\%
& \cellcolor{red!10} 0.00\% & \cellcolor{red!10} \textbf{0.05\%}
& \cellcolor{blue!10}\textbf{32.12\%} & \cellcolor{blue!10} 0.00\% \\
\bottomrule
\end{tabular}
}
\label{tab:ins_follow_acc}
\vspace{-3mm}
\end{table}

\begin{figure}[h]
    \centering
    \begin{subfigure}[t]{0.45\linewidth}
        \centering
        \includegraphics[width=\linewidth]{figures/CH1_hgab.png}
        \caption{\textbf{Counterfactual intervention of \texttt{CH1}.} Replacing only prediction-error vector accounts for most of the \texttt{CH1} reduction, whereas replacing $\vh_{L,u}$ has little effect. Using \texttt{GSM8K} P/R format for training introduces the smallest interference on \texttt{MMLU} Q/A.}
        \label{fig:ch1_ab}
    \end{subfigure}
    \hfill
    \begin{subfigure}[t]{0.50\linewidth}
        \centering
        \includegraphics[width=\linewidth]{figures/ch2_hab.png}
        \caption{\textbf{Layer-wise change in \texttt{CH2} hidden similarity.}
                We report per layer hidden embedding similarity difference $\Delta_\ell$. Positive values indicate lower similarity under P/R.}
        \label{fig:ch2_ab}
    \end{subfigure}
    \caption{\textbf{Effects of template separation on \texttt{CH1} and \texttt{CH2}.}
The Q/A-to-P/R shift reduces \texttt{CH1} mainly through weaker prediction-error alignment and reduces \texttt{CH2} mainly through lower layer-wise hidden-state similarity.}
    \label{app:fig:ch_ab}
\end{figure}

\textbf{Template separation weakens \texttt{CH1} mainly through force alignment.}
We next examine which factors in our decomposition change when the \texttt{GSM8K} format is switched. 
For each paired \texttt{GSM8K} example, the underlying question, reasoning trace, and target answer are unchanged; only the prompt and answer-field labels differ between Q/A and P/R. 
We keep the observing \texttt{MMLU} example fixed and construct two counterfactual \texttt{CH1} scores: one replaces only the \texttt{GSM8K} update force \(g_u\), while the other replaces only its final-layer representation \(\vh_{L,u}\).

As shown in \cref{app:fig:ch_ab}-(a), replacing \(g_u^{\mathrm{Q/A}}\) with \(g_u^{\mathrm{P/R}}\) accounts for most of the reduction in \texttt{CH1} and closely approaches the score obtained when the complete P/R example is used. 
Replacing only \(h_{L,u}\) produces much less change. 
Thus, for \texttt{CH1}, format separation primarily weakens the direct vocabulary-space alignment \(g_o^\top g_u\).

\textbf{\texttt{CH2} is weakened mainly through contextual representation alignment.}
\texttt{CH2} exhibits a complementary pattern. Its output-side component is the readout-mediated force alignment $g_o^\top\vw\vw^\top g_u$.
In contrast to \texttt{CH1}, this term remains nearly unchanged under the Q/A-to-P/R shift, with its 95\% bootstrap confidence interval containing zero for all three models.
The dominant and systematic change instead appears in the layer-wise $\vh$ similarity.
\Cref{app:fig:ch_ab}-(b) reports the layer-wise difference
\[
    \Delta_\ell=\langle \tilde{\vh}_{\ell, o}^\text{Q/A}, \tilde{\vh}_{\ell, u}^\text{Q/A}\rangle - 
                \langle \tilde{\vh}_{\ell, o}^\text{Q/A}, \tilde{\vh}_{\ell, u}^\text{P/R}\rangle,
\]
where positive values indicate weaker \texttt{MMLU–GSM8K} alignment after switching the update examples from Q/A to P/R. 
The difference is predominantly positive across layers for different models. 
\texttt{Qwen3-4B} is noisier in intermediate layers, but still exhibits reduced alignment in most layers, with the separation becoming clearer toward the later layers. 
Thus, template separation attenuates \texttt{CH2} primarily through reduced contextual inner-product alignment, while leaving the readout-mediated force alignment largely unchanged.

\section{More on Plasticity Loss}
\label{app:exp_plasticity}

This appendix provides additional details and results for the plasticity-loss experiments in \cref{sec:plasticity}, including task-support statistics, sequential-SFT dynamics, cross-model results, and reset experiments. We also define the relative quantities used below. The relative readout degeneration is $D_{\mathcal D}^{t}=1-R_{\mathcal D}^{t}/R_{\mathcal D}^{0}$, and the relative AUC gain from resetting is $G_{\text{reset}}=(A_{\text{std}}-A_{\text{reset}})/A_{\text{std}}$. In \cref{fig:app:retention_last_layer}, retained plasticity is defined as $P_{\text{retain}}=A_{\text{base}}/A_\text{setting}$, where $A_\text{setting}$ is the downstream loss AUC under the corresponding long-horizon/reset setting.

\textbf{Controlled readout reshaping under sequential SFT.}
We first construct a controlled setting in which task-specific changes in $\vw\vw^\top\in\mathbb{R}^{V\times V}$ can be inspected directly.
We sequentially fine-tune on \texttt{GSM8K}, \texttt{MBPP}, and \texttt{Dolly-QA}, denoted by tasks $A$, $B$, and $C$.
To reduce overlap between their token supports, we translate \texttt{GSM8K} into Chinese and \texttt{Dolly-QA} into French while keeping \texttt{MBPP} in English.
This transformation is used only as a diagnostic device to make task-specific regions of $\vw\vw^\top$ easier to isolate; support-overlap statistics are reported in \cref{tab:app:plasticity_details}.

For each task, we select its 300 most frequent training tokens and denote the resulting support sets by $S_A,S_B,S_C$. 
We then extract the corresponding task-specific readout blocks. 
For task A, $M_A \triangleq [\vw]_{S_A}[\vw]_{S_A}^{\top} = [\vw\vw^\top]_{AA}$, with $M_B$ and $M_C$ defined analogously. 
These three $300\times300$ matrices provide a simple probe of how the shared readout geometry evolves along the token coordinates emphasized by each task.

\begin{figure}[h]
    \centering
    \includegraphics[width=0.8\linewidth]{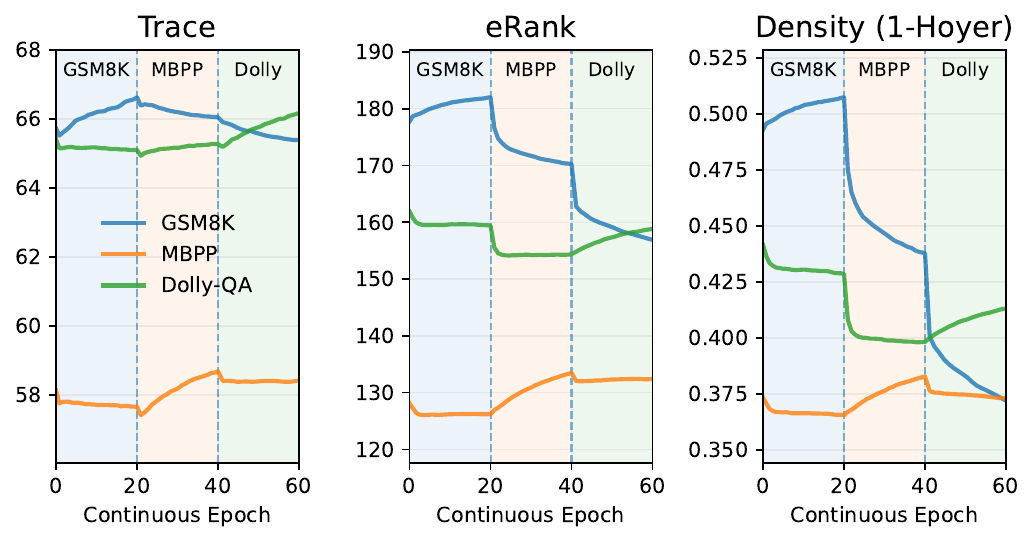}
    \caption{Sequential SFT empirically exhibits this predicted block-wise reshaping: the geometry associated with the current task is strengthened while that of the other tasks deteriorates. The model is trained sequentially on \texttt{GSM8K} (epochs 0-20), \texttt{MBPP} (20-40), and \texttt{Dolly-QA} (40-60).}
    \label{fig:app:seq_sft}
\end{figure}

We characterize each task-specific block from three complementary aspects of its spectrum:
\begin{itemize}
    \item $\mathsf{Trace}(M_A)\triangleq\sum_i [M_A]_{ii}$, which measures the total spectral mass of the block and therefore its overall transmission strength. A smaller trace indicates weaker amplification along the corresponding task support.
    \item $\mathsf{eRank}(M_A)\triangleq \exp(-\sum_i p_i\log p_i)$, where $p_i=\frac{\sigma_i}{\sum_j \sigma_j}$ and $\sigma_i$ are the eigenvalues of $M_A$. It measures the effective number of directions carrying substantial spectral mass. A smaller effective rank indicates stronger concentration into fewer directions.
    \item $\mathsf{Density}(M_A)\triangleq 1-\mathsf{Hoyer}(M_A)=\frac{{\mathsf{Trace}(M_A)}/{\sqrt{\mathsf{Trace}(M^2_A)}}-1}{\sqrt{|S_A|}-1}$. This provides a complementary sparsity-based measure of the eigenvalue spectrum. A smaller density indicates that the spectral mass is distributed over a smaller fraction of available directions.
\end{itemize}

The corresponding quantities for $M_B$ and $M_C$ are defined analogously. 
Together, $\mathsf{Trace}$ captures changes in overall spectral mass, while the scale-invariant $\mathsf{eRank}$ and $\mathsf{Density}$ distinguish genuine geometric reshaping from uniform contraction.

We sequentially fine-tune \texttt{Qwen2.5-0.5B-Instruct} on tasks A, B, and C for 20 epochs\footnote{Twenty epochs exceeds typical SFT schedules; we use it here to amplify the degeneration signal.} each using standard SFT hyperparameters.
The AdamW optimizer state, learning-rate schedule, and warm-up are reinitialized at every task transition, preventing optimizer momentum from carrying information across tasks.
Throughout training, we continuously track the three metrics for each probing data distribution mentioned above.

The results are shown in \cref{fig:app:seq_sft}. 
During training on task $A$, the statistics of $M_A$ move in the direction of stronger and broader task-specific transmission, while those of $M_B$ and $M_C$ decrease.
After switching to task $B$, the same task-dependent redistribution reappears, now favoring $M_B$; training on task $C$ produces an analogous transition.
Thus, continual fine-tuning does not reshape $\vw\vw^\top$ uniformly.
Instead, readout geometry is repeatedly redistributed toward directions emphasized by the current task, while weakly supported task directions lose spectral mass or directional coverage.

This experiment is designed primarily to isolate the predicted geometric redistribution rather than to model a realistic long-horizon continual learner.
Although later tasks already become mildly harder to fit, the induced plasticity loss remains limited.
We therefore next move to a substantially longer training trajectory, where the behavioral consequence of this geometry can accumulate.

\begin{table}[h]
\centering
\footnotesize
\begin{subtable}{\linewidth}
\centering
\begin{tabular}{l cc cc}
\toprule
& \multicolumn{2}{c}{\textbf{Original}}
& \multicolumn{2}{c}{\textbf{Multilingual}} \\
\cmidrule(lr){2-3}\cmidrule(lr){4-5}
\textbf{Task pair}
& \textbf{Intersection} & \textbf{Jaccard}
& \textbf{Intersection} & \textbf{Jaccard} \\
\midrule
GSM8K--MBPP
& 41  & 0.073
& 26  & 0.045 \\

GSM8K--Dolly-QA
& 113 & 0.232
& 18  & 0.031 \\

MBPP--Dolly-QA
& 44  & 0.079
& 39  & 0.070 \\
\midrule
All three
& 27 & --
& 15 & -- \\
\bottomrule
\end{tabular}
\caption{Pairwise overlap between the top-300 token supports of the three sequential SFT tasks.
The multilingual construction substantially reduces overlap between task-specific supports.}
\end{subtable}

\vspace{16pt}

\begin{subtable}{\linewidth}
\centering
\begin{tabular}{
    l
    p{0.18\linewidth}
    p{0.22\linewidth}
    p{0.20\linewidth}
    p{0.22\linewidth}
}
\toprule
\textbf{Setting}
& \textbf{Shared by all}
& \textbf{GSM8K-specific}
& \textbf{MBPP-specific}
& \textbf{Dolly-QA-specific} \\
\midrule

Original
&
\texttt{0, 1, 2, 3, 4, 5, 8} \newline
\texttt{., in, and, is}
&
\texttt{>>, <<, \#\#\#\#, So, \$} \newline
\texttt{Thus, If, apples, dollars}
&
\texttt{def, return, ==, [i} \newline
\texttt{range, \_list, [, arr}
&
\texttt{It, world, University, American} \newline
\texttt{New, city, where, known}
\\[2pt]

Multilingual
&
\texttt{0, 1, 2, 3, 4, 5} \newline
\texttt{., ,, -, (, )}
&
\texttt{>>, <<, \#\#\#\#} \newline
\textit{Chinese words and punctuation}
&
\texttt{return, def, if, [i} \newline
\texttt{range, ==, \_list, [}
&
\texttt{de, la, et, \`a, est, en} \newline
\texttt{des, les, dans, pour}
\\

\bottomrule
\end{tabular}
\caption{Representative tokens from the task-specific support sets before and after transformation.}
\end{subtable}

\vspace{16pt}

\begin{subtable}{\linewidth}
\centering
\begin{tabular}{l l l l}
\toprule
\textbf{Hyperparameter} & \textbf{Value}
& \textbf{Hyperparameter} & \textbf{Value} \\
\midrule
Models
& Qwen2.5-0.5B/1.5B, OLMo-1B, Llama3.2-3B
& Epochs/task
& 20 \\

Task order
& GSM8K $\rightarrow$ MBPP $\rightarrow$ Dolly-QA
& Fine-tuning
& Full-parameter \\

Learning rate
& $1\times10^{-4}$
& Optimizer
& AdamW \\

LR scheduler
& Constant + warmup
& Warmup ratio
& 0.05 \\

Weight decay
& 0.01 
& Effective batch size
& 8 \\

Batch size/device
& 1
& Gradient accumulation
& 8 \\

Max sequence length
& 512
& Precision
& BF16 \\
\bottomrule
\end{tabular}
\caption{Hyperparameters for the sequential SFT experiments.}
\end{subtable}

\vspace{16pt}

\begin{subtable}{\linewidth}
\centering
\begin{tabular}{ll|ll|ll}
\toprule
\textbf{Setting} & \textbf{Value}
& \textbf{Setting} & \textbf{Value}
& \textbf{Setting} & \textbf{Value} \\
\midrule

CPT data
& PubMed
& CPT tokens
& $\sim$100M
& CPT checkpoints
& 0/5/10/20/40/99M \\

CPT LR
& $1\times10^{-4}$
& CPT wd
& 0.1
& CPT schedule
& constant \\

CPT warmup
& 1M tokens
& SFT tasks
& GSM8K/MBPP/Dolly
& SFT epochs
& 5 \\

SFT LR
& $2\times10^{-5}$
& SFT wd
& 0
& SFT schedule
& linear \\

SFT warmup
& 5\%
& Readout reset
& base readout
& Last-4 reset
& readout + last 4 layers \\
\bottomrule
\end{tabular}
\caption{Key hyperparameters for the plasticity experiments (CPT+SFT).}
\end{subtable}
\caption{Additional experimental details for \cref{sec:plasticity}.}
\label{tab:app:plasticity_details}
\end{table}

\begin{figure}[htbp]
    \centering
    \begin{subfigure}[b]{0.85\textwidth}
        \centering
        \includegraphics[width=\textwidth]{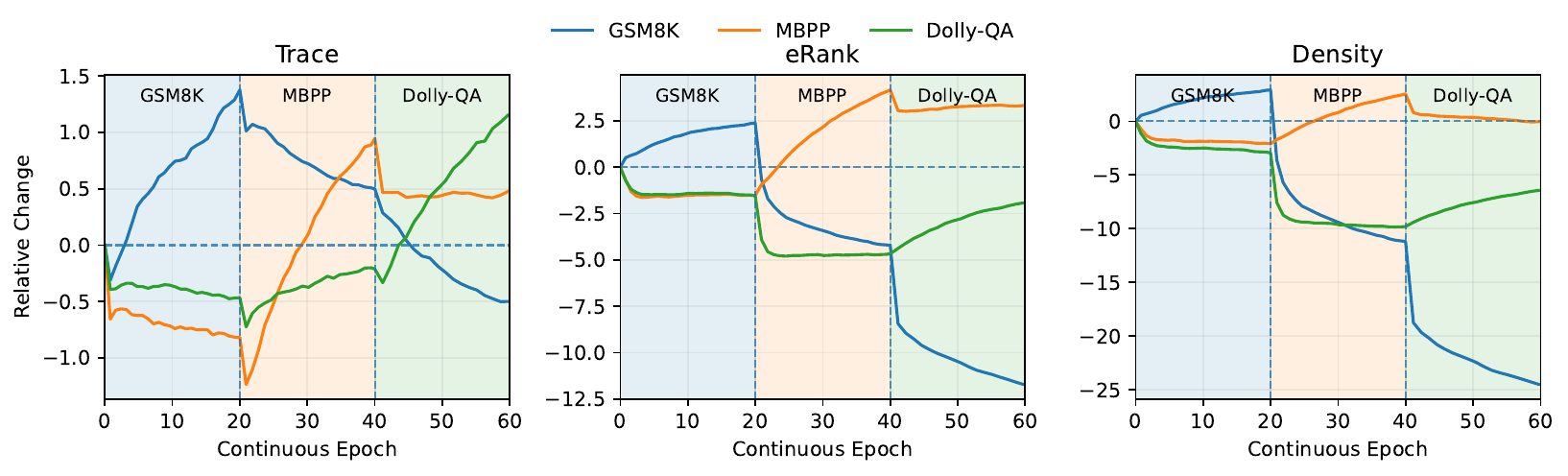} 
        \caption{\texttt{Qwen2.5-0.5B-Instruct}}
    \end{subfigure}
    \vspace{1em} 
    
    \begin{subfigure}[b]{0.85\textwidth}
        \centering
        \includegraphics[width=\textwidth]{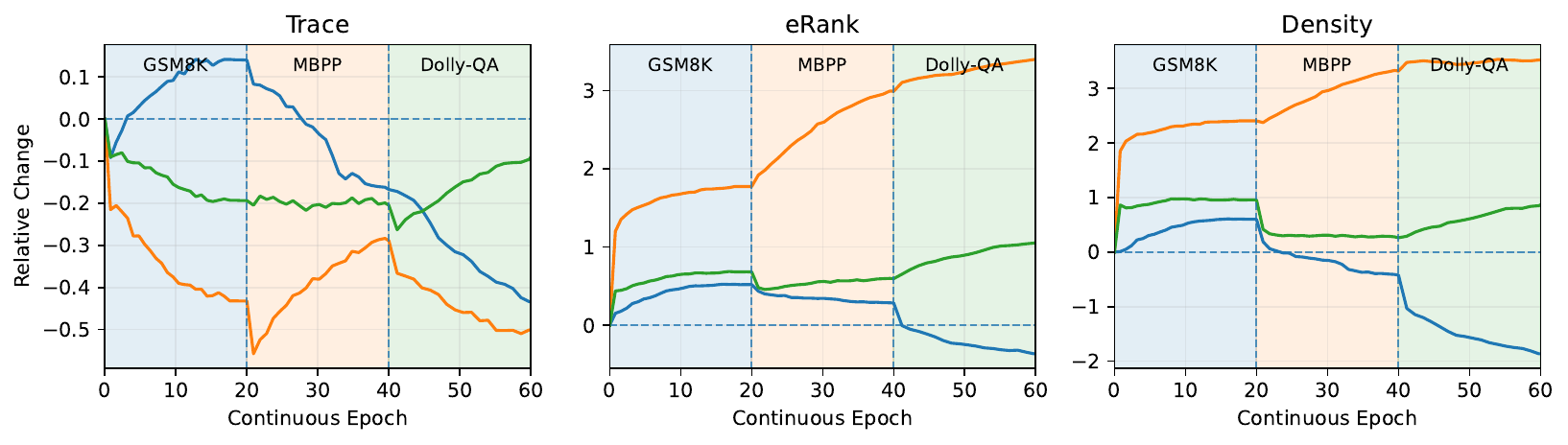} 
        \caption{\texttt{Qwen2.5-1.5B-Instruct}}
    \end{subfigure}

    \begin{subfigure}[b]{0.85\textwidth}
        \centering
        \includegraphics[width=\textwidth]{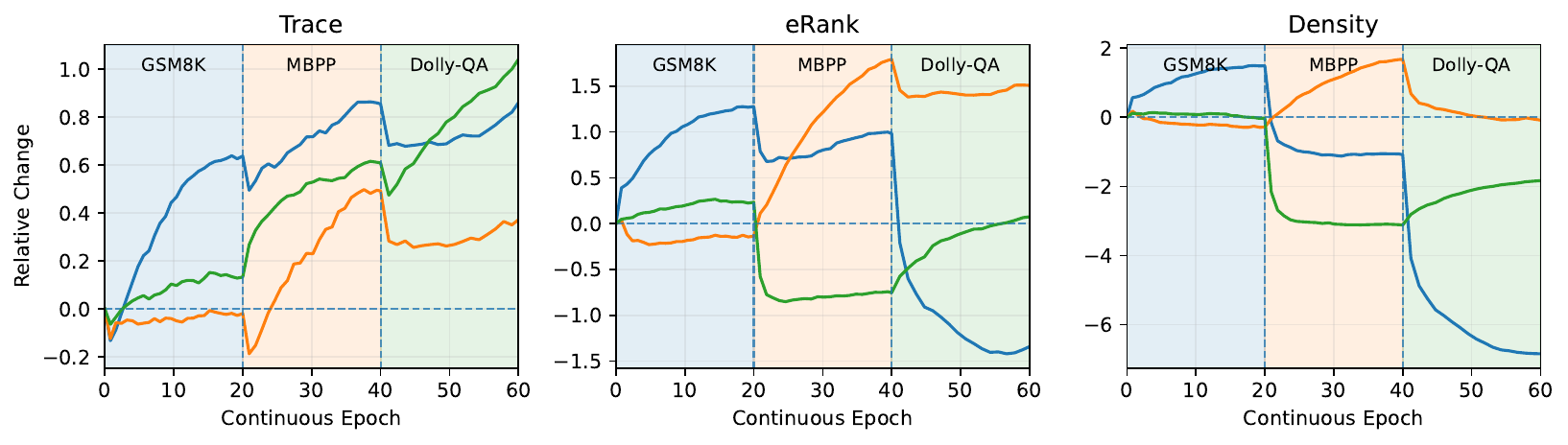} 
        \caption{\texttt{Llama3.2-3B-Instruct}}
    \end{subfigure}

    \caption{Relative evolution of task-specific readout geometry during sequential SFT across models. Each metric is reported relative to its value before training. The model is trained on \texttt{GSM8K}, \texttt{MBPP}, and \texttt{Dolly-QA} during epochs 0-20, 20-40, and 40-60, respectively. Despite model-specific differences, the overall evolution is consistent with task-dependent reshaping of the readout geometry.}
    \label{fig:app:readoutgeo}
\end{figure}

\begin{figure}[h]
    \centering
    \includegraphics[width=0.9\linewidth]{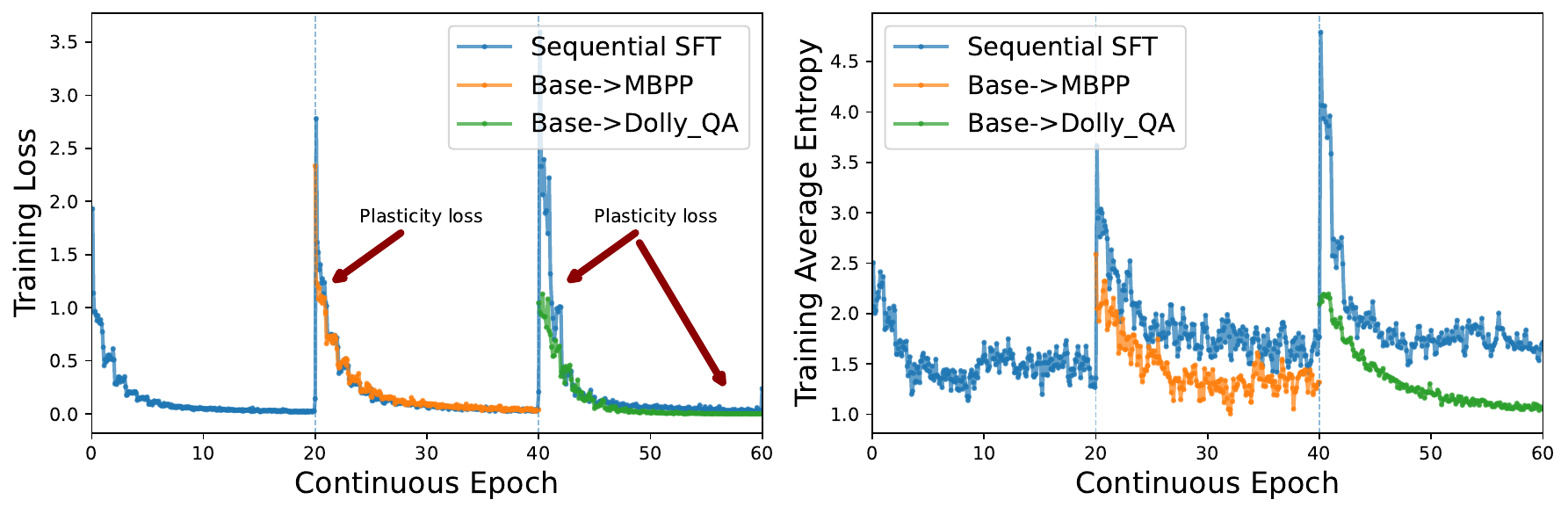}
    \caption{Training dynamics under sequential SFT, same settings as \cref{fig:plastic_theory}-(c). A \texttt{Qwen2.5-0.5B-Instruct} model is trained on \texttt{GSM8K}, \texttt{MBPP}, and \texttt{Dolly-QA} for epochs 0-20, 20-40, and 40-60, respectively. While plasticity loss is weak and difficult to distinguish from the training-loss curves alone, the average token entropy shows a pronounced deviation from training the latter tasks directly from the base model. This suggests that distribution-level learning dynamics can change substantially before the loss-level effect becomes clearly visible.}
    \label{app:fig:plasticity:train_loss}
\end{figure}

\begin{figure}
    \centering
    \includegraphics[width=1\linewidth]{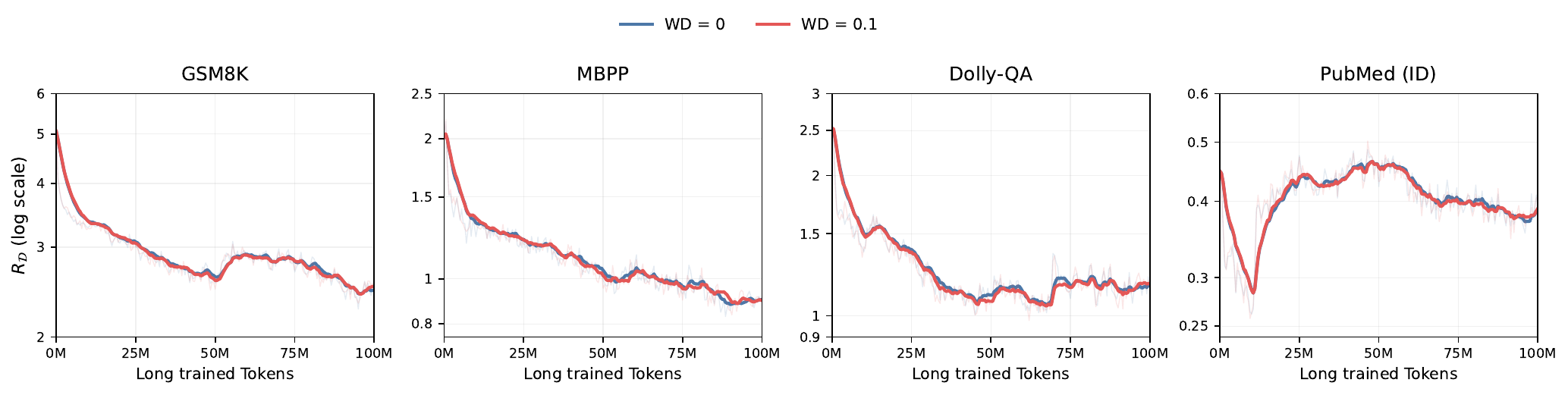}
    \caption{Readout-transmission dynamics are largely insensitive to weight decay.
            We track $R_{\mathcal D}$ during 100M-token long-horizon training on \texttt{PubMed2025} with weight decay $0$ or $0.1$, using \texttt{GSM8K}, \texttt{MBPP}, \texttt{Dolly-QA}, and held-out \texttt{PubMed} as probes.
            The two trajectories are nearly identical across all probing tasks, suggesting that the observed degeneration cannot be explained simply by uniform contraction induced by weight decay.}
    \label{fig:app:cpt_weight_decay}
\end{figure}

\begin{figure}[h]
    \centering
    \includegraphics[width=1\linewidth]{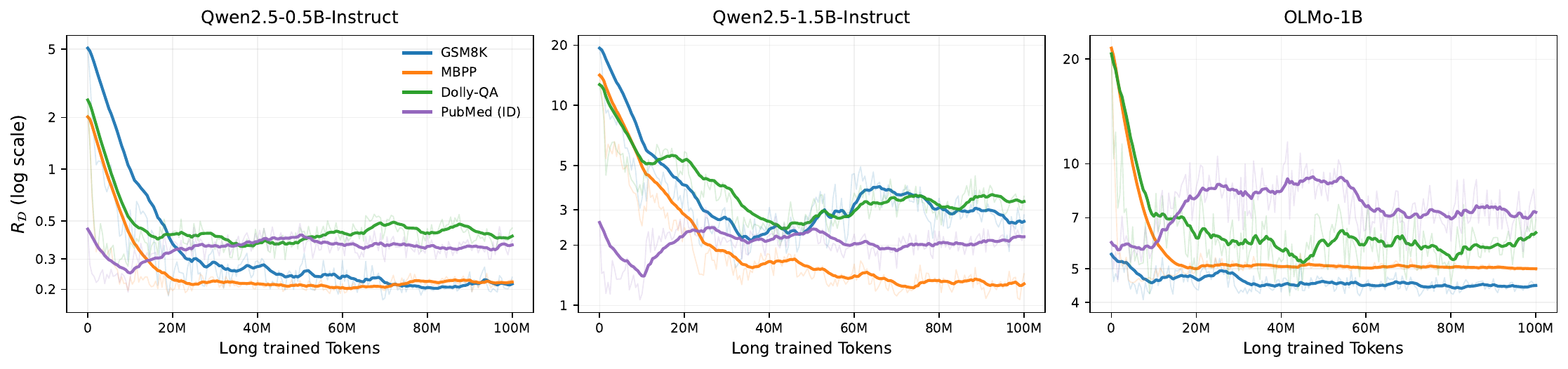}
    \caption{Evolution of $R_\mathcal{D}$ during long-horizon training for different probing tasks. The $R_\mathcal{D}$ generally decreases for the OOD tasks, while the in-distribution \texttt{PubMed} task does not show the same systematic decay. This suggests that long training progressively reshapes the shared readout geometry toward the current task distribution at the expense of gradient transmission for other tasks.}
    \label{fig:cpt_R}
\end{figure}

\begin{figure}[htbp]
    \centering
    \begin{subfigure}[b]{1\textwidth}
        \centering
        \includegraphics[width=\textwidth]{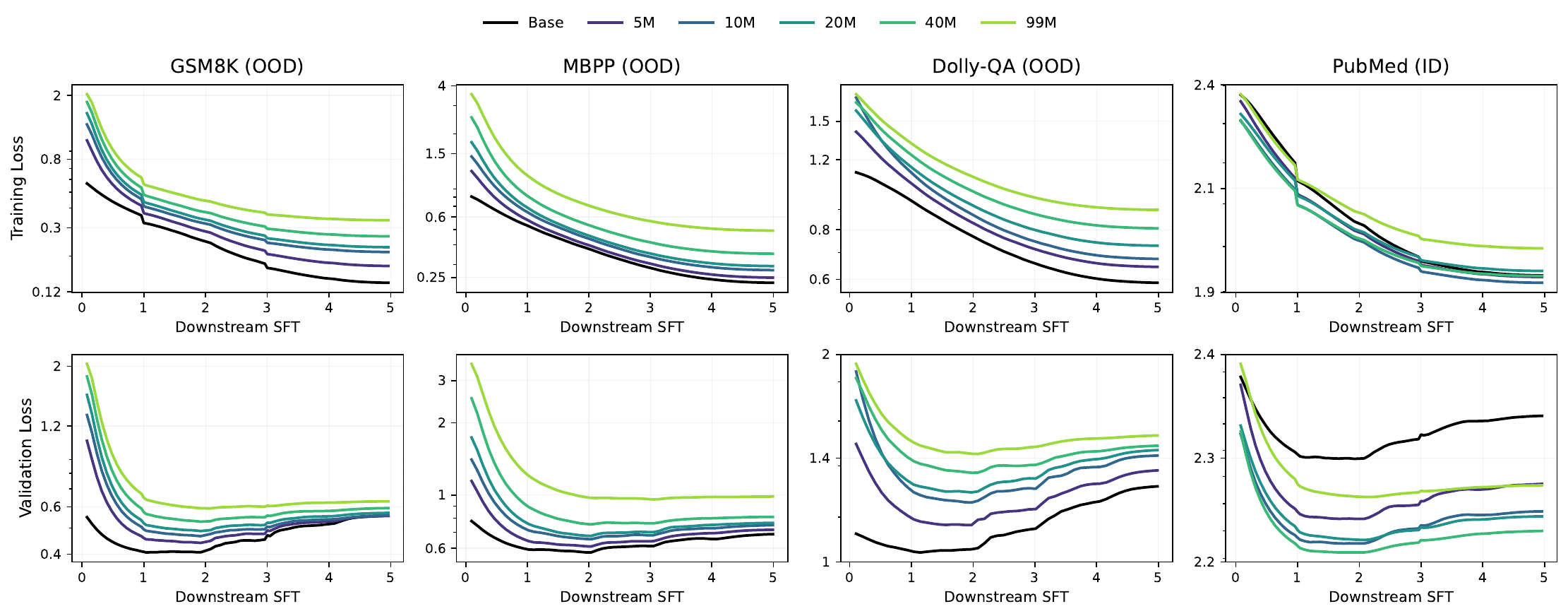} 
        \caption{\texttt{Qwen2.5-0.5B-Instruct}}
    \end{subfigure}
    
    \centering
    \begin{subfigure}[b]{1\textwidth}
        \centering
        \includegraphics[width=\textwidth]{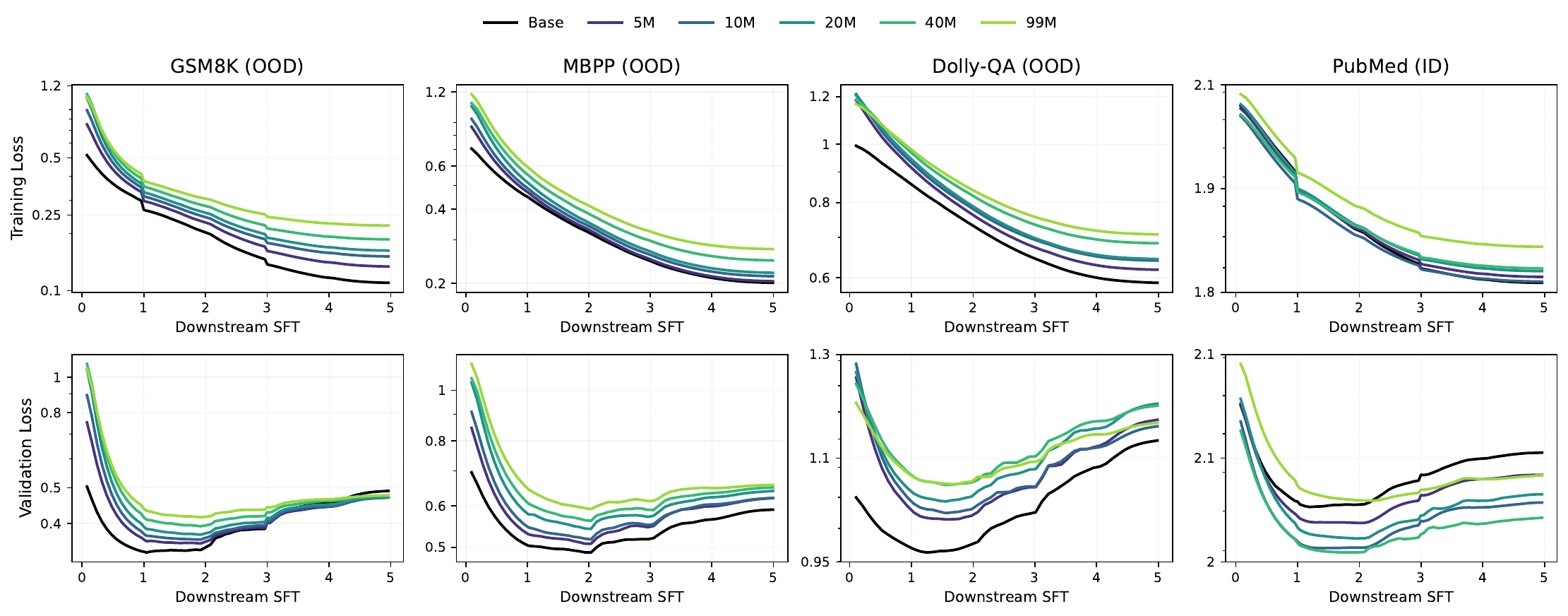} 
        \caption{\texttt{Qwen2.5-1.5B-Instruct}}
    \end{subfigure}

    \begin{subfigure}[b]{1\textwidth}
        \centering
        \includegraphics[width=\textwidth]{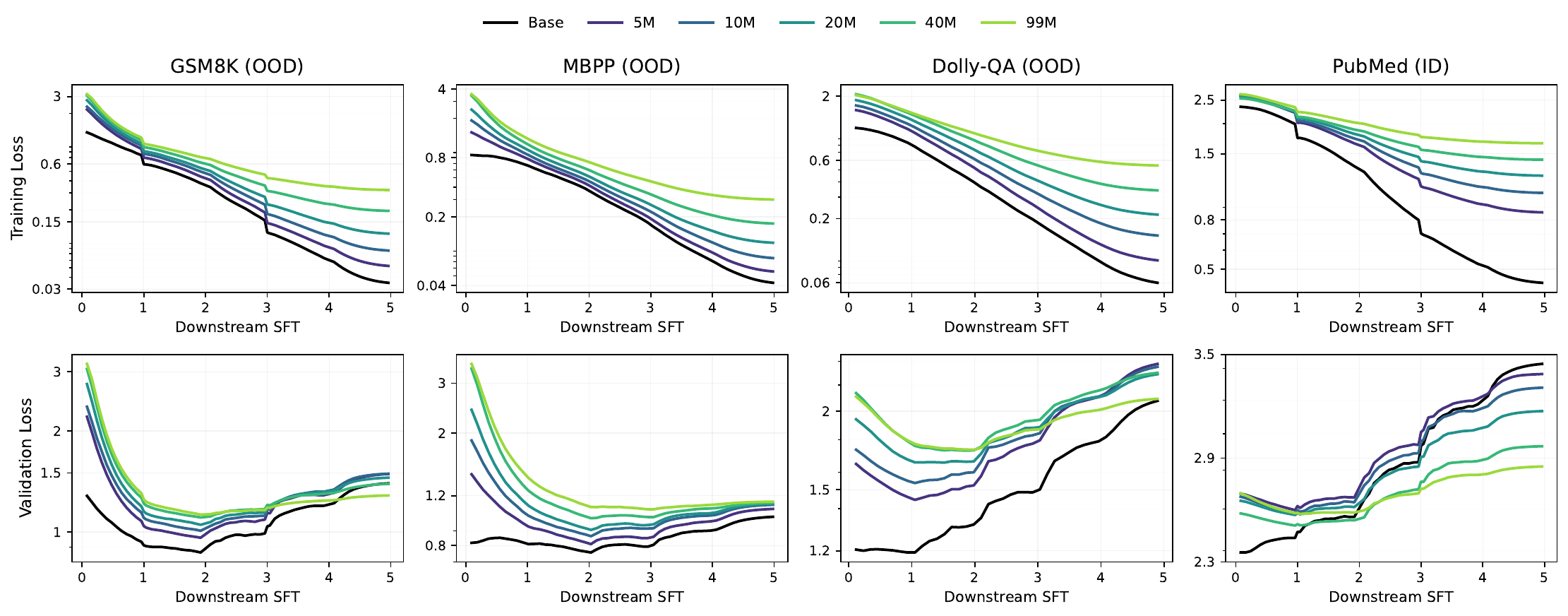} 
        \caption{\texttt{OLMo-1B}}
    \end{subfigure}

    \caption{The training and validation loss during SFT on different models.}
    \label{fig:app:ctp_training_curves}
\end{figure}

\begin{figure}[htbp]
    \centering
    \includegraphics[width=1\linewidth]{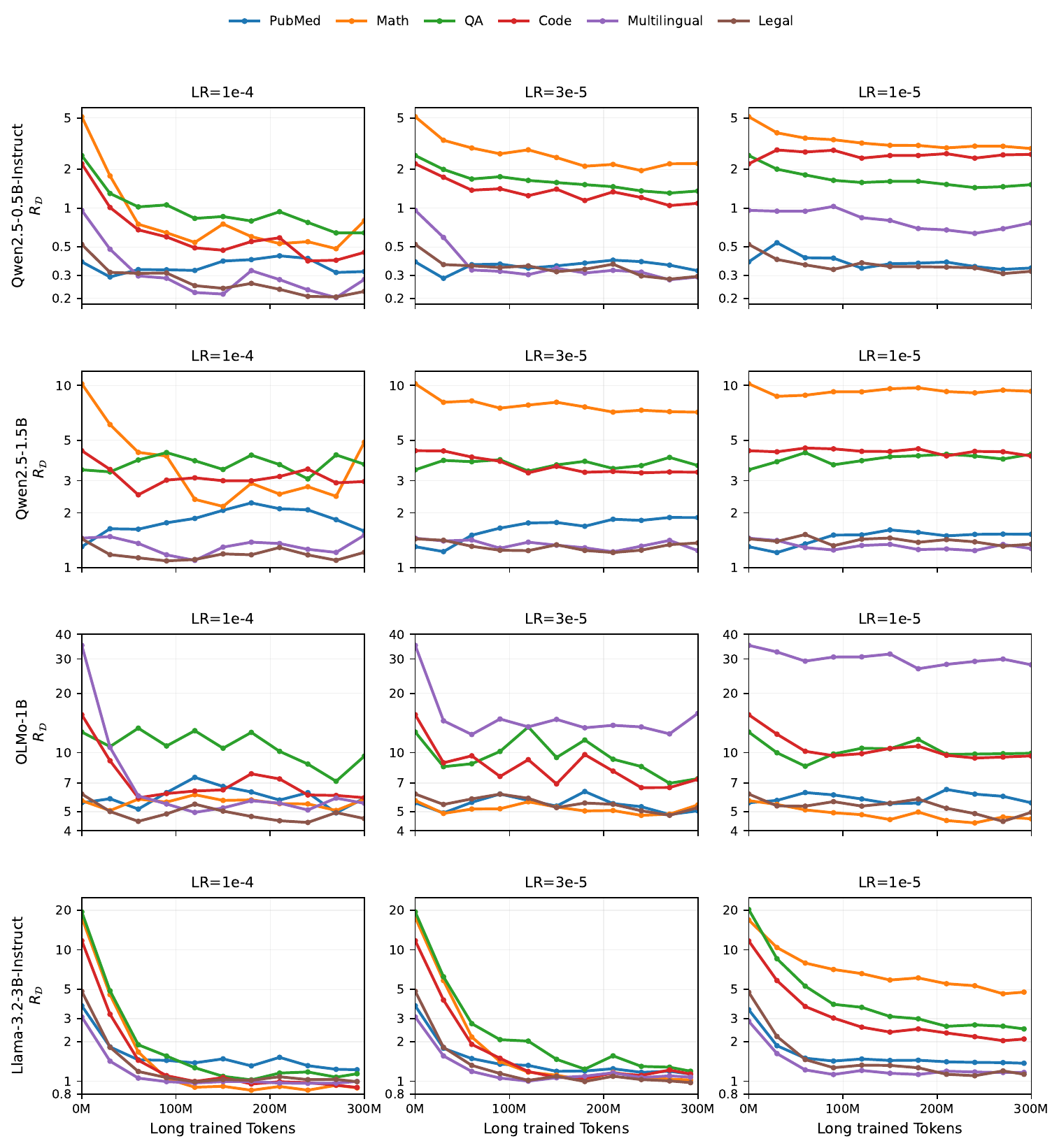}
    \caption{Readout degeneration across models, learning rates, and a broader training mixture.
            We track $R_{\mathcal D}$ during 300M-token long-horizon training on a hybrid corpus consisting of 40\% general web text, 20\% scientific/PubMed text, 15\% mathematics, 15\% code, and 10\% QA/instruction-like data, across four model families and multiple learning rates.
            The long-horizon training data are trained in causal-LM format, whereas the probing datasets use downstream task formats, such as mathematical QA, code generation, and instruction-style responses; thus, semantic domain overlap does not imply exact overlap in the token-level learning signals being probed.
            Task-conditioned transmission generally decreases during training, but the magnitude and timescale of the effect depend strongly on the model and optimization regime, with larger learning rates typically producing faster and stronger changes.
            These results show that readout degeneration persists under a broader and more heterogeneous training distribution, although its strength is highly setting-dependent.}
    \label{fig:app:cpt_R_300}
\end{figure}

\begin{figure}[htbp]
    \centering
    \begin{subfigure}[b]{0.98\textwidth}
        \centering
        \includegraphics[width=\textwidth]{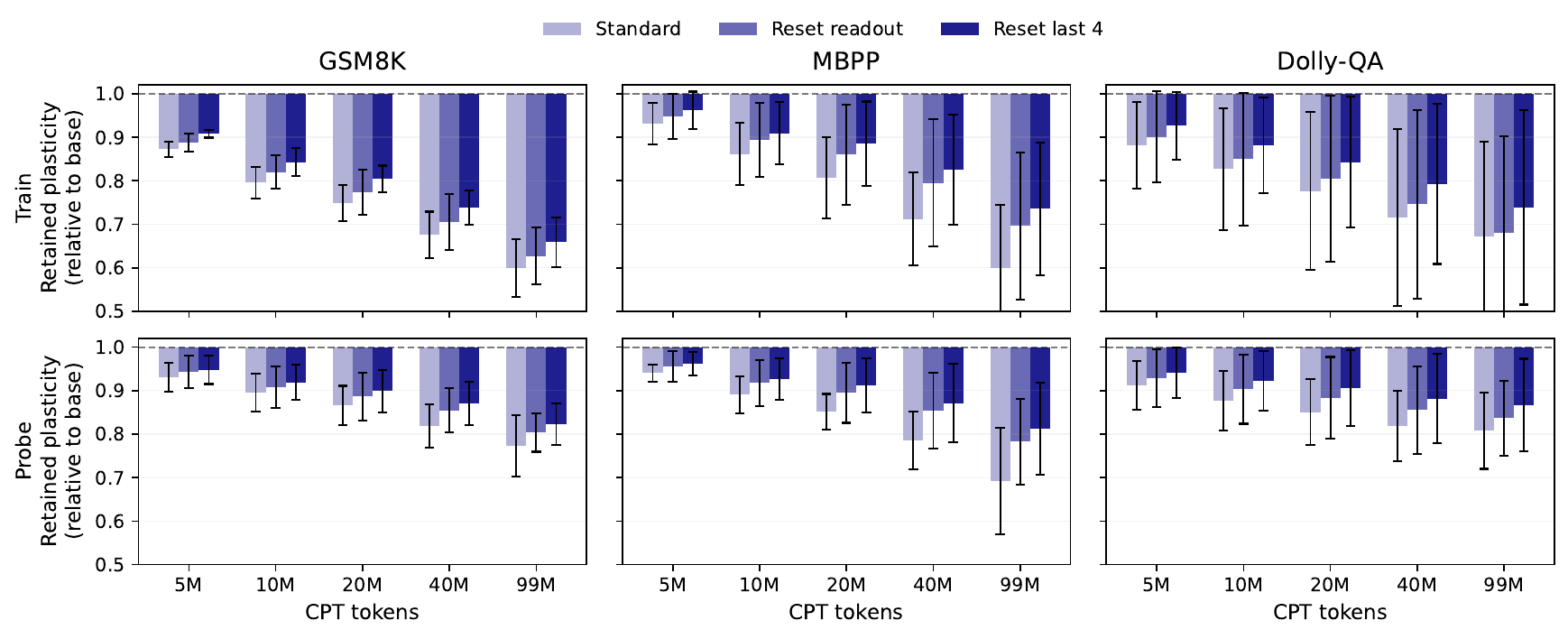} 
    \end{subfigure}
    
    \vspace{1em} 
    
    \begin{subfigure}[b]{0.98\textwidth}
        \centering
        \includegraphics[width=\textwidth]{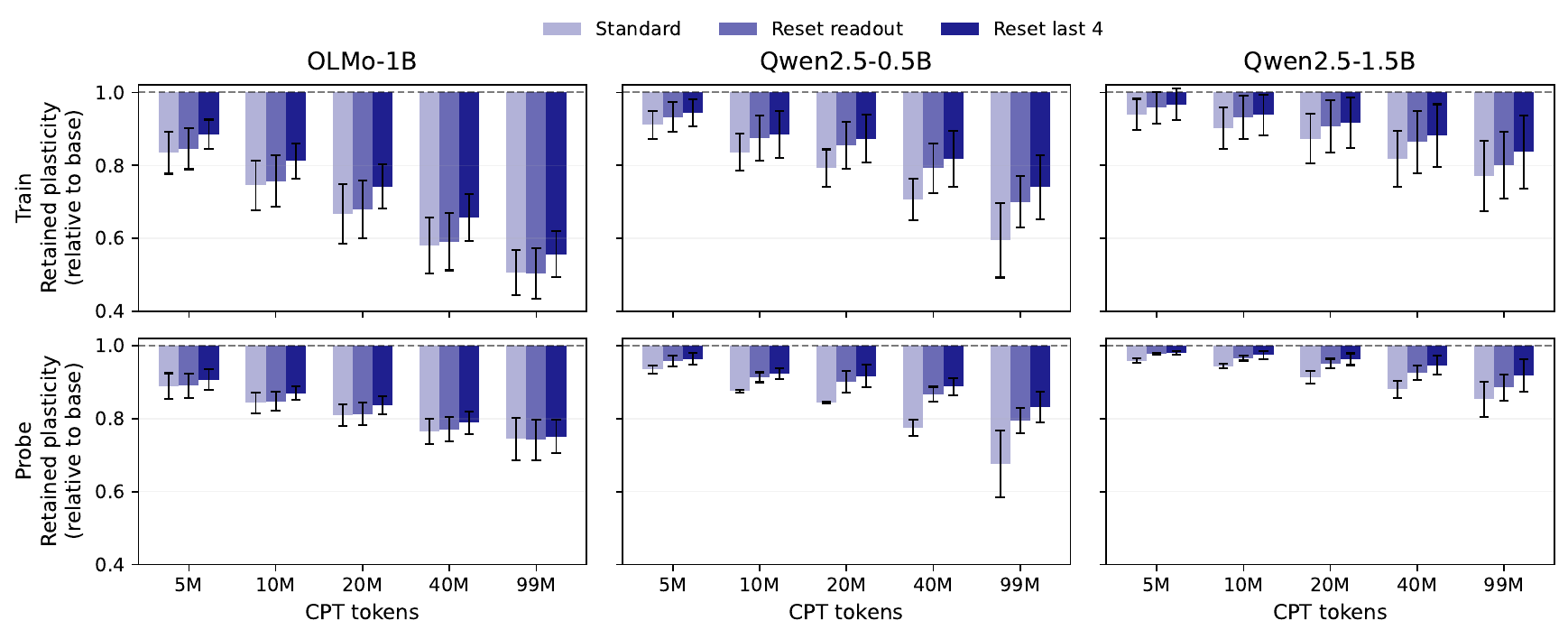} 
    \end{subfigure}

    \begin{subfigure}[b]{0.98\textwidth}
        \centering
        \includegraphics[width=\textwidth]{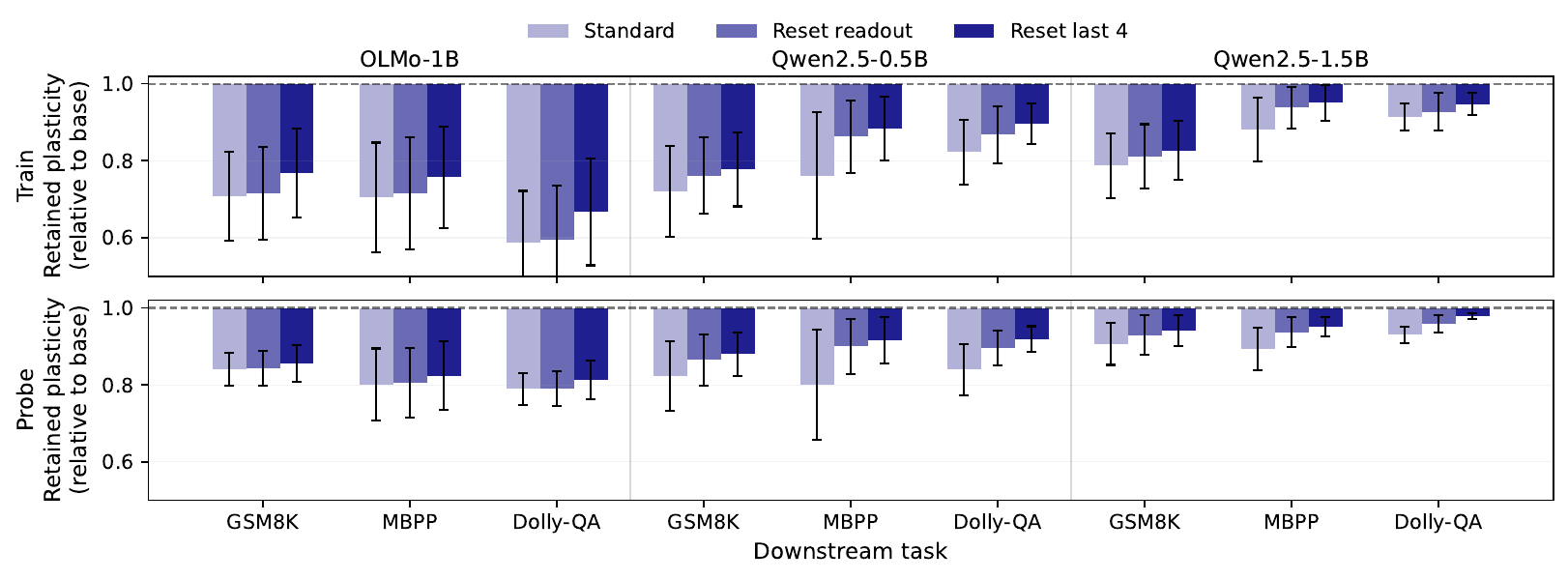} 
    \end{subfigure}
    \caption{Plasticity retention under different reset strategies (reset-last-4 means reset last four transformer blocks together with the readout), viewed under three complementary aggregations. 
    Retained plasticity is measured by downstream loss AUC relative to the corresponding base model (integrated over downstream optimizer steps), such that 1 denotes the pre-training plasticity level; lower values indicate larger plasticity loss. Bars extend downward from this base reference. We compare standard long-training, resetting the readout layer, and resetting the last four layers, with error bars denoting one standard deviation over the aggregated dimension. Top and bottom rows report training and probe-set retention, respectively. (a) Results aggregated over models, shown separately for \texttt{GSM8K}, \texttt{MBPP}, and \texttt{Dolly-QA} across different checkpoints. (b) Results aggregated over downstream tasks, shown separately for each model across different checkpoints. (c) Results aggregated over checkpoints, shown for each model-task pair. Resetting the readout generally recovers part of the lost plasticity, while resetting additional upper layers can provide further recovery; the magnitude of this effect is model- and task-dependent.}
    \label{fig:app:retention_last_layer}
\end{figure}

\end{document}